\newcommand{\R}{\mathbb{R}}
\newcommand{\N}{\mathbb{N}}
\newcommand{\norm}[1]{\left\lVert#1\right\rVert}
\newcommand{\normC}[1]{\left\lVert#1\right\rVert_{C(W;\mathbb{R}^n)}}
\newcommand{\normCr}[1]{\left\lVert#1\right\rVert_{C^r(W;\mathbb{R}^n)}}
\newcommand{\normt}[1]{\left\lVert#1\right\rVert_{C^r([0,T]\times W;\mathbb{R}^n)}}
\newcommand{\normtd}[1]{\left\lVert#1\right\rVert_{C^r([0,T]\times W;\mathbb{R}^{2d})}}
\newcommand{\normCrmax}[1]{\left\lVert#1\right\rVert_{C_{\max}^r(W;\mathbb{R}^n)}}
\newcommand{\normtmax}[1]{\left\lVert#1\right\rVert_{C_{\max}^r([0,T]\times W;\mathbb{R}^n)}}
\newcommand{\strich}{\,\middle\vert\,}
\newcommand{\abs}[1]{\left\lvert#1\right\rvert}
\newcommand{\pq}{\begin{pmatrix}p \\ q \end{pmatrix}}
\DeclareMathOperator*{\diag}{diag}
\DeclareMathOperator*{\dist}{dist}
\newtheorem{lem}{Lemma}
\newtheorem{theorem}{Theorem}
\newtheorem{cor}{Corollary}
\theoremstyle{definition}
\newtheorem{defi}{Definition}
\crefname{defi}{Definition}{Definitions}
\begin{document}
  

\title{Time-adaptive SympNets for separable Hamiltonian systems}
  
\author[$\ast, \dagger$]{Konrad Janik}
\author[$\ast, \ast \ast, \ddagger$]{Peter Benner}
\affil[$\ast$]{Max Planck Institute for Dynamics of Complex Technical Systems, Sandtorstraße 1, 39106 Magdeburg, Germany.%
}%
\affil[$\ast\ast$]{
Otto von Guericke University Magdeburg, Fakultät für Mathematik, Universitätsplatz 2, 39106 Magdeburg, Germany.%
}
 
\affil[$\dagger$]{%
  \email{janik@mpi-magdeburg.mpg.de}, \orcid{0009-0004-9030-0708}%
  }

\affil[$\ddagger$]{%
    \email{benner@mpi-magdeburg.mpg.de}, \orcid{0000-0003-3362-4103}%
    }

\shorttitle{TSympNets}
\shortauthor{K. Janik, P. Benner}
\shortdate{}
  
\keywords{Hamiltonian systems, symplectic integrators, neural networks, time stepping}

\abstract{%
    Measurement data is often sampled irregularly i.e. not on equidistant time grids. This is also true for Hamiltonian systems. However, existing machine learning methods, which learn symplectic integrators, such as SympNets \cite{JinZZetal20} and HénonNets \cite{BurTM21} still require training data generated by fixed step sizes. To learn time-adaptive symplectic integrators, an extension to SympNets, which we call TSympNets, was introduced in \cite{JinZZetal20}. We adapt the architecture of TSympNets and extend them to non-autonomous Hamiltonian systems. So far the approximation qualities of TSympNets were unknown. We close this gap by providing a universal approximation theorem for separable Hamiltonian systems and show that it is not possible to extend it to non-separable Hamiltonian systems. 
    To investigate these theoretical approximation capabilities, we perform different numerical experiments.
    Furthermore we fix a mistake in a proof of a substantial theorem \cite[Theorem 2]{Tur03} for the approximation of symplectic maps in general, but specifically for symplectic machine learning methods.
}


\novelty{
\begin{itemize}
    \item We correct the proof of the symplectic polynomial approximation theorem \cite[Theorem 2]{Tur03}, which shows that every symplectic map can be approximated by Hénon-like maps. This theorem is the basis for symplectic machine learning methods like SympNets \cite{JinZZetal20}, HénonNets \cite{BurTM21} and generalized Hamiltonian neural networks \cite{HorSKetal25}.
    \item We propose new SympNet architectures that are suitable for adaptive time steps and non-autonomous Hamiltonian systems.
    \item We prove universal approximation theorems for those SympNets on separable Hamiltonian systems.
    \item We also discuss the limitations of the old and our new SympNets.
\end{itemize}
}

\maketitle
%

\section{Introduction}
The numerical integration of Hamiltonian systems is a central topic in computational physics and theoretical chemistry. Concrete applications are found in celestial mechanics, molecular design, and many more \cite{LeiR05,HaiLW06,Arn13}. Though the topic has been well researched, in particular since the 1990ies, where many structured integration schemes were developed, novel interest arose with the advent of machine learning and artificial intelligence for scientific computing. This allows in particular to derive compact Hamiltonian models from data --- be it experimental or computed via simulation using a legacy code --- via learning the Hamiltonian \cite{GreDY19,BerDMetal19} or the Lagrangian in the position-velocity space \cite{CranGHetal20}. There are many extensions to these Hamiltonian neural networks (HNNs) \cite{CheZAetal20, DavM23, HorSKetal25, XioTHetal21, YilGBetal23, HanGH21}. While the original approach still required derivative data, the neural ODE method \cite{CheRBetal18} was quickly implemented with symplectic integrators to preserve the structure of the Hamiltonian system \cite{DavM23, CheZAetal20}. Because of the implicit nature of symplectic integration schemes \cite{HaiLW06} the fast evaluation of the learned models was limited to separable Hamiltonian systems $H(p,q)=K(p)+V(q)$. To tackle this problem, phase space extensions were introduced in \cite{Tao16, XioTHetal21}. Additionally, an approach on how to handle parametric Hamiltonian systems was given in \cite{HanGH21}. Another way of learning Hamiltonian systems is the identification of proper generating functions with neural networks \cite{CheT21}.
Yet another approach consists of learning a suitable symplectic integrator for a Hamiltonian systems using deep learning, namely SympNets \cite{JinZZetal20}, HénonNets \cite{BurTM21} and generalized Hamiltonian neural networks \cite{HorSKetal25}. SympNets are designed as a composition of different shea-maps see e.g. \Cref{subsec:td_SympNet} or \cite{JinZZetal20}), while HénonNets are compositions of Hénon-like maps. Both of these architectures fall into the framework of generalized HNNs as compositions of symplectic integrators evaluated for different separable Hamiltonians in each layer. All of these methods are universal approximators of symplectic maps and use the symplectic polynomial approximation theorem \cite[Theorem 2]{Tur03} to prove that. Unfortunately, there is a mistake in the proof of \cite[Theorem 2]{Tur03}, which we will explain and fix in \Cref{sec:corr}. So the universal approximation capabilities of the above methods still hold true. One of the issues with SympNets and HénonNets is that they require fixed step sizes. This strongly restricts the training data thatFor SympNets a time-adaptive extension was proposed in \cite{JinZZetal20} but not analized extensively. The aim of this paper is to close the gap between explicit symplectic integrators based on arbitrary time grids and SympNets with fixed time grid demand, as simulation data could come from such time-adaptive symplectic integration schemes, but could not used in that form for training SympNets. Thus, we develop time-adaptive SympNets and discuss their theoretical properties as universal approximators. Additionally, we extend these time-adaptive to non-autonomous Hamiltonian systems, where the Hamiltonian $H(p,q,t)$ explicitely depends on time. Since SympNets are not only used for learning Hamiltonian systems, but also for volume preserving dynamics \cite{Baj23} and Poisson systems \cite{JinZKetal22}, our results may contribute in those applications as well.\\
The rest of the paper is structured as follows. We will make some necessary theoretical preparations in \Cref{sec:prep}. In particular we will introduce some notation in \Cref{subsec:notation} and recall a generalized version of the Faá-di-Bruno Formula in  \Cref{subsec:faa}. In \Cref{sec:corr} we state the symplectic polynomial approximation theorem \cite[Theorem 2]{Tur03} and correct a mistake in the proof, by showing \Cref{the:comp}, which is a nice result about estimating the difference between compositions of functions in the $C^r$-norm, in general. In \Cref{sec:app} we will quickly recap SympNets and proof our main result \Cref{the:g-sympnet}, a universal approximation theorem for time-adaptive SympNets.  In \Cref{subsec:lim,subsec:lim_og_LA} we discuss the limitation of the current version of time-adaptive SympNets and show why they can only be applied to separable Hamiltonian systems. In \Cref{sec:exp} we show that our theoretical findings also have practical implications by testing the previously discussed methods on three different examples, i.e., the mathematical pendulum in \Cref{subsec:pend}, a linear but non-separabel example in \Cref{subsec:lin} and the harmonic oscillator with an external force as an example of a non-autonomous Hamiltonian system in \Cref{subsec:ho}. Lastly, we sum up our results in \Cref{sec:conc} and discuss future research possibilities.

\section{Preliminaries}
In this section we will establish some results, which will be useful to prove our theoretical results. We begin with some notation, we will use through out this work.
\label{sec:prep}
\subsection{Notation}
\label{subsec:notation}
Let $d \in \N$. We start by introducing some notation for multi-indices $\alpha \in \N_0^d$. We define the absolute value and the factorial of a multi-index $\alpha$ as follows
\begin{align*}
	\abs{\alpha}&:= \sum_{i=1}^{d} \alpha_i, \qquad
	\alpha ! := \prod_{i=1}^{d} (\alpha_i !).
\end{align*}
Furthermore, a vector $v \in \R^d$ to the power of $\alpha$ is given by
\begin{align*}
	v^\alpha:=\prod_{i=1}^{d}v_i^{\alpha_i}
\end{align*} 
Additionally, we define the order $\prec$ on $\N_0^d$. Let $\alpha, \beta \in \N_0^d$, we write $\alpha \prec \beta$ if and only if one of the following conditions holds
\begin{enumerate}
	\item $\abs{\alpha} < \abs{\beta}$,
	\item $\abs{\alpha}=\abs{\beta} \text{ and } \alpha_1 <\beta_1$,
	\item $\abs{\alpha}=\abs{\beta}, \alpha_1=\beta_1,...,\alpha_k=\beta_k \text{ and }\alpha_{k+1}<\beta_{k+1} \text{ for some }1\leq k <d$
\end{enumerate}
Let $n \in \N, r \in \N_0, W \subseteq \R^d$ be at compact set. We define the norm on $C^r(W;\R^n)$ by
\begin{align*}
	\normCr{f}:= \sum_{\abs{\alpha}\leq r} \max_{1 \leq i \leq n} \sup_{x \in W}\abs{D^\alpha f^{(i)}(x)},
\end{align*}
with
\begin{align*}
	D^\alpha f := \frac{\partial^{\abs{\alpha}}f}{\partial x_1^{\alpha_1} \dotsm \partial x_d^{\alpha_d}}
\end{align*}
and $f=(f^{(1)},...,f^{(n)})^T$. As it is usually done, we just write $C^r(W)$ for $n=1$. For technical reasons we will also need a slightly different norm on $C^r(W;\R^n)$ given by
\begin{align*}
	\normCrmax{f}:=\max_{\abs{\alpha}\leq r}\max_{1 \leq i \leq n} \sup_{x \in W}\abs{D^\alpha f^{(i)}(x)}.
\end{align*}
When we are looking at time dependent functions $f:[0,T]\times W \to \R^n$ with $T>0$, we will be using the standard $C^r$ norm given by
\begin{align*}
	\normt{f}:=\sum_{\abs{\alpha}\leq r} \max_{1\leq i \leq n} \sup_{(t,x) \in [0,T]\times W} \abs{D^\alpha f^{(i)}(t,x)}.
\end{align*}
Note that in this case $D^\alpha f^{(i)}$ also includes time-derivatives.\\
Let $\mathcal{I}, \mathcal{J}$ be two finite index sets and $M_{i,j} \subseteq M$ for $i \in \mathcal{I}, j \in \mathcal{J}$, where $M$ is an arbitrary set. It will be useful to define the combination of two Cartesian products as follows
\begin{align*}
	\left[\bigtimes_{i \in \mathcal{I}} \bigtimes_{j \in \mathcal{J}}\right]M_{i,j}:=M_{i_1,j_1}\times... \times M_{i_1,j_{\abs{\mathcal{J}}}}\times...\times M_{i_{\abs{\mathcal{I}}},j_1}\times ... \times M_{i_{\abs{\mathcal{I}}},j_{\abs{\mathcal{J}}}}\subseteq M^{\abs{\mathcal{I}}\cdot\abs{\mathcal{J}}}
\end{align*}
Furthermore we denote the Elements of $\left[\bigtimes_{i \in \mathcal{I}} \bigtimes_{j \in \mathcal{J}}\right]M_{i,j}$ as $[a_{i,j}]_{i\in \mathcal{I},j\in \mathcal{J}}$, where $a_{i,j} \in M_{i,j}$ for $i \in \mathcal{I}, j \in \mathcal{J}$. So if we consider a function $f:\R^{\abs{\mathcal{I}}\cdot\abs{\mathcal{J}}} \to \R$ and a point $x=(x_{i_1,j_1},...,x_{i_{\abs{\mathcal{I}}},j_{\abs{\mathcal{J}}}})^T \in \R^{\abs{\mathcal{I}}\cdot\abs{\mathcal{J}}}$, we write
\begin{align*}
	f\left(\left[x_{i,j}\right]_{i \in \mathcal{I},j \in \mathcal{J}}\right):=f(x)
\end{align*}
for the evaluation of $f$ at $x$. When it is obvious from the context what $\mathcal{I}$ and $\mathcal{J}$ are, we will just write $[x_{i,j}]_{i,j}$ to make the notation more readable.\\
Furthermore, we want to denote closed balls around a set $M\subseteq \R^n$ with radius $R$ by
\begin{align*}
	\overline{B_R(M)}:=\left\{x \in \R^n \vert \exists y \in M: \norm{x-y}_\infty \leq R\right\}.
\end{align*}
The next notation we want to introduce is that of matrix-like functions, which we will use to define SympNets.
As in \cite{JinZZetal20}, we will denote them with
\begin{align*}
	\begin{bmatrix}
		f_1 & f_2 \\
		f_3 & f_4
	\end{bmatrix}: \R^{2d} \to \R^{2d}, \qquad
	\begin{bmatrix}
		f_1 & f_2 \\
		f_3 & f_4
	\end{bmatrix}
	\begin{pmatrix}
		p\\q
	\end{pmatrix}:=
	\begin{pmatrix}
		f_1(p)+f_2(q)\\
		f_3(p)+f_4(q)
	\end{pmatrix},
\end{align*}
where $f_i:\R^{d} \to \R^d,~i=1,...,4$.\\
Lastly, we write the set of neural networks with one hidden layer and activation function $\sigma$ as
\begin{align*}
	\Sigma_d(\sigma)=\left\{f: \R^n \mapsto \R \strich f(x)=a^T \sigma(K x +b), a,b \in \R^m, K \in \R^{m\times n}, m \in \N\right\}
\end{align*}
like in \cite{HorSW90}.
\subsection{Generalized Faá-di-Bruno Formula}
We will need the generalized Faá-di-Bruno formula \cite{ConS96} a lot in \Cref{sec:corr}. Therefore, we state it here and derive some consequences. 
\label{subsec:faa}
\begin{theorem}[Faá-di-Bruno Formula \cite{ConS96}]
	\label{the:faa}
	Let $r, n_1, n_2 \in \N$,~$U \subseteq \R^{n_1}$ open, $g\in C^r(U;\R^{n_2})$ and $f \in C^r(g(U))$. For all $\alpha \in \N_0^{n_1}, 1\leq\abs{\alpha}\leq n$ and $x \in U$ we have
	\begin{align*}
		D^\alpha(f \circ g)(x)=\sum_{\abs{\beta}=1}^{\abs{\alpha}}
		\left(D^\beta f\right)(g(x))
		\sum_{s=1}^{\abs{\alpha}} \sum_{p_s(\alpha,\beta)} (\alpha !)
		\prod_{j=1}^{s} \frac{[D^{l_j}g(x)]^{k_j}}{(k_j!)(l_j!)^{\abs{k_j}}}
	\end{align*}
	with
	\begin{align*}
		p_s(\alpha, \beta)&=\left.\Bigg\{(k_1,...,k_s;l_1,...,l_s)  \strich k_i \in \N_0^{n_2}, \abs{k_i}>0, l_i \in \N_0^{n_1},\abs{l_i }>0\right.\\
		&\left. \qquad \qquad \qquad\qquad\qquad, l_1 \prec ... \prec l_s, \sum_{i=1}^{s}k_i=\beta, \sum_{i=1}^{s}\abs{k_i}l_i=\alpha\right\}.
	\end{align*}
\end{theorem}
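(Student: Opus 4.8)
The plan is to turn the analytic identity into a purely combinatorial coefficient comparison by extracting Taylor coefficients. Since $f$ and $g$ are $C^r$, their composition $f\circ g$ is $C^r$, and for any multi-index $\alpha$ with $1\le\abs{\alpha}\le r$ the number $D^\alpha(f\circ g)(x)/\alpha!$ is exactly the coefficient of $h^\alpha$ in the order-$r$ Taylor expansion of $h\mapsto f(g(x+h))$ about $h=0$. So it suffices to compute that single coefficient and match it against the claimed expression.

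First I would expand $g$ componentwise, $g_p(x+h)=g_p(x)+\Delta_p(h)$ with $\Delta_p(h)=\sum_{1\le\abs{l}\le r}\frac{D^l g_p(x)}{l!}h^l+o(\abs{h}^r)$, and expand $f$ about $g(x)$, namely $f(g(x)+\Delta)=\sum_{\abs{\beta}\le r}\frac{D^\beta f(g(x))}{\beta!}\Delta^\beta+o(\abs{\Delta}^r)$. Substituting, and using $\Delta=O(\abs{h})$, the remainder terms are $o(\abs{h}^r)$ and therefore contribute nothing to the coefficient of any $h^\alpha$ with $\abs{\alpha}\le r$. Justifying this reduction — that composing the finite Taylor polynomials and discarding the $o(\abs{h}^r)$ terms genuinely reproduces the true Taylor coefficients of the merely $C^r$ map $f\circ g$ — is the one place where honest analytic care is needed, and I expect it to be the main (if standard) obstacle; the rest is bookkeeping.

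The combinatorics then proceeds as follows. In $\Delta^\beta=\prod_p\Delta_p(h)^{\beta_p}$ I would regard the $\abs{\beta}$ factors as labeled slots, $\beta_p$ of them attached to component $p$, each slot choosing a multi-index $l$ with $\abs{l}\ge1$ and contributing $\frac{D^l g_p(x)}{l!}h^l$. Collecting terms by the distinct values $l_1\prec\cdots\prec l_s$ that occur, and recording in $k_j\in\N_0^{n_2}$ how many slots of each component chose $l_j$, the two bookkeeping constraints become $\sum_{j}k_j=\beta$ and $\sum_{j}\abs{k_j}l_j=\alpha$, which are precisely the defining conditions of $p_s(\alpha,\beta)$. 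The number of slot assignments yielding a fixed $(k_1,\dots,k_s;l_1,\dots,l_s)$ is the product of multinomials $\prod_p\binom{\beta_p}{(k_1)_p,\dots,(k_s)_p}=\beta!/\prod_j k_j!$, and each assignment carries the factor $\prod_j[D^{l_j}g(x)]^{k_j}/(l_j!)^{\abs{k_j}}$, using $\sum_p(k_j)_p=\abs{k_j}$.

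Assembling these, the coefficient of $h^\alpha$ in $\Delta^\beta$ is $\sum_{s}\sum_{p_s(\alpha,\beta)}\frac{\beta!}{\prod_j k_j!}\prod_j\frac{[D^{l_j}g(x)]^{k_j}}{(l_j!)^{\abs{k_j}}}$. Multiplying by $\alpha!\,D^\beta f(g(x))/\beta!$ and summing over $\beta$, the two occurrences of $\beta!$ cancel, leaving exactly the stated coefficient $\alpha!\prod_j\frac{[D^{l_j}g(x)]^{k_j}}{(k_j!)(l_j!)^{\abs{k_j}}}$. Finally the range $1\le\abs{\beta}\le\abs{\alpha}$ is forced automatically: the term $\beta=0$ feeds only $h^0$, while $\abs{\beta}=\sum_j\abs{k_j}\le\sum_j\abs{k_j}\,\abs{l_j}=\abs{\alpha}$ since every $\abs{l_j}\ge1$. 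An alternative route is a direct induction on $\abs{\alpha}$, differentiating both sides by $\partial_i$, but that makes the coefficient-matching far more painful than the slot-counting argument above, so I would favor the Taylor-coefficient approach.
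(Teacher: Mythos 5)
You cannot be compared against the paper here, because the paper does not prove this theorem at all: it is quoted as a known result from Constantine--Savits \cite{ConS96}, with the proof left entirely to that reference. Judged on its own merits, your argument is correct, and it takes the classical generating-function route (extract and match Taylor coefficients of $h\mapsto f(g(x+h))$) rather than the laborious induction on $\abs{\alpha}$ used in the cited reference --- the alternative you mention and rightly discard. Your combinatorial core is exactly right: the slot count $\beta!/\prod_j k_j!$, the per-assignment factor $\prod_j[D^{l_j}g(x)]^{k_j}/(l_j!)^{\abs{k_j}}$, the cancellation of $\beta!$ against $\alpha!/\beta!$, the requirement $\abs{k_j}>0$ because each $l_j$ must occur in at least one slot, and the bound $\abs{\beta}=\sum_j\abs{k_j}\le\sum_j\abs{k_j}\abs{l_j}=\abs{\alpha}$. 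To make this a complete proof, the analytic step you flag does need to be written out, and it goes through: each $\Delta_p(h)$ is $O(\abs{h})$ with Peano remainder $o(\abs{h}^r)$, so every term of $\Delta^\beta$ containing at least one remainder factor is $O(\abs{h}^{\abs{\beta}-1})\cdot o(\abs{h}^r)=o(\abs{h}^r)$, and $o(\abs{\Delta}^r)=o(\abs{h}^r)$ since $\abs{\Delta}\le C\abs{h}$; hence $f(g(x+h))$ equals a polynomial of degree at most $r$ in $h$ plus $o(\abs{h}^r)$. Two further ingredients are implicit in your first paragraph and should be named: (i) that $f\circ g$ is $r$ times differentiable at $x$ --- the qualitative chain-rule fact, proved by induction on $r$ without any explicit formula, so there is no circularity in invoking it before deriving the formula; and (ii) uniqueness of the degree-$\le r$ polynomial approximating a function to order $o(\abs{h}^r)$, which is what licenses reading off $D^\alpha(f\circ g)(x)/\alpha!$ from that polynomial. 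Finally, truncating the expansion of $g$ at degree $r$ loses nothing, since the constraint $\sum_j\abs{k_j}l_j=\alpha$ forces $\abs{l_j}\le\abs{\alpha}\le r$; this is also why the $l_j$ appearing in $p_s(\alpha,\beta)$ are automatically so confined. (Incidentally, the statement as printed contains a typo you silently corrected: the condition should read $1\le\abs{\alpha}\le r$, not $1\le\abs{\alpha}\le n$, as no $n$ is declared in the hypotheses.)
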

A less explicit (but easier to write and for most of our purposes sufficient) way of stating \Cref{the:faa} is the following corollary. 
\begin{cor}
	\label{the:faacor}
	Let $r, n \in \N$,~$U \subseteq \R^{n}$ open, $g\in C^r(U;\R^{n})$ and $f \in C^r(g(U))$. For all $\alpha \in \N_0^{n}, \abs{\alpha}\leq r$ there are polynomials $P_{\alpha, \beta, n}$ only depending on $\alpha, \beta$ and $n$ such that
	\begin{align}
		\label{eq:cor}
		D^\alpha(f \circ g)(x)= \sum_{\abs{\beta}=0}^{\abs{\alpha}} \left(D^\beta f\right)(g(x)) P_{\alpha, \beta, n}\left(\left[D^\gamma g^{(i)}(x)\right]_{\gamma \in \N_0^r, i\in \{1,...,n\}}\right)
	\end{align}
	for all $x \in U$.
\end{cor}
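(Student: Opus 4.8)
The plan is to obtain \eqref{eq:cor} directly from the explicit Faà-di-Bruno formula of \Cref{the:faa} (specialised to $n_1=n_2=n$) by grouping together all terms that carry the same factor $(D^\beta f)(g(x))$ and then recognising the accumulated coefficient as a polynomial of the required type. Concretely, for each $\beta$ with $1\le|\beta|\le|\alpha|$ I would define
\begin{align*}
  P_{\alpha,\beta,n}\left(\left[D^\gamma g^{(i)}(x)\right]_{\gamma,i}\right):=\sum_{s=1}^{|\alpha|}\sum_{p_s(\alpha,\beta)}(\alpha!)\prod_{j=1}^{s}\frac{[D^{l_j}g(x)]^{k_j}}{(k_j!)(l_j!)^{|k_j|}},
\end{align*}
where $[D^{l_j}g(x)]^{k_j}=\prod_{i=1}^{n}\bigl(D^{l_j}g^{(i)}(x)\bigr)^{(k_j)_i}$ is the multi-index power introduced in \Cref{subsec:notation}. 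With this choice the $\beta$-summand of \eqref{eq:cor} is by construction identical to the $\beta$-summand of \Cref{the:faa}, so the identity holds; what remains is to check that the objects I have written down really are polynomials with the asserted dependencies.

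For that I would verify three points. First, $P_{\alpha,\beta,n}$ is a genuine polynomial in the variables $D^\gamma g^{(i)}(x)$: both index sets are finite (here $s\le|\alpha|$ and $p_s(\alpha,\beta)$ is a finite set of tuples of multi-indices), so the expression is a finite linear combination of monomials $\prod_{j=1}^{s}\prod_{i=1}^{n}\bigl(D^{l_j}g^{(i)}(x)\bigr)^{(k_j)_i}$ with nonnegative integer exponents $(k_j)_i$. Second, the coefficients $\alpha!/\prod_{j}(k_j!)(l_j!)^{|k_j|}$ and the index set $p_s(\alpha,\beta)$ depend only on $\alpha$, $\beta$ and the dimension $n$ --- the latter because the admissible $k_i,l_i$ live in $\N_0^{n}$ --- and not on $f$, $g$ or $x$; hence the polynomial $P_{\alpha,\beta,n}$ depends only on $\alpha,\beta,n$, as claimed. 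Third, every derivative order $l_j$ that occurs satisfies $|l_j|\le r$, so that all arguments of $P_{\alpha,\beta,n}$ are among the finitely many values $D^\gamma g^{(i)}(x)$ with $|\gamma|\le r$: from the constraint $\sum_{i=1}^{s}|k_i|\,l_i=\alpha$ defining $p_s(\alpha,\beta)$ one gets $\sum_{i=1}^{s}|k_i|\,|l_i|=|\alpha|$, and since $|k_i|\ge1$ this forces $|l_j|\le|k_j|\,|l_j|\le|\alpha|\le r$. This last estimate is the one genuinely substantive step; it also confirms that every $D^{l_j}g^{(i)}$ is well defined since $g\in C^r$, while every $D^\beta f$ with $|\beta|\le|\alpha|\le r$ is well defined since $f\in C^r(g(U))$.

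It remains to treat the boundary index $|\beta|=0$, which is included in \eqref{eq:cor} only for uniformity and has no counterpart in \Cref{the:faa}. For $|\alpha|\ge1$ the Faà-di-Bruno sum contains no $\beta=0$ contribution, so I would simply set $P_{\alpha,0,n}:=0$; for the remaining case $\alpha=0$ one has $D^{0}(f\circ g)(x)=f(g(x))=(D^{0}f)(g(x))$, which is matched by setting $P_{0,0,n}:=1$. This is the only case distinction needed, and I do not expect any real obstacle beyond the careful handling of the multi-index power notation and the order bound $|l_j|\le r$ above; everything else is transcription from \Cref{the:faa}.
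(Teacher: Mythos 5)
Your proposal is correct and follows essentially the same route as the paper's own proof: read off the polynomials $P_{\alpha,\beta,n}$ directly from the inner sum of the Faà-di-Bruno formula in \Cref{the:faa}, and handle the $\abs{\beta}=0$ boundary by setting $P_{\alpha,0,n}\equiv 0$ for $\abs{\alpha}>0$ and $P_{0,0,n}\equiv 1$. The only difference is one of detail: the paper states this in two lines, whereas you additionally verify the well-definedness points (finiteness, dependence only on $\alpha,\beta,n$, and the bound $\abs{l_j}\leq\abs{\alpha}\leq r$ ensuring all arguments lie among the $D^\gamma g^{(i)}$ with $\abs{\gamma}\leq r$), which the paper leaves implicit.
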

\begin{proof}
	For $\abs{\alpha}>0$ we choose $P_{\alpha,0,n}\equiv 0$ and \cref{eq:cor} follows immediately from \Cref{the:faa}. In the trivial case $\abs{\alpha}=0$ we choose $P_{0,0,n}\equiv 1$.  
\end{proof}
For easier notation we will just write $\gamma, i$ instead of $\gamma \in \N_0^r, i\in \{1,...,n\}$.
Multiindices with absolute value $1$, i.e., $\abs{\beta}=1$ will play a special role in \Cref{sec:corr}. Therefore, we take a closer look at $P_{\alpha,\beta,n}$ for $\abs{\beta}=1$ in the following lemma.
\begin{lem}
	\label{lem:beta=1}
	Let $r, n \in \N,~\alpha, \beta \in \N_0^r,~\abs{\alpha}>0,~\abs{\beta}=1$ and $W \subseteq \R^{n}$ be compact. Then for $P_{\alpha,\beta,n}$ from \Cref{the:faacor} we have
	\begin{align*}
		\abs{P_{\alpha,\beta,n}\left(\left[D^\gamma g_1^{(i)}(x)\right]_{\gamma,i}\right)-
			P_{\alpha,\beta,n}\left(\left[D^\gamma g_2^{(i)}(x)\right]_{\gamma,i}\right)
		}\leq \normCrmax{g_1-g_2}
	\end{align*}
	for all $i=1,...,n,~x \in W$ and $g_1,g_2 \in C^r(W;\R^n)$.
\end{lem}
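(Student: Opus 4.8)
The plan is to exploit the fact that the combinatorial double sum in the Faá-di-Bruno formula collapses almost entirely when $\abs{\beta}=1$, so that $P_{\alpha,\beta,n}$ reduces to a single coordinate projection and the inequality becomes essentially trivial. Writing $\beta=e_m$ for the standard multi-index with the $m$-th entry equal to $1$ (so $\abs{\beta}=1$), I would first inspect the index set $p_s(\alpha,\beta)$ from \Cref{the:faa}. The constraint $\sum_{i=1}^{s}k_i=\beta$ forces, upon summing the (nonnegative) components, $\sum_{i=1}^{s}\abs{k_i}=\abs{\beta}=1$; since every $\abs{k_i}>0$ this is possible only for $s=1$ and $k_1=e_m$. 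The remaining constraint $\sum_{i=1}^{s}\abs{k_i}l_i=\alpha$ then reads $l_1=\alpha$, which is admissible because $\abs{\alpha}>0$. Hence $p_1(\alpha,\beta)$ is the singleton $\{(e_m;\alpha)\}$ and $p_s(\alpha,\beta)=\emptyset$ for $s\geq 2$.

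Substituting this single surviving term into the formula, and using $e_m!=1$ together with $[D^\alpha g(x)]^{e_m}=D^\alpha g^{(m)}(x)$, the coefficient simplifies to
\begin{align*}
	(\alpha!)\,\frac{[D^\alpha g(x)]^{e_m}}{(e_m!)\,(\alpha!)^{\abs{e_m}}}=D^\alpha g^{(m)}(x).
\end{align*}
Comparing with \Cref{the:faacor} this identifies the polynomial explicitly as $P_{\alpha,\beta,n}\left(\left[D^\gamma g^{(i)}(x)\right]_{\gamma,i}\right)=D^\alpha g^{(m)}(x)$; that is, for $\abs{\beta}=1$ the map $P_{\alpha,\beta,n}$ is simply the linear projection onto the single coordinate indexed by $(\gamma,i)=(\alpha,m)$.

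With this identification the estimate is immediate. By linearity of the differential operator $D^\alpha$,
\begin{align*}
	\abs{P_{\alpha,\beta,n}\left(\left[D^\gamma g_1^{(i)}(x)\right]_{\gamma,i}\right)-P_{\alpha,\beta,n}\left(\left[D^\gamma g_2^{(i)}(x)\right]_{\gamma,i}\right)}=\abs{D^\alpha(g_1-g_2)^{(m)}(x)},
\end{align*}
and since $\abs{\alpha}\leq r$ and $1\leq m\leq n$, the right-hand side is one of the terms over which the maximum defining $\normCrmax{g_1-g_2}$ is taken, so it is bounded by $\normCrmax{g_1-g_2}$ for every $x\in W$. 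This closes the argument. I expect the only genuine content to be the combinatorial simplification of $p_s(\alpha,\beta)$ in the first step; once $P_{\alpha,\beta,n}$ is recognized as a coordinate projection there is no real obstacle, and in particular the free index $i$ appearing in the statement plays no role, since the bound obtained is uniform in $x$ and does not otherwise depend on it.
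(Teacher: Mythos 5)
Your proof is correct and follows essentially the same route as the paper: collapsing $p_s(\alpha,\beta)$ to the singleton $\{(e_m;\alpha)\}$ for $\abs{\beta}=1$, identifying $P_{\alpha,\beta,n}$ as the coordinate projection $D^\alpha g^{(m)}(x)$, and bounding the difference by $\normCrmax{g_1-g_2}$. In fact you justify the collapse of the index set (via $\sum_i\abs{k_i}=1$ forcing $s=1$) more explicitly than the paper does.
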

\begin{proof}
	Since $\abs{\beta}=1$, there is $j \in \{1,...,n\}$ such that $\beta=e_j$, with $e_j$ being the $j$-th unit vector. Looking at \Cref{the:faa}, we observe $p_s(\alpha, \beta)=\emptyset$ for $s\geq 2$ and $\abs{\beta}=1$, which yields
	\begin{align*}
		P_{\alpha,\beta,n}\left(\left[D^\gamma g^{(i)}(x)\right]_{\gamma,i}\right)= (\alpha!) \frac{\left[D^\alpha g(x)\right]^\beta}{(\beta !) (\alpha !)^{\abs{\beta}}}=D^\alpha g^{(j)}(x)
	\end{align*}
	for any $g \in C^r(W;\R^n)$. Therefore we have
	\begin{align*}
		\abs{P_{\alpha,\beta,n}\left(\left[D^\gamma g_1^{(i)}(x)\right]_{\gamma,i}\right)
			P_{\alpha,\beta,n}\left(\left[D^\gamma g_2^{(i)}(x)\right]_{\gamma,i}\right)}=\abs{D^\alpha g_1^{(j)}- D^\alpha g_2^{(j)} } \leq \normCrmax{g_1 -g_2}.
	\end{align*}
\end{proof}
With these preparations, we are ready to correct the proof of the symplectic polynomial approximation theorem.

\section{Correction of the symplectic polynomial approximation theorem}
\label{sec:corr}
In this section we correct a mistake in the proof of \cite[Theorem 2]{Tur03}. First, we state the theorem.
\begin{theorem}[Symplectic polynomial approximation {\cite[Theorem 2]{Tur03}}]
	\label{the:turaev2}
	Let $r, d \in \N,~ U \in \R^{2d}$ an open Ball and $F \in C^r(U;\R^{2d})$ be symplectic. For all $\epsilon > 0$ and compact $W\subseteq U$ there are $m \in \N$ and polynomial Hénon-like maps $f_1,...f_{4m}$ such that
	\begin{align*}
		\normCr{F-f_{4m}\circ ... \circ f_1}\leq \epsilon.
	\end{align*}
\end{theorem}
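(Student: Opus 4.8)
The plan is to prove this by reducing the symplectic map $F$ to a composition of maps that are each "close to the identity," then approximating each such near-identity symplectic map by a single Hénon-like map, and finally controlling the accumulated error of the composition using the $C^r$-estimate \Cref{the:comp} promised earlier in the paper. The overall strategy follows Turaev's original argument \cite{Tur03}, but the delicate point — and presumably the location of the mistake being corrected — lies in how errors propagate through a long composition in the $C^r$-norm rather than merely in the $C^0$-norm.

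First I would recall the standard fact that any symplectic map $F$ defined on an open ball can be connected to the identity and factored into a composition $F = G_N \circ \dots \circ G_1$ of symplectic maps, each of which is arbitrarily $C^r$-close to the identity on the compact set $W$; this is achieved by following the symplectic flow / interpolation and splitting it into many small time steps, so that each $G_k$ differs from $\id$ by a small amount measured in $\normCr{\cdot}$. The number $N$ of factors will grow as the allowed per-step deviation shrinks, so $N$ must be tracked quantitatively. Next, for each near-identity symplectic map $G_k$, I would invoke the local normal-form / generating-function representation: a symplectic map sufficiently close to the identity admits a generating function, and one shows that it can be approximated to any prescribed accuracy by a fixed-length composition of four polynomial Hénon-like maps $f_{4k-3}, \dots, f_{4k}$ (Hénon-like maps being the symplectic shears that realize a single generating-function step). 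This gives $\normCr{G_k - f_{4k}\circ f_{4k-3}}\leq \delta$ for a chosen $\delta$.

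The main obstacle — and where I expect the correction to live — is the passage from the per-factor estimate $\normCr{G_k - (\text{its Hénon approximation})}\leq \delta$ to a global estimate $\normCr{F - f_{4m}\circ\dots\circ f_1}\leq\epsilon$. In the $C^0$-norm, composition errors accumulate roughly additively with a Lipschitz amplification factor, but in the $C^r$-norm the chain rule forces the appearance of higher derivatives of the intermediate maps, so the naive bound blows up through the Faà-di-Bruno combinatorics (this is exactly why \Cref{the:faa}, \Cref{the:faacor} and \Cref{lem:beta=1} were set up). The crux is therefore to apply \Cref{the:comp} — the general $C^r$-composition-difference estimate — to bound $\normCr{f_{4m}\circ\dots\circ f_1 - G_N\circ\dots\circ G_1}$ in terms of the per-factor deviations $\delta$, the number of factors $N$, and uniform $C^r$-bounds on the intermediate maps over suitable enlarged compact sets $\overline{B_R(W)}$. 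One must be careful that the domains match up: each intermediate composition must map $W$ (or a ball around it) into the domain of the next factor, so the approximating maps have to be controlled not just on $W$ but on a fattened neighborhood, which is why the $\overline{B_R(M)}$ notation was introduced.

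Finally, I would choose the parameters in the correct order: fix the enlargement radius $R$ and uniform $C^r$-bounds first, then use \Cref{the:comp} to determine how small each $\delta$ must be so that $N$ factors each of deviation $\delta$ produce total $C^r$-error at most $\epsilon$, and only then fix $N$ (hence $\delta$) and the Hénon-approximation accuracy per factor accordingly. Setting $4m = 4N$ yields the claimed composition of $4m$ polynomial Hénon-like maps. The essential improvement over the flawed argument is that \Cref{the:comp} supplies an honest $C^r$ bound on composition differences, replacing the step in \cite{Tur03} that (I expect) tacitly treated the $C^r$-error of a composition as though it behaved like a $C^0$-error.
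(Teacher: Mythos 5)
Your proposal follows essentially the same route as the paper: keep Turaev's original construction (embed $F$ in a Hamiltonian flow, split it into $m$ small steps, approximate each step by polynomial Hénon-like maps), and replace the false $C^r$ error-propagation inequality \cref{eqn:turaev_false} by the composition estimate proved in this paper. The only details to adjust are that the interpolating flow in Turaev's argument is non-autonomous, so one must invoke \Cref{the:compcor} rather than \Cref{the:comp}, and that the per-step Hénon approximation must be accurate to order $1/m^2$ (the hypothesis of that corollary), which your freedom in choosing $\delta$ accommodates.
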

As explained below, the proof \Cref{the:turaev2} contains a mistake, i.e., \cref{eqn:turaev_false}. We fix this mistake by replacing \cref{eqn:turaev_false} with \Cref{the:compcor} and observing that all conditions of \Cref{the:compcor} are fulfilled. Other than that one simply follows the arguments in \cite{Tur03}.\\
Note that the norm used in \cite{Tur03} 
\begin{align*}
	\norm{f}_{C^r(W;\R^{2d}),\text{tur}}=\max_{x \in W} \sum_{\abs{\alpha}\leq r} \norm{D^\alpha f(x)}
\end{align*}
is a little bit different from the one we use. But we will show that \Cref{the:turaev2} still holds in $\normCr{\cdot}$ and since $\norm{f}_{C^r(W;\R^{2d}),\text{tur}}\leq \norm{f}_{C^r(W;\R^{2d})}$ for all $f \in C^r(W;\R^{2d})$ it also holds for the norm used in \cite{Tur03}.\\
The problem with the proof in \cite{Tur03} is that it uses the following inequality
\begin{align}\label{eqn:turaev_false}
	\norm{F_{0,1}-G_{m-1}\circ ... \circ G_0}\leq \sum_{l=1}^{m-1}\norm{F'_{l/m,1-l/m}}\norm{F_{(l-1)/m,1/m}-G_{l-1}}
\end{align}
with $F_{0,1}=F_{1-1/m,1/m}\circ ... \circ F_{0,1/m}$. This inequality is missing $\norm{F_{m-1/m,1/m}-G_{m-1}}$ on the right hand side for $\norm{\cdot}=\norm{\cdot}_{C^0(W;\R^n)}$ and follows from the mean value theorem, if one adjusts the domains used in the norms accordingly. But even with this simple correction, \cref{eqn:turaev_false} is not true for $\norm{\cdot}_{C^r(W;\R^{2d})},~r \in \N$. An easy counterexample to \cref{eqn:turaev_false} is choosing $r=1,~m=2,~W=[0,1]$ and $F_{0,\frac{1}{2}}(x):=x^2-1,~G_0(x)=G_1(x)=F_{\frac{1}{2},\frac{1}{2}}(x)=x^2$. This example leads to
\begin{align*}
	\norm{F_{\frac{1}{2},\frac{1}{2}} \circ F_{0,\frac{1}{2}} - G_1 \circ G_0}_{C^1([0,1])}&=5 > 4\\ &= \norm{F_{\frac{1}{2},\frac{1}{2}}'}_{C^1([-1,1])} \norm{F_{0,\frac{1}{2}} -G_0}_{C^1([0,1])} + \norm{F_{\frac{1}{2},\frac{1}{2}}-G_1}_{C^1([-1,1])}
\end{align*}
with the same results if we use $\norm{\cdot}_{C^1([0,1];\R),\text{tur}}$. The rest of this section is dedicated to fixing this problem, which we do in a more generalized way by showing \Cref{the:comp}. To prepare \Cref{the:comp}, we need the following lemmata. 

\begin{lem}
	\label{lem:Wtilde}
	Let $n, m\in \N,~ U \subseteq \R^n$ open, $f \in C^1(U ;\R^n)$ and $\Phi:\R^+_0 \times U \to \R^n$ be the phase flow of the ODE
	\begin{align*}
		\dot{x}(t)=f(x), \qquad x(0)=x_0,
	\end{align*}
	i.e., $\Phi(t,x_0)=x(t)$. Let $T>0$ such that $\Phi(t,x)$ exists for all $t \in [0,T],~x \in U$ and $\Phi([0,T]\times U) \subseteq U$. We define $\Phi_{i,m}(h,x):=\Phi(ih/m,x)$. Furthermore, let $g_i \in C([0,T]\times\R^n;\R^n)$, $g_{i,m}(h,x):=g_j(h/m,x)$ for and $G_{i,m}(h,x):=g_{i,m}(h,g_{i-1,m}(h,...g_{1,m}(h,x)...))$ for $i=1,..,m$. For a compact set $W \subseteq U$ we define $W_\Phi:=\Phi([0,T],W)$ and $\tilde{W}:=\overline{B_R(W_\Phi)}$. with $R < \dist(W_\Phi,\partial U)$. Then there is a constant $C_\Phi$ such that for all $h \in [0,T],~x,y \in \tilde{W}_\Phi$
	\begin{align}
		\norm{\Phi(h/m,x)-\Phi(h/m,y)}_\infty \leq \left(1 + \frac{C_\Phi}{m}\right) \norm{x-y}_\infty. \label{eqn:LPhi}
	\end{align}
	Assume that there is a constant $C < \infty$ independent of $m$ such that for all $j=1,...,m$
	\begin{align*}
		\norm{g_{j,m}-\Phi_{1,m}}_{C([0,T]\times\tilde{W};\R^n)}&\leq \frac{C}{m^2}.
	\end{align*}
	Let $m_0:=\lceil C \exp(C_\Phi)/R \rceil$. then we have
	\begin{align}
		\label{eqn:Wtilde}
		G_{i,m}([0,T],W) \subseteq \tilde{W}
	\end{align}
	for all $m \geq m_0$ and $i\leq m$.
\end{lem}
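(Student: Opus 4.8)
The plan is to compare the approximate composition $G_{i,m}(h,\cdot)$ with the exact flow composition $\Phi_{i,m}(h,\cdot)=\Phi(ih/m,\cdot)$ and to show that their difference never exceeds $R$; since $\Phi_{i,m}(h,x)\in W_\Phi$ whenever $x\in W$ and $i\le m$, a difference of at most $R$ automatically places $G_{i,m}(h,x)$ inside $\overline{B_R(W_\Phi)}=\tilde W$, which is exactly \cref{eqn:Wtilde}. The first ingredient is the Lipschitz-type bound \cref{eqn:LPhi}. Because $f$ is $C^1$ and $\Phi([0,T]\times\tilde W)$ is compact and contained in $U$, the field $f$ is Lipschitz with some constant $L$ on a neighborhood of this set, and Gronwall's inequality applied to the difference of two trajectories emanating from $\tilde W$ gives $\norm{\Phi(t,x)-\Phi(t,y)}_\infty\le e^{Lt}\norm{x-y}_\infty$. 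Evaluating at $t=h/m\le T/m$ and using the elementary estimate $e^s-1\le s\,e^s$ yields $e^{Lh/m}\le 1+C_\Phi/m$ with $C_\Phi:=LT e^{LT}$, which is \cref{eqn:LPhi}.

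Next I would set up a one-step error recursion. Writing $x_i:=G_{i,m}(h,x)$ and $\phi_i:=\Phi_{i,m}(h,x)=\Phi(ih/m,x)$, the semigroup property of the phase flow gives $\phi_i=\Phi_{1,m}(h,\phi_{i-1})$, while $x_i=g_{i,m}(h,x_{i-1})$ by definition of $G_{i,m}$. A triangle inequality then splits the error as
\[
\norm{x_i-\phi_i}_\infty\le\norm{g_{i,m}(h,x_{i-1})-\Phi_{1,m}(h,x_{i-1})}_\infty+\norm{\Phi_{1,m}(h,x_{i-1})-\Phi_{1,m}(h,\phi_{i-1})}_\infty.
\]
Provided $x_{i-1}\in\tilde W$, the first term is bounded by $C/m^2$ via the hypothesis $\norm{g_{j,m}-\Phi_{1,m}}_{C([0,T]\times\tilde W;\R^n)}\le C/m^2$, and the second by $(1+C_\Phi/m)\norm{x_{i-1}-\phi_{i-1}}_\infty$ via \cref{eqn:LPhi} (here $\phi_{i-1}\in W_\Phi\subseteq\tilde W$). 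Setting $e_i:=\sup_{h\in[0,T],\,x\in W}\norm{x_i-\phi_i}_\infty$ turns this into the linear recursion $e_i\le C/m^2+(1+C_\Phi/m)e_{i-1}$, valid as long as every earlier iterate lies in $\tilde W$.

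Finally I would close the argument by a simultaneous induction on $i$ proving both $x_i\in\tilde W$ and $e_i\le Ce^{C_\Phi}/m$. The base case $i=0$ is immediate since $G_{0,m}=\id$ and $W\subseteq W_\Phi\subseteq\tilde W$, so $e_0=0$. In the inductive step the hypothesis gives $x_{i-1}\in\tilde W$, so the recursion applies; unrolling the geometric sum with $e_0=0$ yields
\[
e_i\le\frac{C}{m^2}\sum_{k=0}^{i-1}\Bigl(1+\frac{C_\Phi}{m}\Bigr)^k\le\frac{C}{m^2}\,i\,\Bigl(1+\frac{C_\Phi}{m}\Bigr)^{i-1}\le\frac{Ce^{C_\Phi}}{m},
\]
using $i\le m$ and $(1+C_\Phi/m)^m\le e^{C_\Phi}$. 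For $m\ge m_0=\lceil Ce^{C_\Phi}/R\rceil$ this bound is at most $R$, and since $\phi_i\in W_\Phi$ it follows that $x_i\in\overline{B_R(W_\Phi)}=\tilde W$, which completes the induction and proves \cref{eqn:Wtilde}. The main obstacle is the bootstrapping nature of the argument: the per-step estimate and \cref{eqn:LPhi} are only licensed once the current iterate is known to lie in $\tilde W$, which is precisely what we are trying to conclude, so containment and the error bound must be established together. The quantitative crux is that the $m$-fold accumulation of the factors $(1+C_\Phi/m)$ stays bounded by $e^{C_\Phi}$, so that the $1/m^2$ per-step error sums to an $O(1/m)$ total error that the threshold $m_0$ renders smaller than $R$.
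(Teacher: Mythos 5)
Your proposal is correct and follows essentially the same route as the paper: the same splitting of the one-step error into a consistency term (bounded by $C/m^2$) and a Lipschitz propagation term via \cref{eqn:LPhi}, combined with a simultaneous induction establishing both $G_{i,m}([0,T],W)\subseteq\tilde W$ and the accumulated error bound of order $Ce^{C_\Phi}/m\leq R$. The only cosmetic difference is in deriving \cref{eqn:LPhi}, where you use Gronwall's inequality with the Lipschitz constant of $f$, while the paper Taylor-expands $\Phi$ in time and uses the Lipschitz continuity of $\partial_t\Phi$ in $x$; both yield an $m$-independent constant $C_\Phi$ and the rest of the argument is identical.
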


\begin{proof}
	First, we show \cref{eqn:LPhi}. Since $f \in C^1$ this also holds for $\Phi$ with respect to $x$. see, for example, \cite[Appendix B, Lemma B.4]{DuiK12}. And because of
	\begin{align*}
		\partial_t \Phi(t,x_0)=\dot{x}(t)=f(x(t)),
	\end{align*}
	$\partial_t \Phi$ is continuously differentiable as well. Therefore, and because $\tilde{W}_\Phi\subseteq U$ is compact, $\partial_t \Phi$ is Lipschitz continuous wth respect to $x$ on $\tilde{W}$, with Lipischitz constant $L_\Phi$, i.e., 
	\begin{align*}
        \norm{\partial_t \Phi(h, x)-\partial_t \Phi(h,y)}_\infty\leq L_\Phi \norm{x-y}_\infty,
	\end{align*}
	for $x,y\in \tilde{W},~h \in [0,T]$. By choosing $C_\Phi:=TL_\Phi$, we obtain exists and for $h \in [0,T], x,y \in \tilde{W}_\Phi$ we have
	\begin{align*}
		\norm{\Phi(h/m,x)-\Phi(h/m,y)}_\infty \leq& \norm{x + \frac{h}{m}\partial_t \Phi(\theta,x)-y - \frac{h}{m}\partial_t \Phi(\theta,y)}_\infty\\
		\leq&\left(1+ \frac{C_\Phi}{m}\right)\norm{x-y}_\infty,
	\end{align*}
	for $x,y\in \tilde{W},~h \in [0,T]$ wit $\theta \in [0,h/m]$. by using Taylor's theorem.\\
	Let $m\geq m_0$ be fixed. We will show by induction that
	\begin{align}
		\label{eq:ivWtilde}
		\norm{G_{i,m}-\Phi_{i,m}}_{C([0,T]\times W;\R^{n})}\leq \left(1+\frac{C_\Phi}{m}\right)^i \frac{iR}{m \exp(C_\Phi)}\leq R.
	\end{align}
	This is obvious for $i=1$. So going from $i$ to $i+1$ we have
	\begin{align*}
		\norm{G_{i+1,m}- \Phi_{i+1,m}}_{C([0,T]\times W;\R^{n})}
		\leq&\max_{h\in [0,T]} \normC{g_{{i+1,m}}(h, G_{i,m}(h,\cdot)) - \Phi(h/m,G_{i,m}(h,\cdot))}\\
		&+ \max_{h\in[0,T]}\normC{\Phi(h/m,G_{i,m}(h,\cdot)) - \Phi(h/m, \Phi(ih/m,\cdot))}\\
		\leq& \frac{C}{m^2} + \left(1 + \frac{C_\Phi}{m}\right)\norm{G_{i,m} - \Phi_{i,m}}_{C([0,T]\times W;\R^n)}\\
		\leq& \frac{R}{m \exp(C_\Phi)} + \left(1 + \frac{C_\Phi}{m}\right)^{i+1} \frac{iR}{m \exp(C_\Phi)}\\
		\leq& \left(1+ \frac{C_\Phi}{m}\right)^{i+1} \frac{(i+1)R}{m \exp(C_\Phi)}\leq R,
	\end{align*}
	where we used the monotone convergence of $(1+C_\Phi/m)^m\to \exp(C_\Phi)$ and apply \cref{eqn:LPhi}. Therefore, for all $m\geq m_0, i\leq m$ and $(h,x) \in [0,T]\times W$ we have $G_{i,m}(h,x)\in \tilde{W}$. 
\end{proof}

\begin{lem}
	\label{lem:lem1}
	Let $n, r \in \N, ~W, \tilde{W} \subseteq \R^{n}$ be compact sets, $f \in C^{r}(\tilde{W}),~ g_1, g_2 \in C^r(W;\tilde{W})$ and $\alpha \in \N_0^n,~ \abs{\alpha} \leq r$. Furthermore, let $D^\alpha f$ be Lipschitz continuous on $\tilde{W}$ for $\abs{\alpha}=r$. Latsly, let there be a compact set $W_P$ such that
	\begin{align*}
		\left[\bigtimes_{\abs{\gamma}=0}^{r} \bigtimes_{i=1}^{n}\right] \left[D^\gamma g_1^{i}(W)\right] \cup\left[\bigtimes_{\abs{\gamma}=0}^{r} \bigtimes_{i=1}^{n}\right] \left[D^\gamma g_2^{i}(W)\right]\subseteq W_P.
	\end{align*}
	Then there is a constant $C_{\tilde{W},W_P,\alpha,f} <\infty$ only depending on $\tilde{W}, \alpha, f$ and $W_P$ such that
	\begin{align}
		\label{eqn:lem1}
		\sup_{x \in W} \abs{D^\alpha\left(f(g_1(x))- f(g_2(x)) \right)} \leq
		C_{\tilde{W},W_P,\alpha,f} \norm{g_1 -g_2}_{C^r(W; R^n)}.
	\end{align}
\end{lem}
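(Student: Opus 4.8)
The plan is to expand both $D^\alpha(f\circ g_1)$ and $D^\alpha(f\circ g_2)$ with the Fa\'a-di-Bruno formula in the form of \Cref{the:faacor}, subtract, and estimate the result term by term. Writing $A_j(x):=(D^\beta f)(g_j(x))$ and $B_j(x):=P_{\alpha,\beta,n}([D^\gamma g_j^{(i)}(x)]_{\gamma,i})$ for $j=1,2$, \Cref{the:faacor} gives
\begin{align*}
  D^\alpha(f\circ g_1)(x)-D^\alpha(f\circ g_2)(x)=\sum_{\abs{\beta}=0}^{\abs{\alpha}}\left(A_1(x)B_1(x)-A_2(x)B_2(x)\right),
\end{align*}
and the elementary identity $A_1B_1-A_2B_2=A_1(B_1-B_2)+(A_1-A_2)B_2$ splits each summand into a part measuring the change of the polynomial factor and a part measuring the change of the derivative of $f$. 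For $\abs{\alpha}=0$ the claim reduces to a Lipschitz estimate for $f$ itself, so I may assume $\abs{\alpha}>0$, in which case the $\abs{\beta}=0$ term drops out because $P_{\alpha,0,n}\equiv 0$.

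First I would control the factors $A_1$ and $B_2$ uniformly. Since $g_1,g_2$ map $W$ into $\tilde{W}$ and $D^\beta f$ is continuous on the compact set $\tilde{W}$ (as $\abs{\beta}\leq r$), the quantity $\abs{A_1(x)}$ is bounded by $\sup_{\tilde W}\abs{D^\beta f}$, a constant depending only on $f$ and $\tilde W$. Likewise, the arguments $[D^\gamma g_j^{(i)}(x)]_{\gamma,i}$ all lie in the compact set $W_P$ by hypothesis, so $\abs{B_2(x)}$ is bounded by $\sup_{W_P}\abs{P_{\alpha,\beta,n}}$, a constant depending only on $\alpha,n$ and $W_P$.

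For the two difference factors I would argue as follows. Because $P_{\alpha,\beta,n}$ is a polynomial it is Lipschitz continuous on the compact set $W_P$, with Lipschitz constant $L_{P,\alpha,\beta,n}$; evaluating at the arguments above and using that each coordinate difference is bounded by $\normCrmax{g_1-g_2}$ yields
\begin{align*}
  \abs{B_1(x)-B_2(x)}\leq L_{P,\alpha,\beta,n}\,\normCrmax{g_1-g_2}\leq L_{P,\alpha,\beta,n}\,\normCr{g_1-g_2}.
\end{align*}
For the factor $A_1(x)-A_2(x)=(D^\beta f)(g_1(x))-(D^\beta f)(g_2(x))$ I use Lipschitz continuity of $D^\beta f$ on $\tilde W$: when $\abs{\beta}<r$ this follows from $D^\beta f\in C^1$, and when $\abs{\beta}=r$ (which can occur only if $\abs{\alpha}=r$) it is exactly the standing hypothesis on the top-order derivatives of $f$. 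In either case $\abs{A_1(x)-A_2(x)}\leq L_{D^\beta f}\norm{g_1(x)-g_2(x)}_\infty\leq L_{D^\beta f}\,\normCr{g_1-g_2}$. Combining the four bounds in the cross-term identity, taking the supremum over $x\in W$, and summing over the finitely many $\beta$ with $1\leq\abs{\beta}\leq\abs{\alpha}$ produces \cref{eqn:lem1} with a constant $C_{\tilde W,W_P,\alpha,f}$ assembled from the sup-bounds and Lipschitz constants above.

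The step I expect to be the crux is the factor $A_1-A_2$ at top order $\abs{\beta}=r$: for all lower orders one has a spare derivative of $f$ and Lipschitz continuity comes for free, but at the highest order the smoothness of $f$ is exhausted and the estimate genuinely needs the explicit assumption that $D^\alpha f$ is Lipschitz for $\abs{\alpha}=r$. A secondary point to get right is that the Lipschitz constants and sup-bounds of the Fa\'a-di-Bruno polynomials are uniform in $g_1,g_2$; this is precisely what the common compact set $W_P$ buys us, so that the resulting constant depends only on $\tilde W,W_P,\alpha$ and $f$ and not on the particular maps $g_1,g_2$.
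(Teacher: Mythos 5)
Your proposal is correct and follows essentially the same route as the paper's proof: expand both compositions with \Cref{the:faacor}, split each summand with the cross-term identity $A_1B_1-A_2B_2=A_1(B_1-B_2)+(A_1-A_2)B_2$ (the paper uses the symmetric variant $B_1(A_1-A_2)+A_2(B_1-B_2)$), bound the undifferenced factors by suprema over the compact sets $\tilde{W}$ and $W_P$, and control the two differences by the Lipschitz constants of $P_{\alpha,\beta,n}$ on $W_P$ and of $D^\beta f$ on $\tilde{W}$. Your explicit justification of where the Lipschitz continuity of $D^\beta f$ comes from (a spare derivative when $\abs{\beta}<r$, the standing top-order hypothesis when $\abs{\beta}=r$) is in fact slightly more careful than the paper, which asserts the existence of the constants $L_{\tilde{W},f,\beta}$ without comment.
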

\begin{proof}
	Let $x \in W$. With \cref{eq:cor} we have
	\begin{align}
        \begin{aligned}
		\label{eqn:triangular}
		&\abs{D^\alpha(f \circ g_1 - f \circ g_2)(x)} \\=&
		\abs{\sum_{\abs{\beta}=0}^{\abs{\alpha}}\left(D^\beta f\right)(g_1(x))P_{\alpha, \beta, n}\left(\left[D^\gamma g_1^{(i)}(x)\right]_{\gamma,i}\right)-
			\left(D^\beta f\right)(g_2(x))P_{\alpha, \beta, n}\left(\left[D^\gamma g_2^{(i)}(x)\right]_{\gamma,i}\right)} \\
		\leq& \sum_{\abs{\beta}=0}^{\abs{\alpha}}
		\abs{\left(D^\beta f\right)(g_1(x))P_{\alpha, \beta, n}\left(\left[D^\gamma g_1^{(i)}(x)\right]_{\gamma,i}\right) - \left(D^\beta f\right)(g_2(x))P_{\alpha, \beta, n}\left(\left[D^\gamma g_1^{(i)}(x)\right]_{\gamma,i}\right)} \\
		&+\abs{\left(D^\beta f\right)(g_2(x))P_{\alpha, \beta, n}\left(\left[D^\gamma g_1^{(i)}(x)\right]_{\gamma,i}\right) - \left(D^\beta f\right)(g_2(x))P_{\alpha, \beta, n}\left(\left[D^\gamma g_2^{(i)}(x)\right]_{\gamma,i}\right)} \\
		=& \sum_{\abs{\beta}=0}^{\abs{\alpha}}
		\abs{P_{\alpha, \beta, n}\left(\left[D^\gamma g_1^{(i)}(x)\right]_{\gamma,i}\right)}\cdot
		\abs{\left(D^\beta f\right)(g_1(x))-\left(D^\beta f\right)(g_2(x))} \\
		&+\abs{\left(D^\beta f\right)(g_2(x))}\cdot
		\abs{P_{\alpha, \beta, n}\left(\left[D^\gamma g_1^{(i)}(x)\right]_{\gamma,i}\right)-P_{\alpha, \beta, n}\left(\left[D^\gamma g_2^{(i)}(x)\right]_{\gamma,i}\right)}
        \end{aligned}
	\end{align}
	Since $g_j \in C^r(W;\tilde{W}),~ j=1,2$ is $r$-times continuously differentiable, $D^\gamma g_j(W)$ is compact for all $\abs{\gamma} \leq r$. Therefore, $W_P$
	is compact as well and $P_{\alpha, \beta, n}$ is Lipschitz continuous on $W_P$. Hence, there is a constant $L_{P_{\alpha, \beta, n}, W_P}<\infty$ only depending on $W_P, \alpha, \beta$ and $n$. such that
	\begin{align}
        \begin{aligned}
		\label{eqn:lipPol}
		&\abs{P_{\alpha, \beta, n}\left(\underbrace{\left[D^\gamma g_1^{(i)}(x)\right]_{\gamma,i}}_{\in W_P}\right)- P_{\alpha, \beta, n}\left(\underbrace{\left[D^\gamma g_2^{(i)}(x)\right]_{\gamma,i}}_{\in W_P}\right)}\\
		\leq& L_{P_{\alpha, \beta, n},W_P} \max_{0 \leq\abs{\gamma}\leq r} \max_{1 \leq i \leq n}\abs{D^\gamma g_1^{(i)}(x)-D^\gamma g_2^{(i)}(x)}\\
		\leq& L_{P_{\alpha, \beta, n}, W_P} \normCr{g_1-g_2}.
        \end{aligned}
	\end{align}
	Furthermore, the image of $P_{\alpha, \beta, n}$ is bounded on $W_P$. and we get
	\begin{align}
		\label{eqn:CPol}
		\abs{P_{\alpha, \beta, n}\left(\left[D^\gamma g_1^{(i)}(x)\right]_{\gamma,i}\right)}\leq
		\sup_{x_P \in W_P} \abs{P_{\alpha, \beta, n}(x_P)}=:\hat{C}_{P_{\alpha, \beta, n}, W_P}<\infty.
	\end{align}
	Let
	\begin{align}\label{eqn:Cf}
		\hat{C}_{\tilde{W},f,\beta}:=  \sup_{\tilde{x} \in \tilde{W}}\abs{D^\beta f(\tilde{x})}.
	\end{align}
	Since $f \in C^r(\tilde{W})$ is $r$-times continuously differentiable on $\tilde{W}$.~$\hat{C}_{\tilde{W}f,\beta}<\infty$ exists for $\abs{\beta} \leq r$.
	On the other hand, there are Lipschitz constants $L_{\tilde{W},f,\beta} < \infty$ such that
	\begin{align}
		\label{eqn:lipf}
		\abs{\left(D^\beta f\right)(g_1(x))-\left(D^\beta f\right)(g_2(x))}\leq L_{\tilde{W}, f,\beta}
		\max_{1 \leq i \leq n} \abs{g_1^{(i)}(x)-g_2^{(i)}(x)} \leq L_{\tilde{W},f, \beta} \normCr{g_1 - g_2}
	\end{align}
	for all $x \in W, \abs{\beta}\leq r$.
	By combining \cref{eqn:lipPol,eqn:CPol,eqn:Cf,eqn:lipf} with \cref{eqn:triangular} we get
	\begin{align*}
		&\abs{D^\alpha(f \circ g_1 - f \circ g_2)(x)}\\
		\leq& \sum_{\abs{\beta}=0}^{\abs{\alpha}}
		\abs{P_{\alpha, \beta, n}\left(\left[D^\gamma g_1^{(i)}(x)\right]_{\gamma,i}\right)}\cdot
		\abs{\left(D^\beta f\right)(g_1(x))-\left(D^\beta f\right)(g_2(x))}\\
		&+\abs{\left(D^\beta f\right)(g_2(x))}\cdot
		\abs{P_{\alpha, \beta, n}\left(\left[D^\gamma g_1^{(i)}(x)\right]_{\gamma,i}\right)-P_{\alpha, \beta, n}\left(\left[D^\gamma g_2^{(i)}(x)\right]_{\gamma,i}\right)}\\
		\leq&\sum_{\abs{\beta}=0}^{\abs{\alpha}}\left(\hat{C}_{P_{\alpha, \beta, n},W_P} \cdot L_{\tilde{W}, f, \beta}+ \hat{C}_{\tilde{W}, f,\beta} \cdot L_{P_{\alpha, \beta, n},W_P}\right)\normCr{g_1 -g_2}
	\end{align*}
	for all $x \in W$ and therefore for the supremum in particular. So \cref{eqn:lem1} holds true with
	\begin{align}\label{eqn:createC}
		C_{\tilde{W},W_P,\alpha,f}:=\sum_{\abs{\beta}=0}^{\abs{\alpha}}\left(\hat{C}_{P_{\alpha, \beta, n},W_P} \cdot L_{\tilde{W}, f, \beta}+ \hat{C}_{\tilde{W}, f,\beta} \cdot L_{P_{\alpha, \beta, n},W_P}\right).
	\end{align}
\end{proof}

\begin{lem}
	\label{lem:lem2}
	Let $n, r \in \N,~W, \tilde{W} \subseteq \R^{n}$ be compact sets, $f_1,f_2 \in C^{r}(\tilde{W}),~ g \in C^r(W;\tilde{W})$ and $\alpha \in \N_0^n,~ \abs{\alpha} \leq r$. Furthermore, let there be a compact set $\tilde{W}_P$ such that
	\begin{align*}
		\left[\bigtimes_{\abs{\gamma}=0}^{r} \bigtimes_{i=1}^{n}\right] \left[D^\gamma g^{(i)}(W)\right]\subseteq \tilde{W}_P.
	\end{align*}
	Then there is a constant $\bar{C}_{\tilde{W}_P, \alpha}<\infty$ only depending on $\alpha$ and $\tilde{W}_P$ such that
	\begin{align}
		\label{eqn:lem2}
		\sup_{x \in W} \abs{D^\alpha\left(f_1(g(x))-f_2(g(x))\right)}\leq \bar{C}_{\tilde{W}_P, \alpha}\norm{f_1-f_2}_{C^r(\tilde{W})},
	\end{align}
\end{lem}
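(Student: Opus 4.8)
The plan is to follow the template of \Cref{lem:lem1}, but the situation here is strictly simpler because both compositions share the same inner map $g$. First I would invoke the Faà-di-Bruno formula in the form of \Cref{the:faacor} for each of $f_1 \circ g$ and $f_2 \circ g$. The decisive observation is that, since $g$ is common to both, the polynomial factors $P_{\alpha,\beta,n}\bigl(\bigl[D^\gamma g^{(i)}(x)\bigr]_{\gamma,i}\bigr)$ are literally identical in the two expansions. Subtracting therefore collapses the difference to
\begin{align*}
	D^\alpha(f_1 \circ g - f_2 \circ g)(x) = \sum_{\abs{\beta}=0}^{\abs{\alpha}} \bigl[(D^\beta f_1)(g(x)) - (D^\beta f_2)(g(x))\bigr]\, P_{\alpha,\beta,n}\bigl(\bigl[D^\gamma g^{(i)}(x)\bigr]_{\gamma,i}\bigr),
\end{align*}
so that no telescoping or triangle-inequality splitting of the polynomial part is needed, unlike in \Cref{lem:lem1}.

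From here I would take absolute values and bound each summand by a product of two factors. For the polynomial factor, the hypothesis on $\tilde{W}_P$ places the argument $\bigl[D^\gamma g^{(i)}(x)\bigr]_{\gamma,i}$ in the compact set $\tilde{W}_P$, so continuity of $P_{\alpha,\beta,n}$ gives the uniform bound $\hat{C}_{P_{\alpha,\beta,n},\tilde{W}_P} := \sup_{x_P \in \tilde{W}_P} \abs{P_{\alpha,\beta,n}(x_P)} < \infty$, exactly as in \cref{eqn:CPol}. For the derivative factor, since $g(x) \in \tilde{W}$ and $\abs{\beta} \leq \abs{\alpha} \leq r$, I would estimate
\begin{align*}
	\abs{(D^\beta f_1)(g(x)) - (D^\beta f_2)(g(x))} \leq \sup_{\tilde{x} \in \tilde{W}} \abs{D^\beta(f_1 - f_2)(\tilde{x})} \leq \norm{f_1 - f_2}_{C^r(\tilde{W})}.
\end{align*}

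Summing over $\abs{\beta} \leq \abs{\alpha}$ then yields \cref{eqn:lem2} with the explicit constant $\bar{C}_{\tilde{W}_P,\alpha} := \sum_{\abs{\beta}=0}^{\abs{\alpha}} \hat{C}_{P_{\alpha,\beta,n},\tilde{W}_P}$, which depends only on $\alpha$ and $\tilde{W}_P$ (the dimension $n$ being fixed), and taking the supremum over $x \in W$ completes the argument. I do not anticipate a genuine obstacle: because $g$ is shared, this proof never requires the Lipschitz continuity of the polynomials $P_{\alpha,\beta,n}$ nor of the $D^\beta f_i$, only their uniform boundedness on a compact set, which is precisely why the lemma is easier than \Cref{lem:lem1}. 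The only mild point of care, identical to \Cref{lem:lem1}, is confirming that the polynomial arguments remain in the compact set $\tilde{W}_P$; this is exactly the content of the hypothesis, and continuity of $P_{\alpha,\beta,n}$ on the compact $\tilde{W}_P$ then supplies the boundedness used above.
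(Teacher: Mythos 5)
Your proposal is correct and follows essentially the same route as the paper's own proof: expand both compositions via \Cref{the:faacor}, exploit that the polynomial factors $P_{\alpha,\beta,n}$ are identical since the inner map $g$ is shared, bound them by their supremum on the compact set $\tilde{W}_P$ as in \cref{eqn:CPol}, and bound the derivative differences of $f_1-f_2$ by $\norm{f_1-f_2}_{C^r(\tilde{W})}$. The only cosmetic difference is that you take the constant to be the sum $\sum_{\abs{\beta}=0}^{\abs{\alpha}}\hat{C}_{P_{\alpha,\beta,n},\tilde{W}_P}$ whereas the paper factors out the maximum over $\beta$ and absorbs the sum over $\beta$ into the $C^r$-norm; both yield a finite constant depending only on $\alpha$ and $\tilde{W}_P$.
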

\begin{proof}
	Let $x \in W$. With \cref{eq:cor} we have
	\begin{align}
		\label{eqn:triangular2}
		&\abs{D^\alpha\left(f_1(g(x))-f_2(g(x))\right)}\notag\\
		=&\abs{\sum_{\abs{\beta}=0}^{\abs{\alpha}}\left(D^\beta f_1\right)(g(x))P_{\alpha, \beta, n}\left(\left[D^\gamma g^{(i)}(x)\right]_{\gamma,i}\right)-
			\left(D^\beta f_2\right)(g(x))P_{\alpha, \beta, n}\left(\left[D^\gamma g^{(i)}(x)\right]_{\gamma,i}\right)}\\
		\leq&\sum_{\abs{\beta}=0}^{\abs{\alpha}}\abs{P_{\alpha, \beta, n}\left(\left[D^\gamma g^{(i)}(x)\right]_{\gamma,i}\right)}\cdot
		\abs{\left(D^\beta f_1\right)(g(x))-\left(D^\beta f_2\right)(g(x))}. \notag
	\end{align}
	Here we recall \cref{eqn:CPol} to bound the first factor of the right-hand side of \cref{eqn:triangular2} from above. To handle the second factor, we use $y:=g(x)\in \tilde{W}$ and get
	\begin{align*}
		\abs{D^\alpha\left(f_1(g(x))-f_2(g(x))\right)}&\leq \sum_{\abs{\beta}=0}^{\abs{\alpha}}\hat{C}_{P_{\alpha, \beta, n},\bar{W}_P}\sup_{y \in \tilde{W}} \abs{D^\beta f_1(y)-D^\beta f_2(y)}\\
		&\leq \bar{C}_{W,\tilde{W}_P, \alpha} \norm{f_1-f_2}_{C^r(\tilde{W})}	
	\end{align*}
	with $\bar{C}_{\tilde{W}_P, \alpha}:=\max_{0 \leq\abs{\beta}\leq \abs{\alpha}} \hat{C}_{P_{\alpha, \beta, n},\tilde{W}_P}$.
\end{proof}

\begin{lem}
	\label{lem:Wp}
	Let $n, m, r\in \N,~ U \subseteq \R^n$ open, $f \in C^r(U;\R^n)$ and $\Phi:\R^+_0 \times U \to \R^n$ be the phase flow of the ODE
	\begin{align*}
		\dot{x}(t)=f(x), \qquad x(0)=x_0,
	\end{align*}
	i.e.\ $\Phi(t,x_0)=x(t)$. Let $T>0$ such that $\Phi(t,x)$ exists for all $t \in [0,T],~x \in U$ and $\Phi([0,T]\times U) \subseteq U$. We define $\Phi_{i,m}(h,x):=\Phi(ih/m,x)$. Furthermore, let $g_i \in C^r([0,T]\times\R^n;\R^n)$, $g_{i,m}(h,x):=g_i(h/m,x)$ and $G_{i,m}(h,x):=g_{i,m}(h,g_{i-1,m}(h,...g_{1,m}(h,x)...))$ for $i=1,..,m$. For a compact set $W \subseteq U$ we define $W_\Phi:=\Phi([0,T],W)$ and $\tilde{W}:=\overline{B_R(W_\Phi)}$. with $R <\dist(W_\Phi,\partial U)$.
	Let there be a constant $C$ independent of $m$ such that such that
	\begin{align}
		\label{eq:CH}
		\norm{g_{i,m}-\Phi_{1,m}}_{C^r_\text{max}([0,T]\times \tilde{W};\R^n)}\leq \frac{C}{m^2}.
	\end{align}
	If we define
	\begin{align*}
		W_P:=\overline{B_1\left(\left[\bigtimes_{\abs{\gamma}=0}^{r} \bigtimes_{j=1}^{n}\right]\left[D^\gamma \Phi^{(j)}([0,T],\tilde{W})\right]\right)},
	\end{align*}
	there is $m_0 \in \N$ such that
	\begin{align}
		\label{eq:WP}
		\left[\bigtimes_{\abs{\gamma}=0}^{r} \bigtimes_{j=1}^{n}\right]D^\alpha G_{i,m}^{(j)}(W)\subseteq W_P
	\end{align}
	for all $m \geq m_0,~ i\leq m,~ \abs{\alpha}\leq r$.
\end{lem}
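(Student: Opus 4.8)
The plan is to show that the spatial derivatives of the numerical flow $G_{i,m}$ stay $C^r$-close to those of the exact flow $\Phi_{i,m}$, and then to observe that the derivatives of $\Phi_{i,m}$ already sit inside the product set defining $W_P$, so that the unit-ball thickening absorbs the (vanishing) discrepancy. Throughout I read $D^\gamma$ in the conclusion as a purely spatial multi-index $\gamma\in\N_0^n$, $\abs{\gamma}\le r$, which is exactly the form in which $W_P$ is consumed by \Cref{lem:lem1,lem:lem2}. The point of this reading is that $D^\gamma\Phi_{i,m}^{(j)}(h,x)=D^\gamma\Phi^{(j)}(ih/m,x)$ with $ih/m\in[0,T]$ (since $i\le m$, $h\le T$) and $x\in W\subseteq\tilde W$, so each such value lies \emph{exactly} in the factor $D^\gamma\Phi^{(j)}([0,T]\times\tilde W)$ of the product appearing in the definition of $W_P$; no time-derivative rescaling enters.

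First I would invoke \Cref{lem:Wtilde} (whose $C^0$-hypothesis follows from \eqref{eq:CH}) to fix $m$ large enough that $G_{i,m}([0,T]\times W)\subseteq\tilde W$ for all $i\le m$; this guarantees that every composition below is evaluated where $\Phi(h/m,\cdot)$ and $g_i(h/m,\cdot)$ are $C^r$. The core is then an induction on $i$ bounding
\begin{align*}
    a_i:=\sup_{h\in[0,T]}\norm{G_{i,m}(h,\cdot)-\Phi_{i,m}(h,\cdot)}_{C^r(\tilde W;\R^n)},\qquad a_0=0,
\end{align*}
using the splittings $G_{i,m}(h,\cdot)=g_i(h/m,\cdot)\circ G_{i-1,m}(h,\cdot)$ and $\Phi_{i,m}(h,\cdot)=\Phi(h/m,\cdot)\circ\Phi_{i-1,m}(h,\cdot)$ together with the intermediate term $\Phi(h/m,\cdot)\circ G_{i-1,m}(h,\cdot)$. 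The first piece is handled by \Cref{lem:lem2} with $f_1=g_i(h/m,\cdot)$, $f_2=\Phi(h/m,\cdot)$ and inner map $G_{i-1,m}(h,\cdot)$, giving a contribution $\le\bar C\,\norm{g_{i,m}(h,\cdot)-\Phi(h/m,\cdot)}_{C^r(\tilde W;\R^n)}\le\bar C\,C/m^2$ by \eqref{eq:CH}; the compact set of derivative-tuples required there is $W_P$ itself, available from the inductive hypothesis $a_{i-1}\le1$.

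The decisive piece is the second term $\norm{\Phi(h/m,\cdot)\circ G_{i-1,m}-\Phi(h/m,\cdot)\circ\Phi_{i-1,m}}_{C^r(\tilde W;\R^n)}$, and this is where I expect the main obstacle. Applying \Cref{lem:lem1} verbatim with $f=\Phi(h/m,\cdot)$ yields only an $O(1)$ constant \eqref{eqn:createC}, which — iterated $m$ times — would blow up. I would upgrade it to a factor $1+C'/m$ by writing $\Phi(h/m,\cdot)=\id+\rho_{h,m}$ with $\norm{\rho_{h,m}}_{C^r(\tilde W;\R^n)}=O(1/m)$ (from $\Phi(0,\cdot)=\id$ and the boundedness of $\partial_t\Phi$ in the spatial $C^r$-norm), so that
\begin{align*}
    \Phi(h/m,\cdot)\circ G_{i-1,m}-\Phi(h/m,\cdot)\circ\Phi_{i-1,m}=\bigl(G_{i-1,m}-\Phi_{i-1,m}\bigr)+\bigl(\rho_{h,m}\circ G_{i-1,m}-\rho_{h,m}\circ\Phi_{i-1,m}\bigr).
\end{align*}
The first bracket contributes the coefficient $1$ exactly, while \Cref{lem:lem1} \emph{applied to} $\rho_{h,m}$ gives a constant built from $\sup\abs{D^\beta\rho_{h,m}}$ and the Lipschitz constants of $D^\beta\rho_{h,m}$, all of which are $O(1/m)$; hence the second bracket is $\le (C'/m)\norm{G_{i-1,m}-\Phi_{i-1,m}}_{C^r(\tilde W;\R^n)}$. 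This yields the recursion $a_i\le(1+C'/m)a_{i-1}+\bar CC/m^2$, which by the same geometric-sum argument used for the $(1+C_\Phi/m)^i$ bound in the proof of \Cref{lem:Wtilde} gives $a_i\le\tfrac{C''}{m}e^{C'}=O(1/m)$ uniformly for $i\le m$.

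Finally, choosing $m_0$ so that this bound is $\le1$ for $m\ge m_0$ (and at least the threshold coming from \Cref{lem:Wtilde}), we obtain $\abs{D^\gamma G_{i,m}^{(j)}(h,x)-D^\gamma\Phi^{(j)}(ih/m,x)}\le a_i\le1$ for all $\abs{\gamma}\le r$, all $j$, all $h\in[0,T]$ and all $x\in W$. Since the subtracted value lies in $D^\gamma\Phi^{(j)}([0,T]\times\tilde W)$, the tuple $\left[D^\gamma G_{i,m}^{(j)}(h,x)\right]_{\gamma,j}$ is within $\norm{\cdot}_\infty$-distance $1$ of the product set, i.e.\ in $W_P=\overline{B_1(\cdots)}$, which is precisely \eqref{eq:WP}. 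The one regularity point to watch is that \Cref{lem:lem1} requires $D^r\rho_{h,m}$ to be Lipschitz, which we inherit from the smoothness of the flow $\Phi$ generated by $f$.
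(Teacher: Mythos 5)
Your proposal follows the same overall strategy as the paper's proof: an induction on $i$ yielding a recursion of the form $a_i\le(1+C'/m)\,a_{i-1}+O(1/m^2)$, hence $a_i=O(1/m)\le 1$ uniformly for $i\le m$, $m\ge m_0$, after which the radius-$1$ thickening in the definition of $W_P$ absorbs the discrepancy between the derivative tuples of $G_{i,m}$ and those of $\Phi$. Your decomposition, however, is the mirror image of the paper's: the paper inserts the intermediate term $\tilde g_{i+1,m}\circ\tilde\Phi_{i,m}$, so its $(1+O(1/m))$-factor comes from the near-identity \emph{outer} map $g_{i+1,m}$ and is extracted by redoing the Fa\'a-di-Bruno estimates with the bounds \cref{eq:beta=1,eq:beta>1}, whereas you insert $\Phi_{1,m}\circ G_{i-1,m}$ and obtain the factor $1$ exactly by writing $\Phi_{1,m}=\id+\rho_{h,m}$, so that the composition difference splits linearly and \Cref{lem:lem1} only has to be applied to $\rho_{h,m}$, all of whose $C^r$-data is $O(1/m)$. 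That device is cleaner than the paper's explicit polynomial manipulations in \cref{eq:wp:beta=1,eq:wp:beta>1}, and both rest on the same underlying fact, namely that the map composed on the outside differs from the identity by $O(1/m)$ in every derivative.

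Two points need repair before this is a complete proof. First, your induction quantity $a_i$ is a $C^r(\tilde W)$-norm, but nothing controls $G_{i-1,m}$ on $\tilde W$: \Cref{lem:Wtilde} only gives $G_{i,m}([0,T]\times W)\subseteq\tilde W$ for iterates started in $W$, and for $x\in\tilde W$ the point $G_{i-1,m}(h,x)$ need not lie in $\tilde W$ (or even in $U$, where $\Phi$ is defined), so \Cref{lem:lem1,lem:lem2} cannot be invoked there. The induction must be run in norms over the smaller set $W$, with the outer maps controlled on $\tilde W$, exactly as the paper does. Second, and more substantively, your spatial-only reading of $D^\gamma$ proves a weaker statement than the paper establishes and than \Cref{the:comp} consumes: the paper's induction \cref{eq:iv} is carried out in $\normtmax{\cdot}$, i.e., including $h$-derivatives, by passing to the time-extended maps $\tilde g_{i,m},\tilde G_{i,m},\tilde\Phi_{i,m}$, precisely because in the proof of \Cref{the:comp} the auxiliary lemmata are applied to these time-extended maps, whose derivative tuples contain $\partial_h$-components; so \cref{eq:WP} is needed for mixed space-time multi-indices. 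Your argument does survive this upgrade, since $\tilde\Phi_{1,m}(h,x)=(h,\Phi(h/m,x))$ is again $\id+\tilde\rho$ with space-time $C^r$-norm of $\tilde\rho$ of order $O(1/m)$, but the induction has to be set up in the space-time norm from the start. The Lipschitz caveat you flag at the end is real but benign: the $O(1/m)$ size of the top-order Lipschitz constant is automatic from the prefactor in $\rho_{h,m}(x)=\int_0^{h/m}f(\Phi(s,x))\,ds$, while its mere finiteness requires marginally more regularity than the stated $f\in C^r$ --- an issue the paper's own proof shares.
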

\begin{proof}
    First we show that under the assumptions of this lemma there is a constant $C_\Phi$ such that for all $D^\beta = \partial_{x_j}$
	\begin{align}
		\label{eq:beta=1}
		\sup_{(h,\tilde{x})\in [0,T]\times \tilde{W}}\abs{D^\beta g_{i,m}^{(k)}(h,\tilde{x})}\leq \delta_{k,j} + \frac{C_{\Phi}}{m}, 
	\end{align}
	and for all $1<\abs{\beta}<r$ or $D^\beta=\partial_t$
	\begin{align}
		\label{eq:beta>1}
		\sup_{(h,\tilde{x})\in [0,T]\times\tilde{W}}\abs{D^\beta g_{i,m}^{(k)}(h,\tilde{x})}\leq \frac{C_{\Phi}}{m}.
	\end{align}
    To do that, we observe with
	\begin{align*}
		\partial_t\Phi(t,x)=\dot{x}(t)=f(x(t))=f(\Phi(t,x))
	\end{align*}
	that $\Phi$ is $r+1$-times continuously differentiable as long as one derivative is a time derivative. Let $h \in [0,T]$ and $x \in \tilde{W}$. Let $i=1,...,m$. The triangle inequality and \cref{eq:CH} give
	\begin{align*}
		\abs{D^\beta g_{i,m}^{(k)}(h,x)}\leq \abs{D^\beta\left(g_{i,m}^{(k)}(h,x)-\Phi^{(k)}_{1,m}(h,x)\right)}+\abs{D^\beta\Phi_{1,m}^{(k)}(h,x)}\leq \frac{C}{m^2}+ \abs{D^\beta\Phi_{1,m}^{(k)}(h,x)}.
	\end{align*}
	In case we have $D^\beta=\partial_{x_j}$ for some $j=1,,..,n$. we use the Taylor-expansion of $\Phi$ with respect to time, to get
	\begin{align*}
		\abs{D^\beta g_{i,m}^{(k)}(h,x)}&\leq \frac{C}{m^2} + \abs{D^\beta\Phi_{1,m}^{(k)}(h,x)} \\
		&\leq \frac{C}{m^2}+\abs{\partial_{x_j}\left(x_k+\frac{h}{m}\partial_t\Phi(\theta,x)\right)}\\
		&\leq \frac{C}{m} + \delta_{k,j} + \frac{T}{m} \norm{\Phi}_{C^r_\text{max}([0,T]\times \tilde{W};\R^n)},
	\end{align*}
	with $\theta \in (0,h/m)$. Therefore, \cref{eq:beta=1} holds with $C_{\Phi}=C+ \max\{1,T\}\norm{\Phi}_{C^r_\text{max}([0,T]\times \tilde{W};\R^n)}$.\\
	Now we show \cref{eq:beta>1}. If $D^\beta=\partial_t$. we have
	\begin{align*}
		\abs{D^\beta g_{i,m}^{(k)}(h,x)}\leq \frac{C}{m^2}+ \abs{\frac{1}{m}\partial_t \Phi(h/m,x)}\leq \frac{C}{m}+ \frac{1}{m}\norm{\Phi}_{C^r_\text{max}([0,T]\times \tilde{W};\R^n)}\leq \frac{C_\Phi}{m}.
	\end{align*}
	The same is true for any higher derivative, that includes a time derivative. The only case left is $D^\beta$ only including spatial derivatives. Using Taylor's theorem again, we get
	\begin{align*}
		\abs{D^\beta g_{i,m}^{(k)}(h,x)}\leq \frac{C}{m^2}+\abs{D_x^\beta\left(x_k+\frac{h}{m}\partial_t\Phi(\theta,x)\right)}\leq \frac{C}{m^2}+\frac{T}{m}\norm{\Phi}_{C^r_\text{max}([0,T]\times \tilde{W};\R^n)}\leq\frac{C_\Phi}{m}.
	\end{align*}
	Finally, we show \cref{eq:WP}. For $m\geq m_0$ and $i<m$ we will prove \cref{eq:WP} by induction over $i$. We need to show 
	\begin{align*}
		\normtmax{G_{i+1,m}-\Phi_{i+1,m}} \leq 1.
	\end{align*}
	To be able to use \Cref{the:faacor}, we need to extend $\Phi,g_{i,m}, \Phi_{i,m}$ and $G_{i,m}$ by a time output, i.e. 
	\begin{align*}
		\tilde{f}(h,x):=\begin{pmatrix}
			h\\ f(h,x)
		\end{pmatrix}, \qquad f=\Phi,g_{i,m}, \Phi_{i,m}, G_{i,m},
	\end{align*}
	To do this, we observe with the triangle inequality
	\begin{align}
        \begin{aligned}
		\label{eq:wp1}
		&\normtmax{G_{i+1,m}-\Phi_{i+1,m}} \\
		=&\norm{\tilde{G}_{i+1,m}-\tilde{\Phi}_{i+1,m}}_{C_\text{max}^r([0,T]\times W;\R^{n+1})}\\
		\leq&
		\norm{\tilde{g}_{i+1,m} \circ \tilde{G}_{i,m} - \tilde{g}_{i+1,m} \circ \tilde{\Phi}_{i,m}}_{C_\text{max}^r([0,T]\times W;\R^{n+1})}\\ 
		&+
		\norm{\tilde{g}_{i+1,m} \circ \tilde{\Phi}_{i,m} - \tilde{\Phi}_{1,m} \circ \tilde{\Phi}_{i,m}}_{C_\text{max}^r([0,T]\times W;\R^{n+1})}.
        \end{aligned}
	\end{align}
	Now we estimate both summands separately.\\
    To bound $\norm{\tilde{g}_{i+1,m} \circ \tilde{G}_{i,m} - \tilde{g}_{i+1,m} \circ \tilde{\Phi}_{i,m}}_{C_\text{max}^r([0,T]\times W;\R^{n+1})}$, we observe with \Cref{the:faacor}
	\begin{align*}
		&\abs{D^\alpha \left( \tilde{g}_{i+1,m}^{(k)}\circ \tilde{G}_{i,m} - \tilde{g}_{i+1,m}^{(k)} \circ \tilde{\Phi}_{i,m}\right)(h,x)}\\
		=&\left\vert\sum_{\abs{\beta}=1}^{\abs{\alpha}}(D^\beta \tilde{g}_{i+1,m}^{(k)})(\tilde{G}_{i,m}(h,x))P_{\alpha,\beta,n+1}\left(\left[D^\gamma \tilde{G}_{i,m}^{(l)}(h,x)\right]_{\gamma,l}\right)\right.\\
		&\qquad\left.
		-(D^\beta \tilde{g}_{i+1,m}^{(k)})(\tilde{\Phi}_{i,m}(h,x))P_{\alpha,\beta,n+1}\left(\left[D^\gamma \tilde{\Phi}^{(l)}_{i,m(h,x)}\right]_{\gamma,l}\right)\right\vert\\
		\leq&\sum_{\abs{\beta}=1}^{\abs{\alpha}}\left\vert(D^\beta \tilde{g}_{i+1,m}^{(k)})(\tilde{G}_{i,m}(h,x))P_{\alpha,\beta,n+1}\left(\left[D^\gamma \tilde{\Phi}_{i,m}^{(l)}(h, x)\right]_{\gamma,l}\right)\right.\\
		&\qquad\left.-(D^\beta \tilde{g}_{i+1,m}^{(k)})(\tilde{\Phi}_{i,m}(h,x))P_{\alpha,\beta,n+1}\left(\left[D^\gamma \tilde{\Phi}_{i,m}^{(l)}(h,x)\right]_{\gamma,l}\right)\right\vert\\
		&+\left\vert(D^\beta \tilde{g}_{i+1,m}^{(k)})(\tilde{G}_{i,m}(h,x))P_{\alpha,\beta,n+1}\left(\left[D^\gamma \tilde{\Phi}_{i,m}^{(l)}(h, x)\right]_{\gamma,l}\right)\right.\\
		&\qquad\left.-(D^\beta \tilde{g}_{i+1,m}^{(k)})(\tilde{G}_{i,m}(h,x))P_{\alpha,\beta,n+1}\left(\left[D^\gamma \tilde{G}_{i,m}(h,x)\right]_{\gamma,l}\right)\right\vert\\
		\leq&
		\sum_{\abs{\beta}=1}^{\abs{\alpha}} \abs{P_{\alpha, \beta, n+1 }\left(\left[D^\gamma \tilde{\Phi}_{i,m}^{(l)}(h, x)\right]_{\gamma,l}\right)}\cdot
		\abs{(D^\beta \tilde{g}_{i+1,m})(\tilde{\Phi}_{i,m}(h,x))-(D^\beta \tilde{g}_{{i,m}})(\tilde{G}_{i,m}(h,x))}\\
		&+\abs{(D^\beta \tilde{g}^{(k)}_{{i,m}})(\tilde{G}_{i,m}(h,x))}\cdot \abs{P_{\alpha, \beta, n+1 }\left(\left[D^\gamma \tilde{\Phi}_{i,m}^{(l)}(h, x)\right]_{\gamma,l}\right)-P_{\alpha, \beta, n+1 }\left(\left[D^\gamma \tilde{G}_{i,m}^{(l)}(h,x)\right]_{\gamma,l}\right)}.
	\end{align*}
	For $\abs{\beta}=1$ with $\beta=e_j$. with \cref{eq:beta=1} and \Cref{lem:beta=1} we have
	\begin{align}
        \begin{aligned}            
		\label{eq:wp:beta=1}
		&\sum_{\abs{\beta}=1}^1 \abs{(D^\beta \tilde{g}^{(k)}_{i+1,m})(\tilde{G}_{i,m}(h,x))} \cdot
		\abs{P_{\alpha, \beta, n+1 }\left(\left[D^\gamma \tilde{\Phi}_{i,m}^{(l)}(h, x)\right]_{\gamma,l}\right)-P_{\alpha, \beta, n+1 }\left(\left[D^\gamma \tilde{G}_{i,m}^{(l)}(h,x)\right]_{\gamma,l}\right)}\\
		\leq&
		\sum_{j=1}^{n+1} \left(\delta_{k,j}+\frac{C_{\Phi}}{m}\right)\norm{\tilde{\Phi}_{i,m}-\tilde{G}_{i,m}}_{C_\text{max}^r([0,T]\times W;\R^{n+1})}\\
		\leq& \left(1+\frac{C_{\Phi}(n+1)}{m}\right)\norm{\tilde{\Phi}_{i,m}-\tilde{G}_{i,m}}_{C_\text{max}^r([0,T]\times W;\R^{n+1})}.
        \end{aligned}
	\end{align}
	For $\abs{\beta}>1$. we define
	\begin{align*}
		C_{\alpha,\beta}:= \sum_{s=1}^{\abs{\alpha}} \sum_{p_s(\alpha,\beta)}(\alpha!) (2^s-1) \max\left\{\max_{t \in [0,T]}\norm{\tilde{\Phi}}_{C_\text{max}^r([0,T]\times W;\R^{n+1})},1\right\}^{s-1},
	\end{align*}
	and observe with \Cref{the:faa}
	\begin{align}
        \begin{aligned}
		\label{eq:wp:beta>1}
		&\abs{P_{\alpha,\beta, n+1}\left(\left[D^\gamma \tilde{\Phi}^{(l)}(h, x)\right]_{\gamma,l}\right)-P_{\alpha, \beta, n+1 }\left(\left[D^\gamma \tilde{G}_{i,m}^{(l)}(h,x)\right]_{\gamma,l}\right)}\\
		=&\abs{\sum_{s=1}^{\abs{\alpha}}\sum_{p_s(\alpha,\beta)}(\alpha !)\left[\prod_{j=1}^s \frac{[D^{l_j}\tilde{\Phi}_{i,m}^{(l)}(h,x)]^{k_j}}{(k_j!)(l_j!)^{\abs{k_j}}}- \prod_{j=1}^s \frac{[D^{l_j}\tilde{G}_{i,m}^{(l)}(h,x)]^{k_j}}{(k_j!)(l_j!)^{\abs{k_j}}}\right] }\\
		\leq& \sum_{s=1}^{\abs{\alpha}}\sum_{p_s(\alpha,\beta)}(\alpha !) (2^s-1) \max\left\{\norm{\tilde{\Phi}_{i,m}}_{C_\text{max}^r([0,T]\times W;\R^{n+1})},\norm{\tilde{\Phi}_{i,m}-\tilde{G}_{i,m}}_{C_\text{max}^r([0,T]\times W;\R^{n+1})}\right\}^{s-1}\\ 
		&\qquad \qquad ~\cdot\norm{\tilde{\Phi}_{i,m}-\tilde{G}_{i,m}}_{C_\text{max}^r([0,T]\times W;\R^{n+1})}\\
		\leq& C_{\alpha,\beta} \norm{\tilde{\Phi}_{i,m}-\tilde{G}_{i,m}}_{C_\text{max}^r([0,T]\times W;\R^{n+1})},
        \end{aligned}
	\end{align}
	where we used
	\begin{align*}
		\abs{\prod_{i=1}^s a_i - \prod_{i=1}^s b_i}&\leq (2^s -1) \max \left\{\max_{i=1,..,s} \abs{b_i}, \max_{i=1,..,s}\abs{a_i-b_i}\right\}^{s-1} \abs{a_i-b_i},
	\end{align*}
	and $(k_j!)(l_j!)^{\abs{k_j}}\geq 1$ for $s=1,...,\abs{\alpha},~j=1,..,s$ and assumed
	\begin{align*}
		\norm{\tilde{\Phi}_{i,m}-\tilde{G}_{i,m}}_{C_\text{max}^r([0,T]\times W;\R^{n+1})}=\norm{\Phi_{i,m}-G_{i,m}}_{C_\text{max}^r([0,T]\times W;\R^{n+1})}\leq 1.
	\end{align*}
	So the latter assumption will have to be include in our induction hypotheses \cref{eq:iv}.
	Since $G_{i,m}(h,x) \in \tilde{W}$ for $h \in [0,T],~x \in W$. we get
	\begin{align}
		\label{eq:wp2}
		&\sum_{\abs{\beta}=1}^{\abs{\alpha}} \abs{(D^\beta \tilde{g}^{(k)}_{i+1,m})(\tilde{G}_{i,m}(h,x))} \cdot
		\abs{P_{\alpha, \beta, n+1 }\left(\left[D^\gamma \tilde{\Phi}_{i,m}^{(l)}(h, x)\right]_{\gamma,l}\right)-P_{\alpha, \beta, n+1 }\left(\left[D^\gamma \tilde{G}_{i,m}^{(l)}(x)\right]_{\gamma,l}\right)}\notag \\
		\leq& \left(1+ \frac{C_{\alpha,\Phi}}{m}\right)\norm{\tilde{\Phi}_{i,m}-\tilde{G}_{i,m}}_{C_\text{max}^r([0,T]\times W;\R^{n+1})}
	\end{align}
	using \cref{eq:beta>1,eq:wp:beta=1,eq:wp:beta>1} with
	\begin{align*}
		C_{\alpha,\Phi}:=\max \left\{\max_{1<\abs{\beta}\leq \abs{\alpha}} \{C_{\alpha,\beta}\}, (n+1)\right\}C_\Phi.
	\end{align*}
	To bound the second summand of \cref{eq:wp1}, i.e., $\norm{\tilde{g}_{i+1,m} \circ \tilde{\Phi}_{i,m} - \tilde{\Phi}_{1,m} \circ \tilde{\Phi}_{i,m}}_{C_\text{max}^r([0,T]\times W;\R^{n+1})}$ we observe
	\begin{align}
        \begin{aligned}
		\label{eq:wp3}
		&\abs{D^\alpha \left((\tilde{g}_{i+1,m} \circ \tilde{\Phi}_{i,m})(h,x) - (\tilde{\Phi}_{1,m}\circ \tilde{\Phi}_{i,m})(h,x)\right)}  \\
		=&\left\vert \sum_{\abs{\beta}=1}^{\abs{\alpha}}(D^\beta \tilde{g}_{i+1,m})(\tilde{\Phi}_{i,m}(h,x)) P_{\alpha,\beta,n+1}\left(\left[D^\gamma\tilde{\Phi}_{i,m}^{(l)}(h, x)\right]_{\gamma,l}\right)\right.  \\
		&\left.\qquad -(D^\beta \tilde{\Phi}_{1,m})(\tilde{\Phi}_{i,m}(h,x)) P_{\alpha,\beta,n+1}\left(\left[D^\gamma\tilde{\Phi}_{i,m}^{(l)}(h, x)\right]_{\gamma,l}\right)\right\vert   \\
		\leq &
		\sum_{\abs{\beta}=1}^{\abs{\alpha}}\abs{P_{\alpha, \beta, n+1 }\left(\left[D^\gamma \tilde{\Phi}_{i,m}^{(l)}(h, x)\right]_{\gamma,l}\right)}\cdot
		\abs{(D^\beta \tilde{g}_{i+1,m})(\tilde{\Phi}_{i,m}(h,x))-(D^\beta \tilde{\Phi}_{1,m})(\tilde{\Phi}_{i,m}(h,x))}  \\
		\leq &
		C_{\alpha, W_P} \norm{\tilde{g}_{i+1,m}-\tilde{\Phi}_{1,m}}_{C_\text{max}^r([0,T]\times W;\R^{n+1})}
		\leq \frac{C_{\alpha, W_P} C}{m^2},
        \end{aligned}
	\end{align}
	with
	\begin{align*}
		C_{\alpha,W_P}&:=\max_{\abs{\beta}\leq \abs{\alpha}} \sup_{(h,x_P) \in [0,T]\times W_P} \abs{P_{\alpha,\beta, n+1}(h,x_P)}
	\end{align*}
	Now let $m_0:=\lceil C_{r, W_P} C \exp(C_{r,\Phi}) \rceil$. with $C_{r,W_P}:=\max_{\abs{\alpha}\leq r}C_{\alpha,W_P}$ and $C_{r,\Phi}:=\max_{\abs{\alpha}\leq r}C_{\alpha,\Phi}$. We will prove by induction that for all $m\geq m_0, i\leq m$
	\begin{align}
		\label{eq:iv}
		\norm{G_{i,m}-\Phi_{i,m}}_{C_\text{max}^r([0,T]\times W;\R^{n})}\leq \left(1 + \frac{C_{r,\Phi}}{m}\right)^i\frac{i}{m \exp({C_{r,\Phi}})}\leq 1
	\end{align}
	holds. For $i=1$. we have
	\begin{align*}
		\normtmax{G_{i,m}-\Phi_{i,m}}&\leq\norm{g_{{1,m}}-\Phi_{1,m}}_{C_{\max}^r([0,T]\times\tilde{W};\R^n)}\leq \frac{C}{m^2}\\
		&\leq \left( 1+ \frac{C_{r,\Phi}}{m}\right)^1 \frac{1}{m\exp(C_{r,\Phi})}\leq 1.
	\end{align*}
	Going from $i$ to $i+1$. we observe with $m \geq m_0$.~\cref{eq:wp1,eq:wp2,eq:wp3}
	\begin{align*}
		&\normCrmax{G_{i+1}-\Phi((i+1)h/m,\cdot)}\\
		\leq&
		\norm{\tilde{g}_{i+1,m} \circ \tilde{G}_{i,m} - \tilde{g}_{i+1,m} \circ \tilde{\Phi}_{i,m}}_{C_\text{max}^r([0,T]\times W;\R^{n+1})}\\
		&+\norm{\tilde{g}_{i+1,m} \circ \tilde{\Phi}_{i,m} - \tilde{\Phi}_{1,m} \circ \tilde{\Phi}}_{C_\text{max}^r([0,T]\times W;\R^{n+1})}\\
		\leq& 
		\max_{\abs{\alpha}\leq r}\max_{1\leq k \leq n}\sup_{(h,x)\in [0,T]\times W}\abs{D^\alpha \left( \tilde{g}_{i+1,m}^{(k)}\circ \tilde{G}_{i,m} - \tilde{g}_{i+1,m}^{(k)} \circ \tilde{\Phi}_{i,m}\right)(h,x)}\\
		&+\max_{\abs{\alpha}\leq r}\max_{1\leq k \leq n}\sup_{(h,x)\in [0,T]\times W}\abs{D^\alpha \left((\tilde{g}_{i+1,m} \circ \tilde{\Phi}_{i,m})(h,x) - (\tilde{\Phi}_{1,m}\circ \tilde{\Phi}_{i,m})(h,x)\right)}\\
		\leq&\max_{\abs{\alpha}\leq r} \left(1+\frac{C_{\alpha,\Phi}}{m}\right)\norm{\tilde{\Phi}-\tilde{G}_{i,m}}_{C_\text{max}^r([0,T]\times W;\R^{n+1})}
		+\max_{\abs{\alpha}\leq r}\frac{C_{\alpha,W_P}C}{m^2}\\
		\leq&\left(1+\frac{C_{r,\Phi}}{m}\right)\normtmax{\Phi_{i,m}-G_{i,m}}+
		\frac{1}{m \exp(C_{r,\Phi})}\\
		\leq& \left(1+ \frac{C_{r,\Phi}}{m}\right)^{i+1}\frac{i+1}{m \exp(C_{r,\Phi})}\leq 1.
	\end{align*}
	Therefore, \cref{eq:WP} holds for all $m\geq m_0$.
\end{proof}
Note that we could choose any other finite radius to construct $\tilde{W}_P$. We chose $1$ to simplify calculations. Nevertheless, $m_0$ obviously depends on this choice. So if one wants to use \Cref{lem:Wp} in isolation there is a tradeoff between how small one can get $m_0$ and how small we choose the radius.\\
Now we are prepared to correct \cref{eqn:turaev_false}, by introducing additional assumptions, which are fulfilled in the symplectic polynomial approximation theorem anyway.
\begin{theorem}
	\label{the:comp}
	Let $n, m, r\in \N,~ U \subseteq \R^n$ open, $f \in C^r(U;\R^n)$ and $\Phi:\R^+_0 \times U \to \R^n$ be the phase flow of the ODE
	\begin{align*}
		\dot{x}(t)=f(x), \qquad x(0)=x_0,
	\end{align*}
	i.e.\ $\Phi(t,x_0)=x(t)$. Let $T>0$ such that $\Phi(t,x)$ exists for all $t \in [0,T],~x \in U$ and $\Phi([0,T]\times U) \subseteq U$. We define $\Phi_{i,m}(h,x):=\Phi(ih/m,x)$. Furthermore, let $g_i \in C^r([0,T]\times\R^n;\R^n)$, $g_{i,m}(h,x):=g_i(h/m,x)$ and $G_{i,m}(h,x):=g_{i,m}(h,g_{i-1,m}(h,...g_{1,m}(h,x)...))$ for $i=1,...,m$. For a compact set $W \subseteq U$ we define $W_\Phi:=\Phi([0,T],W)$ and $\tilde{W}:=\overline{B_R(W_\Phi)}$. with $R < \dist(W_\Phi,\partial U)$.
	Let there be constant $C<\infty$ independent of $m$ such that
	\begin{align}
		\label{eqn:compcond}
		\norm{g_{i,m}-\Phi_{1,m}}_{C^r([0,T]\times \tilde{W};\R^n)}\leq \frac{C}{m^2}
	\end{align}
	for $i=1,...,m$. Then there is $m_0 \in \N$ such that for all $m \geq m_0$ the difference between the composition $G_{m,m}$ and the phase flow $\Phi$ decreases with $1/m$, i.e.,
	\begin{align}
		\label{eq:comp}
		\normt{\Phi-G_{m,m}}\leq \frac{\tilde{C}}{m},
	\end{align}
	where $\tilde{C}$ does not depend on $m$.
\end{theorem}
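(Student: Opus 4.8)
The plan is to leverage the machinery already assembled in \Cref{lem:Wtilde,lem:Wp}: the one-step estimate that drives the induction in the proof of \Cref{lem:Wp} is essentially the recursion we need, and the only new ingredient is to solve it \emph{without} discarding the quadratic local error. First I would pass from $\normt{\cdot}$ to the equivalent maximum norm via $\normtmax{\cdot}\leq\normt{\cdot}\leq N_r\normtmax{\cdot}$, where $N_r:=\#\{\alpha\in\N_0^{n+1}:\abs{\alpha}\leq r\}$ counts the multi-indices in time and space; this is harmless and lets me reuse the constants $C_{r,\Phi}$ and $C_{r,W_P}$ verbatim. I would also record that $\Phi_{m,m}=\Phi$, so the left-hand side of \cref{eq:comp} equals $\normt{\Phi_{m,m}-G_{m,m}}$.

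Since $\normtmax{\cdot}\leq\normt{\cdot}$, the hypothesis \cref{eqn:compcond} implies the $C^r_{\max}$-bounds required by \Cref{lem:Wtilde,lem:Wp}. Hence there is $m_0$ such that for every $m\geq m_0$ and $i\leq m$ one has $G_{i,m}([0,T],W)\subseteq\tilde W$ and the derivative tuples $[D^\gamma G_{i,m}^{(j)}(W)]$ lie in $W_P$; these are exactly the containments that license the Faà-di-Bruno estimates. I would then isolate from the proof of \Cref{lem:Wp} the one-step inequality obtained by combining \cref{eq:wp1,eq:wp2,eq:wp3}, \emph{before} the coarsening step. Writing $a_i:=\normtmax{G_{i,m}-\Phi_{i,m}}$, this reads
\begin{align*}
a_{i+1}\leq \left(1+\frac{C_{r,\Phi}}{m}\right)a_i+\frac{C_{r,W_P}\,C}{m^2},\qquad a_1\leq \frac{C}{m^2},
\end{align*}
valid for all $m\geq m_0$ and $i<m$.

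The decisive difference from \Cref{lem:Wp} is that I would not replace the consistency term $C_{r,W_P}C/m^2$ by $1/(m\exp(C_{r,\Phi}))$; that substitution was made there only to reach the crude bound \cref{eq:iv} $\leq 1$, and it destroys the rate. Instead I solve the linear recursion exactly by discrete Grönwall: with $\lambda:=1+C_{r,\Phi}/m$,
\begin{align*}
a_m\leq \lambda^{\,m-1}a_1+\frac{C_{r,W_P}\,C}{m^2}\sum_{k=0}^{m-2}\lambda^{k}=\lambda^{\,m-1}a_1+\frac{C_{r,W_P}\,C}{m^2}\cdot\frac{\lambda^{\,m-1}-1}{\lambda-1}.
\end{align*}
Using the monotone bound $\lambda^{\,m-1}\leq\exp(C_{r,\Phi})$ together with $\lambda-1=C_{r,\Phi}/m$, the first summand is $O(1/m^2)$ while the second equals $\frac{C_{r,W_P}C}{m\,C_{r,\Phi}}\left(\exp(C_{r,\Phi})-1\right)=O(1/m)$, so $a_m\leq \hat C/m$ with $\hat C$ independent of $m$. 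Converting back yields $\normt{\Phi-G_{m,m}}\leq N_r a_m\leq \tilde C/m$ with $\tilde C:=N_r\hat C$, which is \cref{eq:comp}.

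The main obstacle — precisely the reason the naive bound \cref{eqn:turaev_false} fails in $C^r$ — is already absorbed into \Cref{lem:Wp}: the per-step stability factor must be $1+O(1/m)$ rather than a fixed constant $L>1$, since an $L>1$ compounded over $m$ steps would blow up like $L^m$. That the factor is $1+C_{r,\Phi}/m$ rests on the refined estimates \cref{eq:beta=1,eq:beta>1}, which show that every derivative of $g_{i,m}$ differs from the corresponding derivative of the identity by only $O(1/m)$ (because $g_{i,m}\approx\Phi_{1,m}$ is $O(1/m)$-close to the identity flow). Granting that, the sole remaining care is the bookkeeping above: keeping the local error at order $1/m^2$ so that its $m$-fold accumulation produces the advertised global rate $1/m$ instead of a constant.
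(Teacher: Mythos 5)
Your proposal is correct, but it takes a genuinely different route from the paper's own proof of \Cref{the:comp}. The paper argues by a telescoping (``Lady Windermere's fan'') decomposition: it writes $\tilde{\Phi}_{m,m}-\tilde{G}_{m,m}$ as the sum of $(\tilde{\Phi}_{1,m}-\tilde{g}_{m,m})\circ\tilde{G}_{m-1,m}$ and the terms $\tilde{\Phi}_{i,m}\circ\tilde{g}_{m-i,m}\circ\tilde{G}_{m-i-1,m}-\tilde{\Phi}_{i,m}\circ\tilde{\Phi}_{1,m}\circ\tilde{G}_{m-i-1,m}$, $i=0,\dots,m-1$, and bounds each of these $m$ terms by $O(1/m^2)$ using \Cref{lem:lem1} (same outer function, different inner functions) and \Cref{lem:lem2} (different outer functions, same inner function); the decisive point there is that the outer function in every summand is the exact flow $\tilde{\Phi}_{i,m}$, whose $C^r$-data is bounded uniformly in $i$ and $m$, so no constant degrades and summing $m$ terms of size $O(1/m^2)$ gives the rate. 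You never invoke \Cref{lem:lem1,lem:lem2}: instead you extract the one-step error-propagation inequality $a_{i+1}\leq(1+C_{r,\Phi}/m)\,a_i+C_{r,W_P}C/m^2$, with $a_i:=\normtmax{G_{i,m}-\Phi_{i,m}}$, from the interior of the proof of \Cref{lem:Wp}, and solve it exactly by discrete Gr\"onwall rather than coarsening the consistency term to $1/(m\exp(C_{r,\Phi}))$ as that proof does (which is precisely what reduces its conclusion to the $O(1)$ bound \cref{eq:iv}). The bootstrap you describe is sound: the validity of that recursion requires $a_i\leq 1$ together with the containments $G_{i,m}([0,T]\times W)\subseteq\tilde{W}$ and \cref{eq:WP}, and all of these are supplied by \Cref{lem:Wtilde,lem:Wp} for $m\geq m_0$, so re-solving the recursion with the $1/m^2$ term retained is legitimate, and your norm-equivalence bookkeeping between $\normt{\cdot}$ and $\normtmax{\cdot}$ at both ends is correct. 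The trade-off: the paper's route uses \Cref{lem:Wp} only through its stated conclusion and localizes the rate argument inside the theorem, at the price of the two extra composition lemmata; your route is shorter and reveals that the proof of \Cref{lem:Wp} already contains the convergence estimate, but it leans on intermediate inequalities of that proof (\cref{eq:wp1,eq:wp2,eq:wp3} and \cref{eq:iv}) rather than on stated results only, so to make it self-contained one would first have to promote the one-step inequality to a lemma in its own right.
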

\begin{proof}
	For this proof we will use the same notation as used in \Cref{lem:Wp}, i.e., $\Phi_{i,m},~G_{i,m},~g_{i,m},~\tilde{\Phi}_{i,m}$, $\tilde{G}_{i,m},~\tilde{g}_{i,m}$.
	Let $m_0 \in \N$ such that it satisfies the conditions of \Cref{lem:Wtilde} and \Cref{lem:Wp}.
	With \cref{eqn:compcond} and \Cref{lem:Wtilde} we ensure $G_{i,m}([0,T]\times W)\subseteq \tilde{W}$. Furthermore, $\Phi([0,T],W)\subseteq \tilde{W}$ by construction.
	Again, with \cref{eqn:compcond} the conditions of \Cref{lem:Wp} are fulfilled. Therefore, $\tilde{W}_P$ is well defined and compact.
	Also, all derivatives $(D^\beta f^{(k)})([0,T],\tilde{W})$. for $f=\Phi, G_{i,m}$ are always in $\tilde{W}_P$.\\
	Using the triangle inequality repeatedly, we have
	\begin{align*}
		&\normt{\Phi-G_{m,m}}=\norm{\tilde{\Phi}_{m.m}-\tilde{G}_{m,m}}_{C^r([0,T]\times W; \R^{n+1})}\\
		\leq&\norm{(\tilde{\Phi}_{1,m}-\tilde{g}_{m,m})\circ \tilde{G}_{m-1,m}}_{C^r([0,T]\times W; \R^{n+1})}\\
		&+\sum_{i=0}^{m-1}\norm{\tilde{\Phi}_{i,m}\circ \tilde{g}_{m-i,m} \circ \tilde{G}_{m-i-1,m}-
			\tilde{\Phi}_{i,m}\circ \tilde{\Phi}_{1,m}\circ \tilde{G}_{m-i-1,m}}_{C^r([0,T]\times W; \R^{n+1})}
	\end{align*}
	To estimate the right hand side, we observe each summand individually. We start with the easier part, i.e., the first summand. \Cref{lem:Wtilde} and \Cref{lem:Wp} ensure, that the conditions of \Cref{lem:lem2} are fulfilled, if we choose $f_1:=\tilde{\Phi}_{1,m}^{(j)},~f_2:=\tilde{g}_{m,m}^{(j)},~g:=\tilde{G}_{m-1,m}$ for $j=1,..,n$. Using \Cref{lem:lem2} leads to
	\begin{align*}
		&\normt{(\tilde{\Phi}_{1,m}-\tilde{g}_{{m}})\circ \tilde{G}_{m-1,m}}\\
		=&\sum_{\abs{\beta}\leq r}\max_{1\leq j\leq n}\sup_{(h,x)\in[0,T]\times W}
		\abs{D^\beta\left(\tilde{g}_{m,m}^{(j)}\circ\tilde{G}_{m-1,m}\right)(h,x)-D^\beta\left(\tilde{\Phi}_{1,m}^{(j)}\circ\tilde{G}_{m-1,m}\right)(h,x)}\\
		\leq& \sum_{\abs{\beta}\leq r}\max_{1\leq j\leq n} C_{\tilde{W},\tilde{W}_P,\beta}\norm{\tilde{g}_{m,m}^{(j)}-\tilde{\Phi}_{1,m}^{(j)}}_{C^r([0,T]\times\tilde{W})}.
	\end{align*}
	Now we take a closer look at the second summand
	\begin{align*}
		&\normt{\tilde{\Phi}_{i,m}\circ \tilde{g}_{m-i,m} \circ \tilde{G}_{m-i-1,m}-
			\tilde{\Phi}_{i,m}\circ \tilde{\Phi}_{1,m}\circ \tilde{G}_{m-i-1,m}}\\
		=&\sum_{\abs{\alpha}\leq r} \max_{1\leq k\leq n}\sup_{(h,x) \in [0,T]\times W}\left\vert D^\alpha\left(\tilde{\Phi}_{i,m}^{(k)}\circ \tilde{g}_{m-i,m} \circ \tilde{G}_{m-i-1,{m}}\right)(h,x)\right.\\
		&\qquad\qquad\qquad\qquad\qquad\qquad\left.- D^\alpha \left(\tilde{\Phi}^{(k)}_{i,m}\circ \tilde{\Phi}_{1,m} \circ \tilde{G}_{m-i-1,m}\right)(h,x)\right\vert.
	\end{align*}
	We want to apply \Cref{lem:lem1} with $f:=\tilde{\Phi}_{i,m}^{(k)},~ g_1:=\tilde{g}_{m-i,m}\circ \tilde{G}_{m-i-1,m},~g_2:=\tilde{\Phi}_{1,m}\circ \tilde{G}_{m-i-1,{m}}$. With \Cref{lem:Wtilde} and \Cref{lem:Wp} it is easy to see that almost all the assumptions of \Cref{lem:lem1} are fulfilled by our choices of $f,g_1,g_2, \tilde{W}$ and $\tilde{W}_P$. We only need to show $\tilde{\Phi}_{1,m}\circ \tilde{G}_{i,m} \in [0,T]\times \tilde{W}$. To do this, we use \cref{eqn:LPhi,eq:ivWtilde} and observe
	\begin{align*}
		&\norm{\tilde{\Phi}_{1,m}\circ \tilde{G}_{i,m}- \tilde{\Phi}_{i+1,m}}_{C([0,T]\times W;\R^{n+1})}\\
        &=\sup_{(h,x)\in [0,T]\times W}\norm{{\Phi}(h/m,{G}_{i,m}(h,x))- {\Phi}(h/m,\Phi_{i,m}(h,x))}_\infty\\
		&\leq \left(1+ \frac{C_\Phi}{m}\right)\norm{G_{i,m}-\Phi_{i,m}}_{C([0,T]\times W; \R^n)}\\
		&\leq \left(1+ \frac{C_\Phi}{m}\right)^{i+1} \frac{iR}{m\exp(C_\Phi)}\leq R.
	\end{align*}
	Therefore, $\tilde{\Phi}_{1,m}\circ \tilde{G}_{i,m} \in [0,T]\times \tilde{W}$. and by applying \Cref{lem:lem1}, we get
	\begin{align*}
		&\sup_{(h,x)\in [0,T]\times W}\abs{D^\alpha\left(\tilde{\Phi}_{i,m}^{(k)}\circ \tilde{g}_{m-i,m} \circ \tilde{G}_{m-i-1,{m}}\right)(h,x)-D^\alpha \left(\tilde{\Phi}^{(k)}_{i,m}\circ \tilde{\Phi}_{1,m} \circ \tilde{G}_{m-i-1,m}\right)(h,x)}\\
		\leq& C_{\tilde{W},\tilde{W}_P,\alpha,\tilde{\Phi}^{(k)}_{i,m}}
		\norm{\tilde{g}_{m-i,m}\circ \tilde{G}_{m-i-1,m}- \tilde{\Phi}_{1,m}\circ \tilde{G}_{m-i-1,m}}_{C^r([0,T]\times W; \R^{n+1})}.
	\end{align*}
	If we take a closer look at the proof of \Cref{lem:lem1}, we observe that $C_{\tilde{W},\tilde{W}_P,\alpha,\tilde{\Phi}^{(k)}_{i,m}}$ depends on $\tilde{\Phi}^{(k)}_{i,m}$ through its Lipschitz constant on $\tilde{W}$ and $\norm{D^\beta\Phi^{(k)}_{i,m}}_{C(\tilde{W})}$ for $\abs{\beta}\leq \abs{\alpha}$. Hence, $C_{\tilde{W},\tilde{W}_P,\alpha,\tilde{\Phi}^{(k)}_{i,m}}\leq C_{\tilde{W},\tilde{W}_P,\alpha,\tilde{\Phi}^{(k)}}$ follows because $\tilde{\Phi}^{(k)}_{i,m}(h,x)=\tilde{\Phi}^{(k)}(ih/m,x)$. Now we can apply \Cref{lem:lem2} with $f_1:=\tilde{g}^{(j)}_{m-i,m},~f_2=\tilde{\Phi}_{1,m}^{(j)},~g:=\tilde{G}_{m-i-1,m}$. as the condition on $\bar{W}_P$ is fulfilled by $\tilde{W}_P$ through \Cref{lem:Wp}.
	\begin{align*}
		&\norm{\tilde{g}_{m-i,m}\circ \tilde{G}_{m-i-1,m}- \tilde{\Phi}_{1,m}\circ \tilde{G}_{m-i-1,m}}_{C^r([0,T]\times W; \R^{n+1})}\\
		=&\sum_{\abs{\beta}\leq r} \max_{1\leq j\leq n}\sup_{(h,x) \in [0,T]\times W}\abs{D^\beta\left(\tilde{g}_{m-i,m}^{(j)}\circ\tilde{G}_{m-i-1,m}\right)(h,x)-D^\beta\left(\tilde{\Phi}_{1,m}^{(j)}\circ\tilde{G}_{m-i-1,m}\right)(h,x)}\\
		\leq&\sum_{\abs{\beta}\leq r} \max_{1\leq j\leq n}C_{\tilde{W},\tilde{W}_P,\beta}\norm{\tilde{g}_{m-i,m}^{(j)}-\tilde{\Phi}_{1,m}^{(j)}}_{C^r([0,T]\times\tilde{W})}.
	\end{align*}
	In total, we arrive at
	\begin{align*}
		&\normt{\Phi-G_m}=\norm{\tilde{\Phi}_{m,m}-\tilde{G}_{m,m}}_{C^r([0,T]\times W; \R^{n+1})}\\
		\leq&\norm{(\tilde{\Phi}_{1,m}-\tilde{g}_{m,m})\circ \tilde{G}_{m-1,m}}_{C^r([0,T]\times W; \R^{n+1})}\\
		&+\sum_{i=0}^{m-1}\norm{\tilde{\Phi}_{i,m}\circ \tilde{g}_{m-i,m} \circ \tilde{G}_{m-i-1,m}-
			\tilde{\Phi}_{i,m}\circ \tilde{\Phi}_{1,m}\circ \tilde{G}_{m-i-1,m}}_{C^r([0,T]\times W; \R^{n+1})}\\
		\leq&\sum_{\abs{\beta}\leq r}\max_{1\leq j\leq n} C_{\tilde{W},\tilde{W}_P,\beta}\norm{\tilde{g}_{m,m}^{(j)}-\tilde{\Phi}_{1,m}^{(j)}}_{C^r([0,T]\times\tilde{W})}\\
		&+\sum_{i=0}^{m-1}\sum_{\abs{\alpha}\leq r}\max_{1\leq k\leq n}C_{\tilde{W},\tilde{W}_P,\alpha,\tilde{\Phi}^{(k)}}\sum_{\abs{\beta}\leq r} \max_{1\leq j\leq n}C_{\tilde{W},\tilde{W}_P,\beta}\norm{\tilde{g}_{m-i,m}^{(j)}-\tilde{\Phi}_{1,m}^{(j)}}_{C^r([0,T]\times\tilde{W})}\\
		\leq&\bar{C}m\max_{i=1,..,m}\norm{g_{i,m}-\Phi_{1,m}}_{C^r([0,T]\times \tilde{W};\R^n)}
	\end{align*}
	with 
	\begin{align*}
		\bar{C}:=\left(1+\sum_{\abs{\alpha}\leq r}\max_{1\leq k\leq n}C_{\tilde{W},\tilde{W}_P,\alpha,\tilde{\Phi}^{(k)}}\right)\sum_{\abs{\beta}\leq r}C_{\tilde{W},\tilde{W}_P,\beta}.
	\end{align*}
	Finally, \cref{eq:comp} follows with \cref{eqn:compcond} and $\tilde{C}:=\bar{C}C$.
\end{proof}
\Cref{the:comp} holds for non-autonomous ODEs as well. This is shown in the following corollary.
\begin{cor}
	\label{the:compcor}
	Let $n, m, r\in \N,~ U \subseteq \R^n$ open, $W_t \subseteq \R$ compact, $f \in C^r(U \times W_t;\R^n)$ and $\Phi:\R^+_0 \times W_t\times U \to \R^n$ be the phase flow of the ODE
	\begin{align}
		\label{eq:nauto}
		\dot{x}(t)=f(x,t), \qquad x(t_0)=x_0,
	\end{align}
	i.e.\ $\Phi(t,t_0,x_0)=x(t)$. Let $T>0$ such that $\Phi(t,t_0,x)$ exists for all $t \in [0,T],~t_0\in W_t,~x \in U$ and $\Phi([0,T]\times W_t\times U) \subseteq U$. We define $\Phi_{1,i,m}(h,t,x):=\Phi(h/m,t+ih/m,x)$. Furthermore, let $g_i \in C^r([0,T]\times \R\times\R^n;\R^n)$, $g_{i,m}(h,t,x):=g_i(h/m,t,x)$ and 
	\begin{align*}
		G_{i,m}(h,t,x):=g_{i,m}\left(h,t+\frac{(i-1)h}{m},g_{i-1,m}\left(h,t+\frac{(i-2)h}{m},...,g_{1,m}(h,t,x)...\right)\right)
	\end{align*}
	for $i=1,...,m$. For a compact set $W \subseteq U$ we define $W_\Phi:=\Phi([0,T],W_t,W)$ and $\tilde{W}:=\overline{B_R(W_\Phi)}$. with $R < \dist(W_\Phi,\partial U)$. Also, we define $\tilde{W}_t:=\left\{t_0+t \strich t_0 \in W_t,~t \in [0,T]\right\}$.
	Let there be constant $C<\infty$ independent of $m$ such that
	\begin{align*}
		\norm{g_{i,m}-\Phi_{1,i,m}}_{C^r([0,T]\times \tilde{W}_t \times \tilde{W};\R^n)}\leq \frac{C}{m^2}
	\end{align*}
	for $i=1,...,m$. Then there is $m_0 \in \N$ such that for all $m \geq m_0$ the difference between the composition $G_{m,m}$ and the phase flow $\Phi$ decreases with $1/m$, i.e.,
	\begin{align*}
		\norm{\Phi-G_{m,m}}_{C^r([0,T]\times W_t \times W;\R^{n})}\leq \frac{\tilde{C}}{m},
	\end{align*}
	where $\tilde{C}$ does not depend on $m$.
\end{cor}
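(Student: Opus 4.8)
The plan is to reduce the non-autonomous statement to the autonomous \Cref{the:comp} by the classical device of promoting time to a dependent variable. I would introduce the augmented state $\hat{x}=(x,s)\in\R^{n+1}$ together with the \emph{autonomous} field $\hat{f}(x,s):=(f(x,s),1)$, whose phase flow is $\hat{\Phi}(\tau,(x,s))=(\Phi(\tau,s,x),\,s+\tau)$, the last coordinate merely recording the running time. Correspondingly I set $\hat{g}_i(\tau,(x,s)):=(g_i(\tau,s,x),\,s+\tau)$, so that $\hat{g}_{i,m}(h,(x,s))=\hat{g}_i(h/m,(x,s))=(g_{i,m}(h,s,x),\,s+h/m)$ has precisely the product structure demanded by \Cref{the:comp}, and the augmented single step $\hat{\Phi}_{1,m}(h,(x,s))=(\Phi(h/m,s,x),\,s+h/m)$ agrees with $\hat{g}_{i,m}$ in its last component. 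Because $f\in C^r(U\times W_t;\R^n)$, the field $\hat f$ is $C^r$ on $U\times I$ for any open interval $I\supseteq\tilde W_t$, which serves as the open spatial domain of the autonomous problem; the relevant compact set is $W\times W_t$ and its enlargement is $\tilde W\times\tilde W_t$.

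Next I would verify that the augmented composition reproduces the non-autonomous one. Since $\hat\Phi$ and every $\hat g_{i,m}$ advance the last coordinate by exactly $h/m$, launching the composition at $(x,t)$ feeds the time slot $t+(i-1)h/m$ into the $i$-th factor -- exactly the argument prescribed in the definition of $G_{i,m}$ -- so that $\hat{G}_{i,m}(h,(x,t))=(G_{i,m}(h,t,x),\,t+ih/m)$, while the semigroup (cocycle) property of $\hat\Phi$ gives $\hat{\Phi}_{m,m}(h,(x,t))=(\Phi(h,t,x),\,t+h)$. The last coordinates then cancel identically, whence $\norm{\hat{\Phi}_{m,m}-\hat{G}_{m,m}}_{C^r([0,T]\times(W\times W_t);\R^{n+1})}=\norm{\Phi-G_{m,m}}_{C^r([0,T]\times W_t\times W;\R^{n})}$. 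It therefore suffices to apply \Cref{the:comp} to this $(n+1)$-dimensional autonomous system and to read off the first $n$ components; the constants $m_0$ and $\tilde C$ are inherited from that application, and \Cref{lem:Wtilde} together with \Cref{lem:Wp} keeps the intermediate iterates inside $\tilde W\times\tilde W_t$.

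It remains to transfer the hypothesis, and this is where I expect the main difficulty. \Cref{the:comp} requires the per-step estimate $\norm{\hat g_{i,m}-\hat\Phi_{1,m}}_{C^r}\le C/m^2$; as the last components coincide, this is a bound on the first $n$ components, namely on $g_i(h/m,\cdot,\cdot)$ against a single-step flow. The subtlety is that the assumption is phrased with the \emph{shifted} flow $\Phi_{1,i,m}(h,t,x)=\Phi(h/m,t+ih/m,x)$, so one must check, stage by stage, that this shifted step is exactly the single-step flow against which the $i$-th augmented factor is compared. This reconciliation is governed by the semigroup property of $\hat\Phi$ and by the indexing of the time slots $t+(i-1)h/m$, and it is precisely here that the enlarged time set $\tilde W_t=\{t_0+t : t_0\in W_t,\ t\in[0,T]\}$ is needed: it is large enough to contain every time slot produced by the composition, so that the stated bound delivers the uniform per-step control that \Cref{the:comp} consumes. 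Once this identification is settled, the openness and compactness requirements of \Cref{the:comp} for the augmented system follow directly from the hypotheses on $f$, $U$, $W$ and the $C/m^2$ bound, and the conclusion $\norm{\Phi-G_{m,m}}_{C^r([0,T]\times W_t\times W;\R^n)}\le\tilde C/m$ is immediate.
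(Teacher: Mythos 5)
Your proposal is correct and is essentially the paper's own proof: the paper likewise autonomizes the system by appending time as a state variable, $\hat{f}(y)=(f(x,t),1)^T$, sets $\hat{g}_{i,m}\left(h,(x,t)\right)=\left(g_{i,m}(h,t,x),\,t+h/m\right)$ and $\hat{W}=\tilde{W}\times\tilde{W}_t$, and then applies \Cref{the:comp} to the augmented $(n+1)$-dimensional system. The only divergence is at the hypothesis-transfer step you single out as the main difficulty: the paper disposes of it in one line by asserting the identity $\norm{\hat{g}_{i,m}-\hat{\Phi}_{1,m}}_{C^r([0,T]\times\hat{W};\R^{n+1})}=\norm{g_{i,m}-\Phi_{1,i,m}}_{C^r([0,T]\times \tilde{W}_t \times \tilde{W};\R^n)}\leq C/m^2$, i.e.\ it identifies the shifted step $\Phi_{1,i,m}$ with the augmented single step without carrying out the stage-by-stage reconciliation you call for.
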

\begin{proof}
	We can rewrite \cref{eq:nauto} as an autonomous ODE by extending the phase space
	\begin{align}
		\label{eq:extode}
		\dot{y}=
		\begin{pmatrix}
			\dot{x}\\
			\dot{t}
		\end{pmatrix}=
		\begin{pmatrix}
			f(x,t)\\
			1
		\end{pmatrix}=\hat{f}(y).
	\end{align}
	This way we have $\Phi(t, t_0, x_0)=\hat{\Phi}(t, (x_0^T,t_0)^T)$. with $\hat{\Phi}$ being the phase flow of \cref{eq:extode}. Let 
	\begin{align*}
		\hat{g}_{i,m}\left(h, \begin{pmatrix}
			x\\ t
		\end{pmatrix}\right):=\begin{pmatrix}
			g_{i,m}(h,t,x)\\ t+ \frac{h}{m}
		\end{pmatrix},
	\end{align*}
	$\hat{\Phi}_{1,m}(h,x)=\hat{\Phi}(h/m,x),~\hat{G}_{i,m}:=\hat{g}_{i,m}(h,\hat{g}_{i-1,m}(h,...\hat{g}_{1,m}(h,m)...))$ and $\hat{W}:=\tilde{W}\times \tilde{W}_t$. Then we have
	\begin{align*}
		\norm{\hat{g}_{i,m}-\hat{\Phi}_{1,m}}_{C^r([0,T]\times\hat{W};\R^{n+1})}=\norm{g_{i,m}-\Phi_{1,i,m}}_{C^r([0,T]\times \tilde{W}_t \times \tilde{W};\R^n)}\leq \frac{C}{m^2}.
	\end{align*}
	Hence, the conditions of \Cref{the:comp} are fulfilled, and we get
	\begin{align*}
		\norm{\Phi-G_{m,m}}_{C^r([0,T]\times\tilde{W}_t\times\tilde{W};\R^n)}=\norm{\hat{\Phi}-\hat{G}_{m,m}}_{C^r([0,T]\times \hat{W};\R^{n+1})}\leq \frac{\tilde{C}}{m}
	\end{align*}
\end{proof}
Now the proof of \Cref{the:turaev2} can be done, following the proof in \cite{Tur03}, but using \Cref{the:compcor} instead of \cref{eqn:turaev_false}.

\section{Theory of time-adaptive SympNets}
\label{sec:app}
SympNets \cite{JinZZetal20} focus on Hamiltonian systems given by
\begin{align}
	\label{eqn:Hsys}
	\dot{x}(t)= J^{-1} \nabla H(x(t)), \qquad x(t_0)=x_0
\end{align}
with $x= (p^T, q^T)^T \in \R^{2d}$ and 
\begin{align*}
	J=\begin{pmatrix}
		0 & I_d\\
		-I_d & 0
	\end{pmatrix}\in \R^{2d \times 2d}.
\end{align*}
They are trying to approximate the flow of a Hamiltonian systems \eqref{eqn:Hsys} for a given fixed time-step $h$. Since the flow $\Phi$ of a Hamiltonian system is symplectic \cite{HaiLW06}, SympNets were designed to be symplectic by construction of their architecture. We will not recap the definition of SympNets here as their definitions are basically given by \Cref{def:sympnet,def:old_la_sympnet} if we fix the time step $h=1$.\\
In this work, we want to extend the SympNet architecture proposed in \cite{JinZZetal20} such that it is able to handle different time steps $h$ and call this extension time-adaptive SympNets. We also want to enforce $\psi(0,x)=x$ for any time-adaptive SympNet, since this is a fundamental property of the flow $\Phi$.
\subsection{Time-adaptive SympNets for autonomous Hamiltonian systems}
\label{subsec:td_SympNet}
There are two kinds of SympNets proposed in \cite{JinZZetal20}, LA-SympNets and G-SympNets. We will consider G-SympNets first, since we will use them in the proof of \Cref{the:g-sympnet}.
To define the time-adaptive G-SympNets we have to look at time-adaptive symplectic gradient modules first. It is recommended by \cite{JinZZetal20} to insert a time step $t$ into the symplectic modules to deal with irregular sampled data in the following way.

\begin{defi}[Time-adaptive G-SympNet, \cite{JinZZetal20}]\label{def:sympnet}
	For $n \in \N$ and a given activation function $\sigma$ we define upper and lower gradient modules by
	\begin{align*}
		\mathcal{G}_\text{up}\left(h,\begin{pmatrix}
			p\\q
		\end{pmatrix}\right):=\begin{bmatrix}
			I & h\hat{\sigma}_{K,a,b}\\
			0 & I
		\end{bmatrix}\begin{pmatrix}
			p\\ q
		\end{pmatrix}, \qquad
		\mathcal{G}_\text{low}\left(h,\begin{pmatrix}
			p\\q
		\end{pmatrix}\right):=\begin{bmatrix}
			I & 0\\
			h\hat{\sigma}_{K,a,b} & I
		\end{bmatrix}\begin{pmatrix}
			p\\ q
		\end{pmatrix},
	\end{align*}
	with $\hat{\sigma}_{K,a,b}(x):=K^T \diag (a)\sigma(K x+b)$, where $K \in \R^{n \times d},~a, b \in \R^n$. The set of gradient modules $\mathcal{M}_G$ is given by
	\begin{align*}
		\mathcal{M}_{\text{G}}:=\left\{u \vert u \text{ is a gradient module}\right\}.
	\end{align*}
	Now we can define the set of \textbf{time-adaptive G-SympNets (TG-SympNets)} as
	\begin{align*}
		\Psi_{\text{G}}:=\left\{\psi(h,x) =(u_k(h,\cdot) \circ ... \circ u_1(h,\cdot))(x)\vert u_k,...,u_1 \in \mathcal{M}_G, k \in \N\right\}
	\end{align*}
\end{defi}
\begin{figure}
	\centering
	\includegraphics[width=\textwidth]{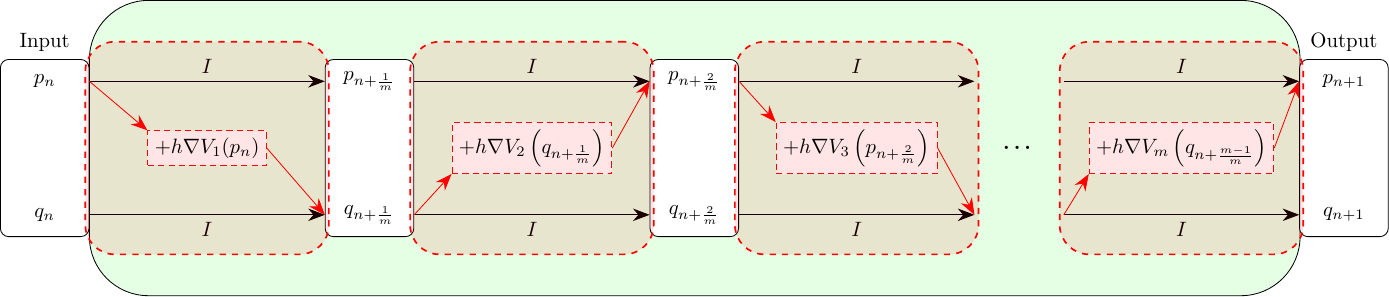}
	\caption{Architecture of TSympNets for autonomous separable Hamiltonian systems}
	\label{fig:SympNet}
\end{figure}
Time-adaptive LA-SympNets can be defined similarly.
\begin{defi}[Original TLA-SympNet \cite{JinZZetal20}]
    \label{def:old_la_sympnet}
	First we define linear modules by
	\begin{align*}
		\tilde{\mathcal{L}}_n^{\text{up}}\left(h,\begin{pmatrix}
			p\\ q
		\end{pmatrix}\right)&=
		\begin{pmatrix}
			I & 0/hS_n\\ hS_n/0 & I
		\end{pmatrix}...\begin{pmatrix}
			I & 0\\ hS_2 & I
		\end{pmatrix}
		\begin{pmatrix}
			I & hS_1\\ 0 &I
		\end{pmatrix}
		\begin{pmatrix}
			p\\ q
		\end{pmatrix}+ hb,\\
		\tilde{\mathcal{L}}_n^{\text{low}}\left(h,\begin{pmatrix}
			p\\ q
		\end{pmatrix}\right)&=
		\begin{pmatrix}
			I & 0/hS_n\\ hS_n/0 & I
		\end{pmatrix}...\begin{pmatrix}
			I & hS_2\\ 0 & I
		\end{pmatrix}
		\begin{pmatrix}
			I & 0\\ hS_1 & I
		\end{pmatrix}
		\begin{pmatrix}
			p\\ q
		\end{pmatrix}+ hb
	\end{align*}
	with $S_1,...,S_n \in \R^{d \times d}$ symmetric and $b\in \R^{2d}$. We write
	\begin{align*}
		\tilde{\mathcal{M}}_L:=\left\{v \vert v \text{ is a linear module}\right\}
	\end{align*} for the set of linear modules. For an activation function $\sigma$ the corresponding activation modules are given by
	\begin{align*}
		\tilde{\mathcal{N}}_\text{up}\left(h,\begin{pmatrix}
			p\\ q
		\end{pmatrix}\right)=\begin{bmatrix}
			I & h\tilde{\sigma}_{a}\\ 0 & I
		\end{bmatrix}
		\begin{pmatrix}
			p\\ q
		\end{pmatrix}, \qquad
		\tilde{\mathcal{N}}_\text{low}\left(h,\begin{pmatrix}
			p\\ q
		\end{pmatrix}\right)=\begin{bmatrix}
			I & 0\\ h\tilde{\sigma}_{a} & I
		\end{bmatrix}
		\begin{pmatrix}
			p\\ q
		\end{pmatrix}
	\end{align*}
	with $\tilde{\sigma}_{a}(x):=\diag(a)\sigma(x)$, where $a,b \in \R^d$. We denote the set of activation modules by
	\begin{align*}
		\tilde{\mathcal{M}}_{\text{A}}=\left\{w \vert w \text{ is an activation module}\right\}.
	\end{align*}
	Now we can define the set of \textbf{original time-adaptive LA-SympNets (OTLA-SympNets)} as
	\begin{align*}
		\tilde{\Psi}_{\text{LA}}:=\bigg\{&\psi(h,x)=(v_{k+1}(h,\cdot) \circ w_k(h,\cdot) \circ v_k(h,\cdot) \circ ... \circ w_1(h,\cdot) \circ v_1(h,\cdot))(x)\\
		&\left. \strich v_1,...,v_{k+1}\in \tilde{\mathcal{M}}_{\text{L}}, w_1, ..., w_k \in \tilde{\mathcal{M}}_{\text{A}}, k \in \N\right\}
	\end{align*}
\end{defi}
Both TG-SympNets and TLA-SympNets follow the concept of using layers that update $p$ and $q$ separately. This is visualized in \Cref{fig:SympNet}.\\
Because of the limitations of the original TLA-SympNets discussed in \Cref{subsec:lim_og_LA}, we propose to use a different architecture, that has some theoretical benefits. 
\begin{defi}[New TLA-SympNets]
	\label{def:new_la_sympnet}
	We define new linear modules by
	\begin{align*}
		\mathcal{L}_n\pq&:= \begin{pmatrix}
			I  & 0/S_n\\
			S_n/0 & I
		\end{pmatrix}\hdots \begin{pmatrix}
			I & 0\\ S_2 & I
		\end{pmatrix}\begin{pmatrix}
			I & S_1\\ 0 & I
		\end{pmatrix}\pq
	\end{align*}
	with $S_1, ..., S_n\in \R^{d\times d}$ symmetric and write 
	\begin{align*}
		\mathcal{M}_{\text{L}}:=\left\{v \vert v \text{ is a linear module as defined above}\right\}
	\end{align*} for the set of linear modules. For an activation function $\sigma$ the corresponding activation modules are given by
	\begin{align*}
		\mathcal{N}^{\text{up}}\left(h,\pq\right)&= \begin{bmatrix}
			I & h\tilde{\sigma}_{a, b}\\ 0 & I
		\end{bmatrix}\pq, \qquad
		\mathcal{N}^{\text{low}}\pq= \begin{bmatrix}
			I & 0 \\ h\tilde{\sigma}_{a, b} & I
		\end{bmatrix}\pq,
	\end{align*}
	with $\tilde{\sigma}_{a,b}(x):=\diag(a)\sigma(x+b)$ and $a, b \in \R^d$. 
	We denote the set of activation modules by
	\begin{align*}
		{\mathcal{M}}_{\text{A}}=\left\{w \vert w \text{ is an activation module as defined above}\right\}.
	\end{align*}
	Now we define the set of \textbf{TLA-SympNets} as
	\begin{align*}
		{\Psi}_{\text{LA}}:=\{&\psi(h,x) = (v_k^{-1}\circ w_k(h,\cdot) \circ v_k \circ \hdots \circ v_1^{-1}\circ w_1(h,\cdot) \circ v_1)(x)\\
		&\left.\strich v_1,...,v_{k} \in \mathcal{M}_{\text{L}},~ w_1,...,w_k \in \mathcal{M}_{\text{A}}\right\}.
	\end{align*}
\end{defi}
The reasoning behind the made changes is as follows.
First, we removed the time input from the linear layers. We do this, because the TLA-SympNets proposed in \cite{JinZZetal20} are limited to approximating Hamiltonian systems with Hamiltonians structured in a very specific form in $\normtd{\cdot}$. This limitation will be shown in \Cref{subsec:lim_og_LA}. 
Second, we moved the bias $b$ from the linear modules to the activation modules, which reduces the parameter space by $d$ parameters per layer and allows us to invert the TLA-SympNets easier. Lastly, we added the inverse of the linear layers. This change had to be made to ensure $\psi(0,x)=x$ for all TLA-SympNets $\psi \in \Psi_{\text{LA}}$. The inversion of the linear layers is as cheap as the forward evaluation of said layers since
\begin{align*}
	\mathcal{L}_n^{-1}\pq= \begin{pmatrix}
		I  & -S_1\\
		0 & I
	\end{pmatrix} \begin{pmatrix}
		I & 0\\ -S_2 & I
	\end{pmatrix}\hdots\begin{pmatrix}
		I & 0/-S_k\\ -S_k/0 & I
	\end{pmatrix}\pq.
\end{align*}
Obviously, these changes do not impact the symplecticity of the linear or activation layers. Also, this would not impact the approximation capabilities of the original (not time-adaptive, i.e., $h=1$) Sympnets, as we only need to replace
\begin{align*}
	v_i\begin{pmatrix}
		p \\ q
	\end{pmatrix}=
	\begin{pmatrix}
		K_i^{-1} & 0 \\ 0 & K_i^{T}
	\end{pmatrix}
	\begin{bmatrix}
		I & 0 \\ \tilde{\sigma}_{a_i} & I
	\end{bmatrix}
	\left(
	\begin{pmatrix}
		K_i & 0 \\ 0 & K_i^{-T}
	\end{pmatrix}\begin{pmatrix}
		p \\q
	\end{pmatrix}+
	\begin{pmatrix}
		b_i \\0
	\end{pmatrix}
	\right)-
	\begin{pmatrix}
		K_i^{-1} b_i \\0
	\end{pmatrix}
\end{align*}
in the proof of \cite[Theorem 4]{JinZZetal20} by
\begin{align*}
	\tilde{v}_i\begin{pmatrix}
		p \\ q
	\end{pmatrix}=
	\begin{pmatrix}
		K_i^{-1} & 0 \\ 0 & K_i^{T}
	\end{pmatrix}
	\begin{bmatrix}
		I & 0 \\ \tilde{\sigma}_{a_i,b_i} & I
	\end{bmatrix}
	\left(
	\begin{pmatrix}
		K_i & 0 \\ 0 & K_i^{-T}
	\end{pmatrix}\begin{pmatrix}
		p \\q
	\end{pmatrix}
	\right)
\end{align*}
and one can easily verify that $v_i=\tilde{v}_i$. In \Cref{subsec:uat}, we will prove that this new architecture has a universal approximation theorem even for varying $h$.

\subsection{Time-adaptive SympNets for non-autonomous Hamiltonian systems}
\label{subsec:na_sympnets}
Now we want to focus on the non-autonmous case $H(p,q,t)$. Even though it is easily possible to extend a non-autonomous Hamiltonian system into an autonomous one \cite{Lan49}, we will not make use of that phase space extension in this work, as the extended system might not be separable even if the original one is. This observation can be made the following way. For the conjugate coordinate to $t$, let us call it $e$, we get $\dot{e}=-\partial_t H(p,q,t)$ as the equation of motion. So with $\tilde{p}=(p^T,e)^T,~ \tilde{q}=(q^T, t),~\tilde{H}(\tilde{p},\tilde{q})=H(p,q,t) + e$ we obtain the same equations of motion as in the original non-autonomous system
\begin{align*}
	\begin{pmatrix}
		\dot{p}\\ \dot{e}
	\end{pmatrix}=\dot{\tilde{p}}=-\frac{\partial\tilde{H}}{\partial\tilde{q}}=
	\begin{pmatrix}
		-\frac{\partial \tilde{H}}{\partial q}\\ -\frac{\partial \tilde{H}}{\partial t}
	\end{pmatrix}=
	\begin{pmatrix}
		-\frac{\partial H}{\partial q}\\ -\frac{\partial H}{\partial t}
	\end{pmatrix},\qquad
	\begin{pmatrix}
		\dot{q} \\ \dot{t}
	\end{pmatrix}=\dot{\tilde{q}}=\frac{\partial\tilde{H}}{\partial\tilde{p}}=
	\begin{pmatrix}
		\frac{\partial \tilde{H}}{\partial p}\\  \frac{\partial \tilde{H}}{\partial e}
	\end{pmatrix}=
	\begin{pmatrix}
		\frac{\partial H}{\partial p}\\ 1
	\end{pmatrix}.
\end{align*}
Now assume that we have a separable Hamiltonian $H(p,q,t)=K(p,t)+V(q,t)$. Then $\tilde{H}$ is not separable, since
\begin{align*}
	\frac{\partial \tilde{H}}{\partial \tilde{q}}=\begin{pmatrix}
		\partial_q V(q, t)\\
		\partial_t K(p, t) + \partial_t V(q, t)
	\end{pmatrix}
\end{align*}
also depends on $p$. Hence, we don't use this phase space extension to deal with non-autonomous systems, as we show in \Cref{subsec:lim} that separability is an important condition, if we want to use adaptive time-steps. Also, one would need measurements of $e$, i.e., the original Hamiltonian. \\
Note that the extension would be applicable, if only $V$ was depending on $t$ but not $K$, i.e., $H(p,q,t)=K(p)+V(q, t)$. So this might be a possibility to consider in future research.\\
Since $K$ and $V$ depend on time, we will allow the same for the potentials used in the SympNets. So the updates made by the non-autonomous SympNets try to approximate
\begin{align*}
	\begin{bmatrix}
		I & h\nabla V(\cdot,t)\\
		0 & I
	\end{bmatrix}\pq, \qquad \begin{bmatrix}
		I & 0 \\
		h\nabla K(\cdot,t) & I
	\end{bmatrix}\pq,
\end{align*}
and update $t\mapsto t+ h/m$ for a given step size $h$, with $m$ being the number of layers. This concept allows us to define non-autonomous TG-Sympets and is visualized in \Cref{fig:na_SympNet}.
\begin{figure}
	\centering
	\includegraphics[width=\linewidth]{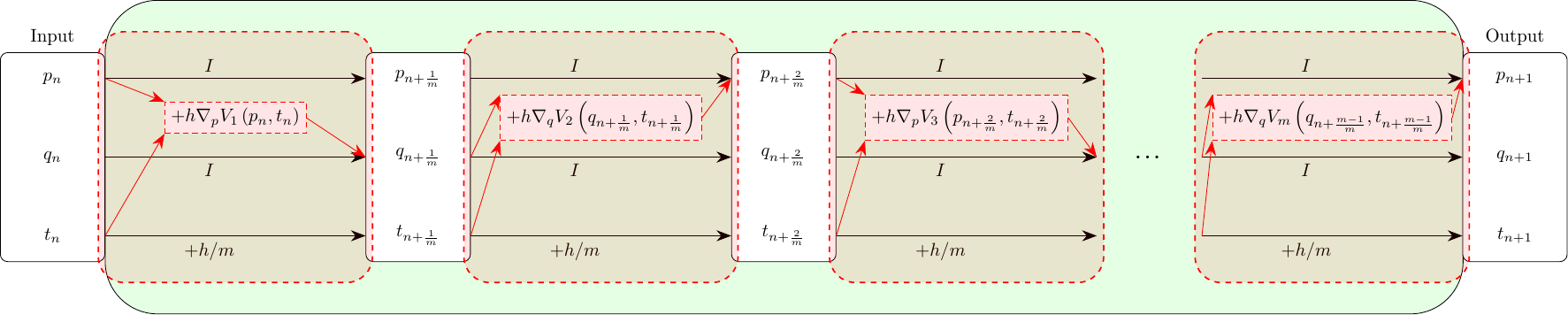}
	\caption{TSympNets for non-autonomous separable Hamiltonian systems}
	\label{fig:na_SympNet}
\end{figure}

\begin{defi}[Non-autonomous TG-SympNets]
	For $n \in \N$ and a given activation function $\sigma$, we define 
	\begin{align*}
		\mathcal{G}_{\text{up}}\left(h, t, \pq\right):=\begin{bmatrix}
			I & h \hat{\sigma}_{K, a, b, c}(\cdot, t)\\
			0 & I
		\end{bmatrix}\pq,
		\qquad
		\mathcal{G}_{\text{low}}\left(h, t, \pq\right):=\begin{bmatrix}
			I & 0\\
			h \hat{\sigma}_{K, a, b, c}(\cdot, t) & I
		\end{bmatrix}\pq,
	\end{align*}
	with $\hat{\sigma}_{K,a,b,c}(x, t):=K^T \diag(a) \sigma(K x + c t +b)$, where $K \in \R^{n \times d},~a,b,c\in \R^n$. Now the upper and lower gradient modules are defined by
	\begin{align*}
		u_{\text{up}}^{(m)}\left(h, t, \pq \right):=\begin{pmatrix}
			t + \frac{h}{m}\\
			\mathcal{G}_{\text{up}}\left(h, t, \pq\right)
		\end{pmatrix}, \qquad u_{\text{low}}^{(m)}\left(h, t, \pq \right):=\begin{pmatrix}
			t + \frac{h}{m}\\
			\mathcal{G}_{\text{low}}\left(h, t, \pq\right)\\
		\end{pmatrix},
	\end{align*}
	with $m \in \N$. We denote the set of gradient modules as
	\begin{align*}
		\mathcal{M}_{\text{NAG}}^{(m)}:=\left\{u^{(m)}\strich u \text{ is a gradient module as defined above}\right\}.
	\end{align*}
	Now we define the set of \textbf{non-autonomous TG-Sympnets (NATG-SympNets)} as
	\begin{align*}
		\Psi_{\text{NAG}}:=\left\{\psi(h, t, x) = (u_m(h, \cdot, \cdot) \circ ... \circ u_1(h, \cdot, \cdot))(t,x) \strich u_m, ..., u_1 \in \mathcal{M}_{\text{NAG}}^{(m)}\right\}.
	\end{align*}
\end{defi}
With a similar extension we can create non-autonomous LA SympNets from the ones defined in \Cref{def:new_la_sympnet}
\begin{defi}[Non-autonomous TLA-SympNets]
	The linear modules stay the same as in \Cref{def:new_la_sympnet}
	\begin{align*}
		\mathcal{L}_n\pq&:= \begin{pmatrix}
			I  & 0/S_n\\
			S_n/0 & I
		\end{pmatrix}\hdots \begin{pmatrix}
			I & 0\\ S_2 & I
		\end{pmatrix}\begin{pmatrix}
			I & S_1\\ 0 & I
		\end{pmatrix}\pq
	\end{align*}
	with $S_1, ..., S_n\in \R^{d\times d}$ symmetric and we write 
	\begin{align*}
		\mathcal{M}_L:=\left\{v \vert v \text{ is a linear module}\right\}
	\end{align*} for the set of linear modules. For an activation function $\sigma$ the corresponding activation modules are given by
	\begin{align*}
		\mathcal{N}^{\text{up}}\left(h,\pq\right)&= \begin{bmatrix}
			I & h\tilde{\sigma}_{a, b, c}\\ 0 & I
		\end{bmatrix}\pq, \qquad
		\mathcal{N}^{\text{low}}\pq= \begin{bmatrix}
			I & 0 \\ h\tilde{\sigma}_{a, b, c} & I
		\end{bmatrix}\pq,
	\end{align*}
	with $\tilde{\sigma}_{a,b, c}(x):=\diag(a)\sigma(x+t c +b)$ and $a, b, c \in \R^d$. 
	We denote the set of activation modules by
	\begin{align*}
		{\mathcal{M}}_{\text{NAA}}=\left\{w \vert w \text{ is an activation module as defined above}\right\}.
	\end{align*}
	Now we define the set of \textbf{non-auonomous TLA-SympNets (NATLA-SympNets)} as
	\begin{align*}
		{\Psi}_{\text{NATLA}}:=&\left\{\psi(h, t, x) = (v_k^{-1}\circ w_k(h, \cdot,\cdot) \circ v_k \circ \hdots \circ v_1^{-1}\circ w_1(h, \cdot, \cdot) \circ v_1)(t,x)\right.\\
		&\left.\strich v_1,...,v_{k} \in \mathcal{M}_{\text{L}},~ w_1,...,w_k \in \mathcal{M}_{\text{NAA}}\right\}.
	\end{align*}
\end{defi}

\subsection{Universal Approximation theorems for time-adaptive SympNets}
\label{subsec:uat}
In this section we present universal approximation theorems for the different kinds of SympNets introduced in \Cref{subsec:td_SympNet,subsec:na_sympnets}.
To formulate our approximation theorems, we need the following concepts.
\begin{defi}[$r$-uniformly dense on compacta \cite{JinZZetal20}]
	Let $n_1, n_2 \in \N,~r \in \N_0,~T>0$ be given, $U \subseteq \R^{n_1}$ is an open set, $S_1 \subseteq C^r([0,T]\times U;\R^{n_2})$, then we say $S_2$ is \textbf{$r$-uniformly dense on compacta} in $S_1$ if $S_2 \subseteq S_1$ and for any $f \in S_1$, compact $W\subseteq U$ and $\epsilon>0$, there exists a $g \in S_2$ such that $\normt{f-g}<\epsilon$.
\end{defi}
\begin{defi}[$r$-finite \cite{JinZZetal20}]
	Let $r \in \N_0$. A function $\sigma \in C^r(\R)$ is called \textbf{$r$-finite} if
	\begin{align*}
		0< \int_\R \abs{D^r(\sigma)(x)} dx < \infty.
	\end{align*}
\end{defi}
Now we are able to present our approximation theorems. Note that all of the following theorems only guarantee that the proposed SympNets can approximate flows of arbitrary separable Hamiltonian systems. We will discuss in \Cref{subsec:lim} why we can not extend our results to non-separable Hamiltonian systems.
\begin{theorem}[Approximation theorem for TG-SympNets]
	\label{the:g-sympnet}
	For any $r \in \mathbb{N}$ the set of TG-SympNets $\Psi_{\text{TG}}$ is $r$-uniformly dense on compacta in
	\begin{align*}
		\left\{\Phi: \R_0^+ \times \R^{2d} \to \R^{2d} \mid \Phi \text{ is phase flow of \cref{eqn:Hsys}}\right\},
	\end{align*}
	where $H$ is given by
	\begin{align*}
		H(p,q)= K(p) + V(q)
	\end{align*}
	for some $K,V \in C^{r+1}(\R^{2d})$, if the activation function $\sigma$ is $r$-finite.
\end{theorem}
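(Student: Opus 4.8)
The plan is to reduce the statement to the composition estimate \Cref{the:comp}, exploiting that a separable system admits an \emph{explicit} symplectic integrator whose elementary steps are exactly of the form realized by gradient modules. Fix the target flow $\Phi$ of \cref{eqn:Hsys} with $H(p,q)=K(p)+V(q)$, a compact $W\subseteq\R^{2d}$, the horizon $T>0$ and $\epsilon>0$; I must produce $\psi\in\Psi_{\text{TG}}$ with $\normt{\Phi-\psi}<\epsilon$. Writing out \cref{eqn:Hsys} for separable $H$ gives $\dot p=-\nabla V(q)$, $\dot q=\nabla K(p)$, so the two split vector fields $(-\nabla V(q),0)$ and $(0,\nabla K(p))$ have \emph{exact} flows equal to the shears $(p,q)\mapsto(p-s\nabla V(q),q)$ and $(p,q)\mapsto(p,q+s\nabla K(p))$. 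Their composition is the symplectic Euler step $S_s(p,q):=(p-s\nabla V(q),\,q+s\nabla K(p-s\nabla V(q)))$, a first-order symplectic one-step method for $\Phi(s,\cdot)$.

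First I would pass to a smooth intermediate Hamiltonian. Since $K,V\in C^{r+1}$, mollifying each in its own variables yields a separable $\tilde H=\tilde K+\tilde V$ with $\tilde K,\tilde V\in C^\infty$ and $\|\nabla\tilde V-\nabla V\|_{C^r}$, $\|\nabla\tilde K-\nabla K\|_{C^r}$ as small as desired; by continuous dependence of the flow on the vector field in the $C^r$ topology, the associated flow $\tilde\Phi$ satisfies $\normt{\Phi-\tilde\Phi}<\epsilon/2$ on $[0,T]\times W$. I then fix this $\tilde H$. This step is what makes the hypothesis $C^{r+1}$ (rather than $C^{r+2}$) sufficient: the local-error constant of symplectic Euler costs one extra derivative of the potentials, so I first replace them by smooth ones at the price of a single $C^r$ error in the flow.

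Next I would realize the integrator by gradient modules. By the antiderivative trick, if $\Sigma'=\sigma$ then $K^{T}\diag(a)\sigma(Kx+b)=\nabla_x\big(\mathbf 1^{T}\diag(a)\Sigma(Kx+b)\big)$, i.e.\ the map $\hat\sigma_{K,a,b}$ of \Cref{def:sympnet} is the gradient of a single-hidden-layer network with activation $\Sigma$. Since $\sigma$ is $r$-finite, $\Sigma$ is $(r+1)$-finite, hence such networks are $(r+1)$-uniformly dense on compacta; approximating $\tilde V,\tilde K$ in $C^{r+1}$ gives $\hat\sigma^{(1)}\approx-\nabla\tilde V$ and $\hat\sigma^{(2)}\approx\nabla\tilde K$ in $C^r$. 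Thus the block $g(s,\cdot):=\mathcal G_{\text{low}}(s,\cdot)\circ\mathcal G_{\text{up}}(s,\cdot)$, built from these two $\hat\sigma$'s, approximates $S_s$. Setting $g_i:=g$ and $g_{i,m}(h,\cdot):=g(h/m,\cdot)$, the $m$-fold composition $G_{m,m}(h,\cdot)$ is exactly symplectic Euler run in $m$ steps of size $h/m$. Crucially $G_{m,m}\in\Psi_{\text{TG}}$: because $\hat\sigma_{K,a,b}$ is linear in $a$, each factor $\mathcal G_{\text{up}}(h/m,\cdot)$ coincides with the genuine time-adaptive layer $\mathcal G_{\text{up}}(h,\cdot)$ whose amplitude $a$ is rescaled to $a/m$ (and likewise for $\mathcal G_{\text{low}}$), so $G_{m,m}$ is a composition of $2m$ gradient modules evaluated at the same $h$.

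Finally I would apply \Cref{the:comp} to $\tilde\Phi$, whose field $\tilde f=J^{-1}\nabla\tilde H\in C^\infty$. The key is to verify its hypothesis $\|g_{i,m}-\tilde\Phi_{1,m}\|_{C^r([0,T]\times\tilde W)}\le C/m^2$ with $C$ independent of $m$. I would split $g_{i,m}-\tilde\Phi_{1,m}=(g_{i,m}-S_{h/m})+(S_{h/m}-\tilde\Phi_{1,m})$: the second term is the symplectic-Euler local error for the \emph{smooth} $\tilde H$, which is $O(s^2)=O(1/m^2)$ in $C^r$ (here smoothness of $\tilde H$ is used, and one must track $h$-derivatives, each contributing a factor $1/m$); the first term is the network error, which enters multiplied by the step $s=h/m$, so choosing the networks for this $m$ with $\|\hat\sigma^{(1)}+\nabla\tilde V\|_{C^r},\|\hat\sigma^{(2)}-\nabla\tilde K\|_{C^r}\le 1/m$ makes it $O(1/m^2)$ too. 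Since the constant in the conclusion of \Cref{the:comp} depends only on $\tilde\Phi$ and the domains, not on the chosen networks, this yields $\normt{\tilde\Phi-G_{m,m}}\le\tilde C/m$, which is $<\epsilon/2$ for large $m$; combined with the mollification estimate, $\normt{\Phi-G_{m,m}}<\epsilon$ with $G_{m,m}\in\Psi_{\text{TG}}$. I expect the main obstacle to be exactly this $C/m^2$ verification in the full $C^r$ norm over $(h,x)$: controlling the symplectic-Euler consistency error together with its $h$- and $x$-derivatives and balancing the step-size factor against the $m$-dependent network accuracy.
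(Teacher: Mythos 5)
Your proposal is correct, and its skeleton coincides with the paper's: reduce to smooth $K,V$, take the symplectic--Euler shear splitting as the single-step maps, verify the consistency hypothesis $\norm{g_{i,m}-\Phi_{1,m}}_{C^r}\leq C/m^2$ by Taylor expansion (separating pure spatial derivatives, exactly one time derivative, and two or more time derivatives, each of the latter contributing a factor $1/m$), and conclude with \Cref{the:comp}. The genuine difference is where the networks enter. The paper applies \Cref{the:comp} to the Euler steps built from the \emph{exact} gradients $\nabla K,\nabla V$, obtaining $\normtd{\Phi-G_{m,m}}\leq\tilde{C}/m$, and only afterwards approximates the fixed composition $G_{m,m}$ by a TG-SympNet, delegating that step to the argument of \cite[Theorem 5]{JinZZetal20} with the family $\mathcal{F}_G$ of exact shears; you instead make the layers networks from the outset, with $m$-dependent accuracy $1/m$ chosen so that the network error, which always carries the step factor $h/m$, is itself $O(1/m^2)$, and then invoke \Cref{the:comp} once. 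Your route is more self-contained (no separate composition-approximation step for $G_{m,m}$), but it hinges on the point you correctly flag: the threshold $m_0$ and constant $\tilde{C}$ of \Cref{the:comp} depend on the $g_i$'s only through the hypothesis constant $C$ --- indeed $\tilde{W}$ and $W_P$ are built from $\Phi$ alone --- so letting the networks vary with $m$ is legitimate. Your remaining ingredients (absorbing $1/m$ into the coefficients $a$ by linearity of $\hat{\sigma}_{K,a,b}$, so that $G_{m,m}\in\Psi_{\text{G}}$; the antiderivative trick together with the $(r+1)$-finiteness of $\Sigma$) are exactly what the paper's citation of \cite{JinZZetal20} supplies. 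Note finally that both proofs lean on the same tersely justified reduction: that replacing $K,V$ by smooth approximants perturbs the flow by at most $\epsilon/2$ in $\normtd{\cdot}$; you at least name the required fact (continuous dependence of the flow and its derivatives on the vector field), which the paper leaves implicit in its ``assume $K,V\in C^\infty$'' step.
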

\begin{proof}
	Let $W\subseteq U$ be a compact set. 
	Since $C^{\infty}$ is dense in $C^{r+1}$ in the $\norm{\cdot}_{C^r(W;\R^{2d})}$-topology, we assume $K,V \in C^\infty$ for the rest of this proof. Hence, $H$ is also arbitrarily often differentiable. Therefore, this is also true for the flow of \cref{eqn:Hsys} $\Phi$ \cite[Appendix B, Lemma B.4]{DuiK12}.
	Furthermore, we will use the same definitions for $G_{i,m}$ and $\Phi_{i,m}$ as in \Cref{sec:corr}. We set
	\begin{align}\label{eq:ghm}
		g_{{m}}\left(h,\begin{pmatrix}
			p\\q
		\end{pmatrix}\right):&=
		\begin{bmatrix}
			I & 0\\
			\frac{h}{m} \nabla K & I
		\end{bmatrix}
		\begin{bmatrix}
			I & -\frac{h}{m} \nabla V\\
			0 & I
		\end{bmatrix}
		\begin{pmatrix}
			p\\q
		\end{pmatrix} \notag \\
		&=\begin{bmatrix}
			I & 0\\
			\frac{h}{m} \nabla K & I
		\end{bmatrix}\begin{pmatrix}
			p - \frac{h}{m}\nabla V(q)\\
			q
		\end{pmatrix}\notag\\
		&=
		\begin{pmatrix}
			p - \frac{h}{m}\nabla V(q)\\
			q + \frac{h}{m} \nabla K\left(p-\frac{h}{m}\nabla V(q)\right)
		\end{pmatrix}. 
	\end{align}
	Now we want to show that this choice of $g_m$ fulfills \cref{eqn:compcond} to apply \Cref{the:comp}. To do this, we look at different cases for $D^\alpha$. The first case is $D^\alpha =D^\alpha_x$ only including spatial derivatives and no temporal derivatives.
	We observe with Taylor's theorem
	\begin{align*}
		\Phi\left(\frac{h}{m},\begin{pmatrix}
			p\\ q
		\end{pmatrix}\right)&=\begin{pmatrix}
			p - \frac{h}{m}\nabla V(q) + \frac{h^2}{2 m^2}\partial_t^2 \Phi^{(p)}\left(\theta,\begin{pmatrix}
				p\\ q
			\end{pmatrix}\right)\\
			q + \frac{h}{m}\nabla K(p) + \frac{h^2}{2m^2}\partial_t^2\Phi^{(q)}\left(\theta, \begin{pmatrix}
				p\\ q
			\end{pmatrix}\right)
		\end{pmatrix}\\
		g_{{m}}\left(h,\begin{pmatrix}
			p\\q
		\end{pmatrix}\right)&=\begin{pmatrix}
			p -\frac{h}{m}\nabla V(q)\\
			q + \frac{h}{m}\nabla K(p) - \frac{h^2}{m^2}\nabla^2 K(p-\vartheta\nabla V(q))\nabla V(q)
		\end{pmatrix}
	\end{align*}
	with $\theta,\vartheta \in (0,h/m)$. So we have
	\begin{align*}
		&\sup_{(h,(p^T,q^T)^T\in [0,T]\times \tilde{W}}\max_{1\leq k \leq 2d}\abs{D^\alpha_x \left(g_m\left(h,\pq\right)-\Phi\left(\frac{h}{m},\pq\right)\right)}\\
		=&\sup_{(h,(p^T,q^T)^T\in [0,T]\times \tilde{W}}\max_{1\leq k \leq 2d}\abs{D^\alpha_x\begin{pmatrix}
				\frac{h^2}{2 m^2}\partial_t^2\Phi^{(p)}\left(\theta, \begin{pmatrix}
					p\\ q
				\end{pmatrix}\right)\\
				-\frac{h^2}{m^2}\nabla^2 K(p-\vartheta\nabla V(q))\nabla V(q) + \frac{h^2}{2m^2}\partial_t^2\Phi^{(q)}\left(\theta,\begin{pmatrix}
					p \\ q
				\end{pmatrix}\right) 
			\end{pmatrix}^{(k)}}\\
		\leq& \frac{1}{m^2}(C_{g}^{(0)}+C_{\Phi}^{(0)}),
	\end{align*}
	with
	\begin{align*}
		C_{g}^{(0)}&:=T^2\max_{\abs{\alpha}\leq r} \sup_{(h,(p^T,q^T)^T)\in [0,T]\times \tilde{W}} \max_{1\leq k \leq d} \abs{D^\alpha_x (\nabla^2K(p-h\nabla V(q))\nabla V(q))^{(k)}},\\
		C_\Phi^{(0)}&:=\frac{T^2}{2}\max_{\abs{\alpha}\leq r} \sup_{(h,x)\in [0,T]^2\times \tilde{W}}\max_{1\leq k \leq 2d}\abs{D^\alpha_x \partial_t^2 \Phi^{(k)}(h,x)}.
	\end{align*}
	The second case is $D^\alpha=D^\beta_x \partial_t$ including exactly one temporal derivative and $\abs{\beta}=\abs{\alpha}-1$ spatial derivatives. Using Taylor's theorem again we have
	\begin{align*}
		\frac{d}{dh} \Phi\left(\frac{h}{m},\pq\right)&=\begin{pmatrix}
			-\frac{1}{m} \nabla V(q) - \frac{h}{m^2} \nabla^2 V(q+ \theta \nabla K(p))\nabla K(p)\\
			\frac{1}{m} \nabla K(p) + \frac{h}{m^2}\nabla^2 K(p-\theta \nabla V(q))\nabla V(q)
		\end{pmatrix},\\
		\frac{d}{dh}g_m\left(h,\pq\right)&=\begin{pmatrix}
			-\frac{1}{m}\nabla V(q)\\
			\frac{1}{m}\nabla K(p) - \frac{h}{m^2}\nabla^2K(p-\vartheta\nabla V(q))\nabla V(q) -\frac{h}{m^2}\nabla^2K(p-\frac{h}{m}\nabla V(q))\nabla V(q)
		\end{pmatrix}.
	\end{align*}
	Therefore, it follows
	\begin{align*}
		&\sup_{(h,(p^T,q^T)^T\in [0,T]\times \tilde{W}}\max_{1\leq k \leq 2d}\abs{D^\alpha \left(g_m\left(h,\pq\right)-\Phi\left(\frac{h}{m},\pq\right)\right)}\\
		=&\sup_{(h,(p^T,q^T)^T\in [0,T]\times \tilde{W}}\max_{1\leq k \leq 2d}\abs{D^\beta \frac{d}{dh}\left(g_m\left(h,\pq\right)-\Phi\left(\frac{h}{m},\pq\right)\right)}\\
		\leq& \frac{1}{m^2}(C_{g}^{(1)}+C_\Phi^{(1)}),
	\end{align*}
	with
	\begin{align*}
		C_{g}^{(1)}&:=2T\max_{\abs{\beta}\leq r-1}\sup_{(h,(p^T,q^T))^T\in [0,T]\times \tilde{W}}\max_{1\leq k \leq d}\abs{D^\beta_x \left(\nabla^2 K(p-\vartheta\nabla V(q))\nabla V(q)\right)^{(k)}},\\
		C_\Phi^{(1)}&:=T\max_{\abs{\beta}\leq r-1}\sup_{(h,(p^T,q^T)^T\in [0,T]\times \tilde{W}}\max_{1\leq k \leq 2d}\abs{D^\beta_x\begin{pmatrix}
				\nabla^2V(q+ h \nabla K(p))\nabla K(p)\\
				\nabla^2 K(p-h \nabla V(q))\nabla V(q)
			\end{pmatrix}^{(k)}}.
	\end{align*}
	Lastly, we consider the case $D^\alpha=D^\beta_x\partial_t^n$ for $2\leq n\leq r$, i.e., $D^\alpha$ including at least two temporal derivatives. We define $g(h,x):=g_m(mh,x)$ and observe that $g$ is independent of $m$. Therefore,
	\begin{align}
        \begin{aligned}
        \label{eqn:higher_derivs}
		\sup_{(h,x)\in [0,T]\times \tilde{W}}\max_{1\leq k \leq 2d}\abs{D^\alpha g^{(k)}_m(h,x)}&=\sup_{(h,x)\in [0,T]\times \tilde{W}}\max_{1\leq k \leq 2d}\abs{D^\beta \frac{d^n}{dh^n} g^{(k)}(h/m,x)}\\
		&\leq \frac{C^{(n)}_g}{m^n},\\
		\sup_{(h,x)\in [0,T]\times \tilde{W}}\max_{1\leq k \leq 2d}\abs{D^\alpha \Phi^{(k)}(h/m,x)}&=
		\sup_{(h,x)\in [0,T]\times \tilde{W}}\max_{1\leq k \leq 2d}\abs{D^\beta \frac{d^n}{dh^n}\Phi^{(k)}(h/m,x)}\\
		&\leq \frac{C^{(n)}}{m^n},
        \end{aligned}
	\end{align}
	and together
	\begin{align*}
		\sup_{(h,x)\in [0,T]\times \tilde{W}}\max_{1\leq k \leq 2d}\abs{D^\alpha g^{(k)}_m(h,x)-\Phi^{(k)}(h/m,x)}\leq \frac{1}{m^n}(C_g^{(n)}+C_\Phi^{(n)}),
	\end{align*}
	with
	\begin{align*}
		C_g^{(n)}&:=\max_{\abs{\beta}\leq r-n}\sup_{(h,x)\in [0,T]\times \tilde{W}}\max_{1\leq k \leq 2d}\abs{D^\beta \partial_t^n g^{(k)}(h,x)},\\
		C_\Phi^{(n)}&:=\max_{\abs{\beta}\leq r-n}\sup_{(h,x)\in [0,T]\times \tilde{W}}\max_{1\leq k \leq 2d}\abs{D^\beta \partial_t^n \Phi^{(k)}(h,x)}.
	\end{align*}
	Combining all the cases, we get
	\begin{align*}
		\normtd{g_m-\Phi_{1,m}}&=\sum_{\abs{\alpha}\leq r}\sup_{(h,x)\in [0,T]\times \tilde{W}}\max_{1\leq k \leq 2d}\abs{D^\alpha(g_m(h,x)-\Phi_{1,m})^{(k)}}\\
		&\leq \binom{2d+r}{2d}\max_{n=1,...,r}(C_g^{(n)}+C_\Phi^{(n)})\frac{1}{m^2}\leq \frac{C}{m^2},
	\end{align*}
	with $C:=\binom{2d+r}{2d}\max_{n=1,...,r}(C_g^{(n)}+C_\Phi^{(n)})$. Hence, \cref{eqn:compcond} holds and we can apply \Cref{the:comp}, to get
	\begin{align*}
		\normtd{\Phi-G_m}\leq\frac{\tilde{C}}{m}\leq \epsilon
	\end{align*}
	for any $\epsilon>0$, if $m\geq \tilde{C}/\epsilon$.\\
	To show that there is a TG-SympNet which approximates $G_m$, one simply follows the proof of \cite[Theorem 5]{JinZZetal20} with
	\begin{align*}
		\mathcal{F}_G=\left\{f(t,x)=\begin{bmatrix}
			I & t \nabla V\\ 0 & I
		\end{bmatrix}(x)\strich V \in  C^{r+1}(\R^d)\right\}\cup
		\left\{f(t,x)=\begin{bmatrix}
			I & 0\\ t \nabla K & I
		\end{bmatrix}(x)\strich K \in  C^{r+1}(\R^d)\right\}
	\end{align*}
\end{proof}

We can obtain the same result for TLA-SympNets.
\begin{theorem}[Approximation Theorem for TLA-SympNets]
	\label{the:la-sympnet}
	For any $r \in \mathbb{N}$ and open $U \in \R^{2d}$, the set of TLA-SympNets $\Psi_{\text{TLA}}$ is $r$-uniformly dense on compacta in
	\begin{align*}
		\left\{\Phi: \R_0^+ \times U \to \R^{2d} \mid \Phi \text{ is phase flow of \cref{eqn:Hsys}}\right\},
	\end{align*}
	where $H$ is given by
	\begin{align*}
		H(p,q)= K(p) + V(q)
	\end{align*}
	for some $K,V \in C^{r+1}(U)$, if the activation function $\sigma$ is $r$-finite.
\end{theorem}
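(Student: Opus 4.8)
The proof splits into an analytic half that is essentially identical to that of \Cref{the:g-sympnet}, and an architectural half in which the G-SympNet building blocks are replaced by TLA-SympNet blocks. For the first half I would fix a compact $W \subseteq U$ and $\epsilon > 0$, pass to $C^\infty$ potentials $K,V$ by density in the $\normCr{\cdot}$-topology, and reuse verbatim the symplectic Euler step $g_m$ from \eqref{eq:ghm} together with the three-case Taylor estimate that establishes \eqref{eqn:compcond}, namely $\normtd{g_m - \Phi_{1,m}} \le C/m^2$. Since this estimate concerns only the exact flow and the exact step map, and not the network, it carries over without change, and \Cref{the:comp} then yields $\normtd{\Phi - G_{m,m}} \le \tilde{C}/m \le \epsilon/2$ once $m \ge 2\tilde{C}/\epsilon$. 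Thus it remains to approximate the fixed composition $G_{m,m}$ of symplectic Euler steps, in $\normtd{\cdot}$ over $[0,T]\times W$, by a TLA-SympNet to within $\epsilon/2$.

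For the second half, note that each factor $g_m$ is the composition of an upper shear with nonlinearity $\tfrac{h}{m}\nabla V$ and a lower shear with nonlinearity $\tfrac{h}{m}\nabla K$. The plan is to realize each such shear by a TLA-SympNet block $v^{-1}\circ w(h,\cdot)\circ v$, where $w$ is an activation module and $v$ a linear module. The identity $v_i = \tilde{v}_i$ recorded after \Cref{def:new_la_sympnet} shows that conjugating the component-wise activation module with the symplectic linear map $\diag(K, K^{-T})$, which lies in $\mathcal{M}_L$ because the symmetric shears generate $Sp(2d,\R)$, reproduces exactly the gradient-module nonlinearity $K^T\diag(a)\sigma(K\,\cdot + b)$ used by G-SympNets. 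Consequently the new architecture inherits the full approximation power of the original TLA-SympNets, and I would then invoke the LA-SympNet universal approximation argument of \cite[Theorem 4]{JinZZetal20}: using that $\sigma$ is $r$-finite, the gradient fields realizable by these blocks are $r$-dense in $C^r$ gradient fields, so $\nabla V$ and $\nabla K$ can each be approximated in $\normCr{\cdot}$ on the relevant compacta.

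The time input is handled exactly as in \Cref{the:g-sympnet}: because $h$ enters the activation modules linearly and ranges over the bounded interval $[0,T]$, a $C^r$-approximation of the $x$-dependent gradient field automatically produces a $C^r([0,T]\times W)$-approximation of the full $h$-dependent shear, and $\psi(0,x)=x$ holds by construction of the blocks. Finally, since the number $m$ of factors is fixed once $\epsilon$ is chosen, the $C^r$-error of the $m$-fold composition is controlled by the errors of the individual factors through \Cref{lem:lem1,lem:lem2} (no uniformity in $m$ is needed here, which is what makes this step elementary compared to \Cref{the:comp}); choosing each factor within $\epsilon/(2Cm)$ for the resulting constant $C$ then closes the estimate. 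I expect the main obstacle to be precisely this architectural step, namely verifying that the modified modules of \Cref{def:new_la_sympnet}, with the bias moved into the activation and the linear layer inverted, still realize the conjugation $\diag(K,K^{-T})$ and hence retain universality, rather than the analytic flow estimate, which is inherited wholesale from \Cref{the:g-sympnet}.
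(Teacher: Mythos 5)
Your proposal is correct and takes essentially the same route as the paper: the paper's own proof is just the two-line observation that, via the conjugation identity $v^{-1}\circ w(h,\cdot)\circ v$ noted after \Cref{def:new_la_sympnet} (following \cite[Theorem 4]{JinZZetal20}), every TG-SympNet can be represented exactly as a TLA-SympNet, so the statement follows directly from \Cref{the:g-sympnet}. Your version merely unpacks this reduction --- re-running the analytic half of \Cref{the:g-sympnet} and then approximating $G_{m,m}$ factor-by-factor with TLA blocks --- but it rests on exactly the same two ingredients (the flow estimate of \Cref{the:comp} and the conjugation by $\diag(K_i,K_i^{-T})\in\mathcal{M}_{\text{L}}$), so no genuinely new argument is involved.
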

\begin{proof}
	Similar to the proof of \cite[Theorem 4]{JinZZetal20}, we can represent any TG-SympNet as a TLA-SympNet. Therefore, \Cref{the:la-sympnet} follows directly from \Cref{the:g-sympnet}. Another way of prooving \Cref{the:la-sympnet} would be to perform the proof of \Cref{the:naLA-sympnet} and discarding all terms related to $t$.
\end{proof}
Unfortunately it is not possible to show that the original TLA-SympNets from \cite{JinZZetal20} approximate the flow of separable Hamiltonian systems with arbitrary accuracy. This limitation is discussed in \Cref{subsec:lim_og_LA}.\\ 
Thanks to \Cref{the:compcor} we can obtain universal approximations for the non-autonomous TSympNets as well.
\begin{theorem}[Approximation theorem for NATG-SympNets]
	\label{the:naG-sympnet}
	For any $r \in \mathbb{N}$ and open $U \in \R^{2d}$, the set of NATG-SympNets $\Psi_{\text{NATG}}$ is $r$-uniformly dense on compacta in
	\begin{align*}
		\left\{\Phi: \R_0^+ \times \R_0^+\times U \to \R^{2d} \mid \Phi \text{ is phase flow of \cref{eqn:Hsys}}\right\},
	\end{align*}
	where $H$ is given by
	\begin{align*}
		H(p,q,t)= K(p,t) + V(q,t)
	\end{align*}
	for some $K,V \in C^{r+1}(U\times \R_0^+)$, if the activation function $\sigma$ is $r$-finite.
\end{theorem}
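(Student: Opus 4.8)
The plan is to follow the proof of \Cref{the:g-sympnet} almost line by line, replacing the autonomous composition estimate \Cref{the:comp} by its non-autonomous counterpart \Cref{the:compcor}. Crucially, I would \emph{not} try to reduce to the autonomous theorem via the phase-space extension $\hat f(y)=(f(x,t),1)^T$: as noted in \Cref{subsec:na_sympnets}, that extension destroys separability, whereas \Cref{the:g-sympnet} requires a separable Hamiltonian. \Cref{the:compcor} is precisely the tool that lets us stay with the non-autonomous system directly. As before, since $C^\infty$ is dense in $C^{r+1}$ in the $\normtd{\cdot}$-topology, I would assume $K,V\in C^\infty(U\times\R_0^+)$, so that $H$ and hence the flow $\Phi$ of \cref{eqn:Hsys} are arbitrarily often differentiable on the relevant compacta. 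For the separable non-autonomous Hamiltonian $H(p,q,t)=K(p,t)+V(q,t)$ the equations of motion read $\dot p=-\nabla V(q,t)$ and $\dot q=\nabla K(p,t)$.

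Next I would introduce the time-dependent symplectic Euler step, the direct analogue of \cref{eq:ghm}: for a step length $s$ set
\[
g\left(s,t,\pq\right):=\begin{pmatrix} p-s\nabla V(q,t)\\ q+s\nabla K\bigl(p-s\nabla V(q,t),t\bigr)\end{pmatrix},
\]
put $g_{i,m}(h,t,x):=g(h/m,t,x)$ and let $G_{i,m}$, $\Phi_{1,i,m}$, $\tilde W$ and $\tilde W_t$ be as in \Cref{the:compcor}. Up to replacing $\nabla V(\cdot,t)$ and $\nabla K(\cdot,t)$ by the time-dependent networks $\hat\sigma_{K,a,b,c}$, this one-step map is exactly what one lower and one upper non-autonomous gradient module realize, with the time coordinate carried along and advanced by $h/m$ per step (any resulting $O(h/m)$ shift in the time argument of the inner evaluation is harmless, since it only perturbs the local error at order $s^2$).

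The core estimate is the consistency bound
\[
\norm{g_{i,m}-\Phi_{1,i,m}}_{C^r([0,T]\times\tilde W_t\times\tilde W;\R^{2d})}\le\frac{C}{m^2},
\]
which I would prove by Taylor expansion in the step length exactly as in \Cref{the:g-sympnet}, splitting the multi-index $D^\alpha$ into the cases of purely spatial derivatives, of derivatives containing exactly one step-derivative, and of derivatives containing at least two. The only genuinely new feature is that $D^\alpha$ may additionally differentiate with respect to the initial time $t$; since $K,V$ and $\Phi$ are smooth and all arguments remain in fixed compacta, each such $\partial_t$ merely yields further bounded terms and leaves the $1/m^2$ order intact. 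With this bound verified, \Cref{the:compcor} gives $\norm{\Phi-G_{m,m}}_{C^r([0,T]\times W_t\times W;\R^{2d})}\le\tilde C/m\le\epsilon$ as soon as $m\ge\tilde C/\epsilon$.

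Finally I would realize $G_{m,m}$ by an NATG-SympNet, following the proof of \cite[Theorem 5]{JinZZetal20} with the time-dependent shear family
\begin{align*}
\mathcal F_{\mathrm{NAG}}=&\left\{f(h,t,x)=\begin{bmatrix} I & h\nabla V(\cdot,t)\\ 0 & I\end{bmatrix}(x)\strich V\in C^{r+1}(U\times\R_0^+)\right\}\\
&\cup\left\{f(h,t,x)=\begin{bmatrix} I & 0\\ h\nabla K(\cdot,t) & I\end{bmatrix}(x)\strich K\in C^{r+1}(U\times\R_0^+)\right\}.
\end{align*}
For each fixed $t$ the network $\hat\sigma_{K,a,b,c}(\cdot,t)=K^T\diag(a)\sigma(K\cdot+ct+b)$ is the $x$-gradient of a single-hidden-layer potential, hence a gradient field in $x$, so it can target $\nabla V(\cdot,t)$ and $\nabla K(\cdot,t)$; and since the augmented weight $[K\mid c]$ ranges over all matrices, the $r$-finiteness of $\sigma$ yields joint $r$-uniform density of these networks in the time-dependent gradient fields in the combined variable $(x,t)$. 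Approximating each of the finitely many shears composing $G_{m,m}$ then gives the claim. I expect the main obstacle to be exactly this last step: confirming that the joint $C^r$-approximation in $(x,t)$ passes correctly through the antiderivative--gradient relationship and controls all mixed $t$- and $x$-derivatives simultaneously, whereas the consistency estimate above is a by-now routine adaptation of \Cref{the:g-sympnet}.
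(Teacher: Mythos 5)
Your proposal is correct and takes essentially the same approach as the paper: the paper's own proof of this theorem consists of exactly two sentences stating that the argument is analogous to that of \Cref{the:g-sympnet} with \Cref{the:compcor} used in place of \Cref{the:comp}, which is precisely the route you follow (non-autonomous symplectic-Euler one-step map, case-split Taylor consistency bound of order $1/m^2$, then realization of the shears by time-dependent gradient modules via \cite[Theorem 5]{JinZZetal20}). Your write-up is in fact a considerably more detailed elaboration of the same argument, including the correct observation that one must not use the phase-space extension since it destroys separability.
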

\begin{proof}
	Analogous to the proof of \Cref{the:g-sympnet}. Since the system is non-autonomous, we have to use \Cref{the:compcor} instead of \Cref{the:comp}.
\end{proof}
\begin{theorem}[Approximation theorem for non-autonomous LA-SympNets]
	\label{the:naLA-sympnet}
	For any $r \in \mathbb{N}$ and open $U \in \R^{2d}$, the set of NATLA-SympNets $\Psi_{NATLA}$ is $r$-uniformly dense on compacta in
	\begin{align*}
		\left\{\Phi: \R_0^+ \times \R_0^+\times U \to \R^{2d} \mid \Phi \text{ is phase flow of \cref{eqn:Hsys}}\right\},
	\end{align*}
	where $H$ is given by
	\begin{align*}
		H(p,q,t)= K(p,t) + V(q,t)
	\end{align*}
	for some $K,V \in C^{r+1}(U\times \R_0^+)$, if the activation function $\sigma$ is $r$-finite.
\end{theorem}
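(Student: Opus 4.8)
The plan is to mirror the direct argument used for \Cref{the:g-sympnet}, replacing the autonomous splitting integrator by its non-autonomous analogue and realising the shears through the linear and activation modules of the NATLA architecture. First I would reduce to smooth potentials: since $C^\infty$ is dense in $C^{r+1}$ in the $\normtd{\cdot}$-topology, assume $K,V\in C^\infty$, so that the flow $\Phi$ of the non-autonomous system is smooth as well \cite{DuiK12}. Then I define, for each layer, the time-adapted symplectic-Euler integrator
\begin{align*}
	g_m\left(h,t,\pq\right):=\begin{pmatrix}
		p-\frac{h}{m}\nabla V(q,t)\\
		q+\frac{h}{m}\nabla K\!\left(p-\frac{h}{m}\nabla V(q,t),\,t\right)
	\end{pmatrix},
\end{align*}
which is exactly \cref{eq:ghm} augmented with the time argument and which advances $t\mapsto t+h/m$ between layers, so that in the notation of \Cref{the:compcor} the step map $g_{i,m}$ targets $\Phi_{1,i,m}(h,t,x)=\Phi(h/m,t+ih/m,x)$.

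The core of the proof is the local estimate $\normt{g_{i,m}-\Phi_{1,i,m}}\le C/m^2$ on $[0,T]\times\tilde W_t\times\tilde W$, which is precisely the hypothesis of \Cref{the:compcor}. I would reuse the case split from \Cref{the:g-sympnet}: for $D^\alpha=D^\alpha_x$ with no time derivative, Taylor-expand both $\Phi$ and $g_m$ in $h$ and bound the $O(h^2/m^2)$ remainder; for $D^\alpha=D^\beta_x\partial_t$ with a single time derivative, differentiate in $h$ first and again obtain an $O(1/m^2)$ bound; and for $D^\alpha=D^\beta_x\partial_t^n$ with $n\ge 2$ use that $g_m(h,\cdot)=g(h/m,\cdot)$ for a fixed $m$-independent $g$ (as in \cref{eqn:higher_derivs}) to gain a factor $1/m^n$. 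The only new feature relative to the autonomous case is that $\partial_t$ now also falls on the explicit potentials $\nabla K(\cdot,t)$ and $\nabla V(\cdot,t)$; since $K,V\in C^\infty$, these and all their derivatives are bounded on the compact set $\tilde W_t\times\tilde W$, so the same constants $C_g^{(n)},C_\Phi^{(n)}$ can be formed and summed against the multinomial factor $\binom{2d+r}{2d}$. Applying \Cref{the:compcor} then yields $\normt{\Phi-G_{m,m}}\le\tilde C/m\le\epsilon/2$ once $m\ge 2\tilde C/\epsilon$.

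It remains to approximate the fixed composition $G_{m,m}$ within $\epsilon/2$ by an element of $\Psi_{\text{NATLA}}$, which I would do as in \cite[Theorem 5]{JinZZetal20}. Each factor shear $\begin{bmatrix} I & \frac{h}{m}\nabla V(\cdot,t)\\ 0 & I\end{bmatrix}$, and its $\nabla K$ counterpart, is written as $v^{-1}\circ\mathcal N^{\text{up}}(h,\cdot)\circ v$ via the identity $v_i=\tilde v_i$ recorded after \Cref{def:new_la_sympnet}, where $v$ encodes the symplectic matrix $\diag(K^{-T},K)$ (a finite product of symmetric-shear linear modules $\mathcal L_n$ by the standard $Sp(2d)$ generation result) and $\mathcal N^{\text{up}}$ carries the activation $\diag(a)\sigma(\cdot+tc+b)$, so that $v^{-1}\circ\mathcal N^{\text{up}}\circ v$ realises the gradient field $K^T\diag(a)\sigma(K\,\cdot+tc+b)$ with the $t$-dependence sitting in the $c$-term of $\tilde\sigma_{a,b,c}$. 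Because $\sigma$ is $r$-finite, such single-hidden-layer fields are $r$-uniformly dense on compacta in the gradients $(x,t)\mapsto\nabla V(x,t)$, so each of the finitely many ($4m$) factors is approximable to any tolerance in $\normt{\cdot}$; since a fixed finite composition of $C^r$ maps is stable under $C^r$-perturbation of its factors (\Cref{lem:lem1} and \Cref{lem:lem2}), the accumulated error stays below $\epsilon/2$, giving some $\psi\in\Psi_{\text{NATLA}}$ with $\normt{\Phi-\psi}<\epsilon$; discarding all $t$-terms ($c=0$, no $\partial_t$) recovers \Cref{the:la-sympnet}. I expect the main obstacle to be the $C/m^2$ estimate of the second paragraph, specifically keeping the mixed space--time derivatives organised so that the single-time-derivative case does not degrade below $1/m^2$; the universal-approximation step is essentially a citation once the module-factorisation identity is in hand.
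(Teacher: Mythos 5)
Your first half --- the non-autonomous symplectic-Euler step $g_m$, the case-split $C/m^2$ estimate, and the application of \Cref{the:compcor} --- is sound and is essentially how the paper proves \Cref{the:naG-sympnet}, which its proof of \Cref{the:naLA-sympnet} then invokes. The gap is in your second step, where you claim that each width-$nd$ neural shear can be realised \emph{exactly} as a composition of sandwiches $v^{-1}\circ\mathcal N(h,\cdot)\circ v$ ``via the identity $v_i=\tilde v_i$'', and hence that each of the finitely many factors of $G_{m,m}$ ``is approximable to any tolerance''. That identity is an autonomous statement, and it breaks in the non-autonomous architecture: every NATLA activation module advances the time variable by its share of the step (in the paper's construction, each $s_i$ maps $t\mapsto t+\frac{h}{mn}$), so in the composition of $n$ sandwiches replacing a single width-$nd$ gradient module, the $i$-th sandwich evaluates its activation at the shifted time $t+\frac{ih}{mn}$ rather than at $t$. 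Concretely,
\begin{align*}
	(s_n\circ\dots\circ s_1)\left(h,\pq,t\right)=\begin{pmatrix} p\\ q+\sum_{i=1}^n \frac{h}{m}K_i^T\diag(a_i)\,\sigma\left(K_ip+c_i\left(t+\frac{ih}{mn}\right)+b_i\right)\\ t+\frac{h}{m}\end{pmatrix}
\end{align*}
is \emph{not} equal to the NATG gradient module it is meant to reproduce; the mismatch is of order $h^2/m^2$, with constants tied to the genuine time-dependence of the potentials (through the $c_i$), and it cannot be made small by increasing the width $n$ or by improving the neural approximation of $\nabla V,\nabla K$ --- it only decays as $m\to\infty$. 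This is exactly the obstruction the paper flags after its proof: the layers of NATG-SympNets are \emph{not} equal to compositions of NATLA layers, only asymptotically approximated by them.

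Because of this, your proof structure --- first fix $m$ so that $\normt{\Phi-G_{m,m}}\le\epsilon/2$, then approximate the fixed map $G_{m,m}$ to within $\epsilon/2$ --- does not close: for fixed $m$ the NATLA class cannot approximate $G_{m,m}$ arbitrarily well, since the accumulated time-shift error is a floor of order $1/m$ that is independent of all remaining architectural parameters. The paper's repair, which your proposal is missing, is to quantify the mismatch rather than eliminate it: Taylor-expand $s_{\text{low}}=s_n\circ\dots\circ s_1$ at $h=0$ to write $s_{\text{low}}=u_{\text{low}}+\frac{h^2}{2}\bigl(0,\sum_i\partial_h^2Q_i,0\bigr)^T$, bound the remainder in $C^r$ by $C/m^2$ using the same bookkeeping as \cref{eqn:higher_derivs}, combine this with the splitting estimate $\norm{g_{i,m}-\Phi_{1,i,m}}\le C/m^2$ by the triangle inequality, and only then invoke \Cref{the:compcor} once, so that the single choice of $m$ simultaneously controls the splitting error and the NATLA-versus-NATG mismatch. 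In short, the two error sources must be driven to zero jointly in $m$, not sequentially as your proposal assumes.
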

\begin{proof}
	Like in the proof of \Cref{the:la-sympnet}, we obtain the universal approximation property of NATLA-SympNets by showing that they can approximate the same systems as NATG-SympNets and obtain the desired result by using \Cref{the:naG-sympnet}. Without loss of generality, we focus on the lower gradient modules
	\begin{align*}
		u_{\text{low}}\left(h, \pq, t\right)&=
		\begin{pmatrix}
			p\\
			q + \frac{h}{m}K^T \diag(a) \sigma(K p + c t+ b)\\
			t + \frac{h}{m}
		\end{pmatrix}\\
		&=
		\begin{pmatrix}
			p\\
			q + \frac{h}{m}\sum_{i=1}^n K_i^T \diag(a_i) \sigma(K_i p + c_i t + b_i)\\
			t + \frac{h}{m}
		\end{pmatrix}
	\end{align*}
	and assume that the width of said modules is a multiple of $d$, i.e., $K=(K_1^T, ..., K_n^T)^T\in \R^{nd \times d},~a=(a_1^T, ..., a_n^T)^T, b=(b_1^T, ..., b_n^T)^T \in \R^{nd}$. If this requirement is not met, one could extend all of the parameters with zero rows until it is met without changing the output of the layer. Now let
	\begin{align*}
		s_i\left(h, \pq, t\right)=\begin{pmatrix}
			\begin{pmatrix}
				K_i^{-1} & 0\\
				0 & K_i^T
			\end{pmatrix}\begin{bmatrix}
				I & 0 \\
				\frac{h}{m} \tilde{\sigma}_{a_i, b_i, c_i} & I
			\end{bmatrix}
			\begin{pmatrix}
				K_i & 0\\
				0 & K_i^{-T}
			\end{pmatrix}\pq\\
			t + \frac{h}{mn}
		\end{pmatrix}
	\end{align*}
	for $i=1,...,n$. Obviously $s_i$ is a NATLA-SympNet and omitting the notation of $h$ as an argument of $s_i$, we observe
	\begin{align*}
		(s_n\circ ... \circ s_1)\left(h, \pq, t\right)=\begin{pmatrix}
			p \\
			q + \sum_{i=1}^n\frac{h}{m}K_i^T \diag(a_i)\sigma\left(K_i p + c_i \left(t + \frac{ih}{mn}\right) + b_i\right)\\
			t + \frac{h}{m}
		\end{pmatrix},
	\end{align*}
	which is almost the same as $u_{\text{low}}(h, (p^T, q^T)^T, t)$. A Taylor expansion of $s_{\text{low}}:=s_n\circ ... \circ s_1$ at $h=0$ yields
	\begin{align*}
		s_{\text{low}}\left(h, \pq, t\right)=&\begin{pmatrix}
			p\\ q\\ t
		\end{pmatrix}+ h 
		\begin{pmatrix}
			0 \\
			\sum_{i=1}^n \frac{1}{m}K_i^T \diag(a_i) \sigma(K_i p + c_i t  + b_i)\\
			\frac{1}{m}
		\end{pmatrix}\\
        &+ \frac{h^2}{2}
		\begin{pmatrix}
			0\\ \sum_{i=1}^n \partial_h^2 Q_i\left(\xi, \pq, t\right)\\0
		\end{pmatrix}\\
		=&u_{\text{low}}\left(h, \pq, t\right) + \frac{h^2}{2}
		\begin{pmatrix}
			0\\ \sum_{i=1}^n \partial_h^2 Q_i\left(\xi, \pq, t\right)\\0
		\end{pmatrix}
	\end{align*}
	with $\xi \in (0,h)$ and
	\begin{align*}
		Q_i\left(h,\pq, t\right):=\frac{h}{m}K_i^T \diag(a_i) \sigma\left(K_i p + c_i\left(t + \frac{ih}{mn}\right)+ b_i\right).
	\end{align*}
	Now one can follow the the same arguments as we used to derive \cref{eqn:higher_derivs} to show that
	\begin{align*}
		\norm{\sum_{i=1}^n \partial_h^2Q_i}_{C^r([0,T]\times \tilde{W}_t \times \tilde W;\R^{2d+1})}\leq \frac{C}{m^2}
	\end{align*} for some constant $C<\infty$ independent of $m$ and arbitrary compact $\tilde{W}_t, \tilde{W}$. So in the limit $m \to \infty$ NATLA-SympNets can approximate the layers of NATG-SympNets with $1/m^2$ accuracy. Since the same arguments can be made for the upper gradient modules, one obtains
	\begin{align}
		\label{eq:dif_g_la}
		\norm{\tilde{g}_{i,m}- g_{i,m}}_{C^r([0,T]\times \tilde{W}_t \times \tilde W;\R^{2d+1})}\leq \frac{\tilde{C}}{2m^2}
	\end{align}
	by using the same arguments as in the proof of \Cref{the:g-sympnet}, where $\tilde{C}< \infty$ is independent of $m$, $g_{i,m}$ is the composition of two gradient modules $u_\text{low}\circ u_\text{up}$ and $\tilde{g}_{i,m}=s_\text{low}\circ s_\text{up}$ is the composition of $s_\text{low}$ and it's counterpart $s_\text{up}$. Note that here $m$ is actually the number of layers, where as for the autonomous proofs $2m$ was the number of layers, since $g_{i,m}$ was the composition of an upper and a lower module.
	Let $\Phi_{1,i,m}$ be defined as in \Cref{the:compcor}. Now the triangle inequality yields
	\begin{align*}
		&\norm{\tilde{g}_{i,m}- \Phi_{1,i,m}}_{C^r([0,T]\times \tilde{W}_t \times \tilde W;\R^{2d+1})}\\
		&\leq \norm{\tilde{g}_{i,m}- g_{i,m}}_{C^r([0,T]\times \tilde{W}_t \times \tilde W;\R^{2d+1})}+ \norm{\Phi_{1,i,m}- g_{i,m}}_{C^r([0,T]\times \tilde{W}_t \times \tilde W;\R^{2d+1})},
	\end{align*}
	where both summands on the right can be bound by $\tilde{C}/(2m^2)$ for some $\tilde{C}<\infty$ because of \cref{eq:dif_g_la} and \Cref{the:naG-sympnet}. Hence,
	\Cref{the:compcor} implies that we can approximate the flow with $1/m$ accuracy.
\end{proof}
Contrary to the proof of \cite[Theorem 4]{JinZZetal20} we were not able to show that the layers of NATG-SympNets are equal to compositions of layers of NATLA-SympNets. We just showed that in the limit the number of layers $m\to \infty$ NATLA-SympNets can approximate NATG-SympNets with arbitrary accuracy.
\subsection{Limitations}
\label{subsec:lim}
The time-adaptive SympNets constructed in \Cref{subsec:td_SympNet} are able to approximate the flow of separable Hamiltonian systems. Unfortunately these SympNets are not able to approximate non-separable Hamiltonian systems.
To see this, we observe that for the true flow $\Phi$
\begin{align*}
	\partial_t \Phi(0,x)= \dot{x}(t_0)= J^{-1}\nabla H(x(t_0))
\end{align*}
holds. Now if wee look at a TSympNet $\psi=u_m \circ ... \circ u_1$, we observe
\begin{align*}
	\frac{\partial u_i(h,x)}{\partial h}= \begin{bmatrix}
		0 & \nabla V_i\\ 0 & 0
	\end{bmatrix}(x)=
	\begin{pmatrix}
		\nabla V_i(q) \\0
	\end{pmatrix} \qquad \text{ or } \qquad 
	\frac{\partial u_i(h,x)}{\partial h}= \begin{bmatrix}
		0 & 0\\ \nabla K_i & 0
	\end{bmatrix}(x)=\begin{pmatrix}
		0\\ \nabla K_i(p)
	\end{pmatrix}
\end{align*}
and
\begin{align*}
	\frac{\partial u_i(h,x)}{\partial x}=\begin{pmatrix}
		I & h \nabla^2 V_i(q)\\
		0 & I
	\end{pmatrix} \qquad \text{ or } \qquad 
	\frac{\partial u_i(h,x)}{\partial x}=\begin{pmatrix}
		I & 0\\
		h \nabla^2 K_i(p) & I
	\end{pmatrix}
\end{align*}
depending on whether $u_i, i=1,...,m$ is an upper or lower symplectic module. Therefore, we see that $\partial_x u_i(0,x)$ does not depend on $x$ and $u_i(0,x)=x$. The time derivative of $\psi$ at $h=0$ simplifies to
\begin{align}\label{eqn:crSN}
	\partial_h \psi(0,x)&=\frac{\partial u_m}{\partial x}(0,(u_{m-1}\circ... \circ u_1)(0,x))\frac{d}{dh}(u_{m-1}\circ...\circ u_1)(0,x)+ \frac{\partial u_m}{\partial h}(0,(u_{m-1}\circ... \circ u_1)(0,x))\notag \\
	&=\begin{pmatrix}
		I & 0\\
		0 & I
	\end{pmatrix}\frac{d}{dh}(u_{m-1}\circ...\circ u_1)(0,x)+\frac{\partial u_m}{\partial h}(0,x).
\end{align}
By repeating this argument we have
\begin{align*}
	\partial_h \psi(0,x)=\sum_{i=1}^{m}\frac{\partial u_i}{\partial h}(0,x)= \begin{pmatrix}
		-\nabla V(q)\\ \nabla K(p)
	\end{pmatrix}
\end{align*}
for $K:=\sum_{i}K_i$ and $V:=-\sum_{i}V_i$. Hence the gradient of the associated Hamiltonian $\tilde{H}$ is given by
\begin{align}
	\label{eq:inf_sep}
	\nabla \tilde{H}(x)= J \partial_h \psi(0,x)=J\ \begin{pmatrix}
		-\nabla V(q)\\ \nabla K(p)
	\end{pmatrix}=
	\begin{pmatrix}
		\nabla K(p)\\ \nabla V(q)
	\end{pmatrix}
\end{align}
and $\tilde{H}=K(p)+V(q)$ is separable.\\
Now we could think about other ways to make SympNets time-adaptive to extend their applicability to non-separable Hamiltonian systems. Let's generalize our TSympNet $\psi=u_m \circ ... \circ u_1$ by allowing time-dependent potentials, i.e.,
\begin{align*}
	u_i(h,x):= \begin{bmatrix}
		I & \nabla V_i(h,\cdot)\\
		0 & I
	\end{bmatrix}(x) \qquad \text{ or } \qquad
	u_i(h,x):= \begin{bmatrix}
		I & 0\\
		\nabla K_i(h,\cdot) & I
	\end{bmatrix}(x)
\end{align*}
for $i=1,..,m$. One of the major advantages of the TSympNets suggested in \cite{JinZZetal20} is $\psi(0,x)=x=\Phi(0,x)$. If we want to enforce this in our new structure, we need $\nabla V_i(0,q)=\nabla K_i(0,p)\equiv 0$. For this approach we observe
\begin{align*}
	\frac{\partial u_i(h,x)}{\partial h}= \begin{bmatrix}
		0 & \partial_h\nabla  V_i(h,\cdot)\\ 0 & 0
	\end{bmatrix}(x)=
	\begin{pmatrix}
		\partial_h \nabla V_i(h,q) \\0
	\end{pmatrix}
\end{align*}
or
\begin{align*}
	\frac{\partial u_i(h,x)}{\partial h}= \begin{bmatrix}
		0 & 0\\ \partial_h\nabla K_i(h,\cdot) & 0
	\end{bmatrix}(x)=\begin{pmatrix}
		0\\ \partial_h\nabla K_i(h,p)
	\end{pmatrix}
\end{align*}
and
\begin{align*}
	\frac{\partial u_i(h,x)}{\partial x}=\begin{pmatrix}
		I &  \nabla^2 V_i(h,q)\\
		0 & I
	\end{pmatrix} \qquad \text{ or } \qquad 
	\frac{\partial u_i(h,x)}{\partial x}=\begin{pmatrix}
		I & 0\\
		\nabla^2 K_i(h,p) & I
	\end{pmatrix}.
\end{align*}
With $\nabla V_i(0,q)\equiv 0$ we have
\begin{align*}
	\partial_{q_j}\nabla V_i(0,q)=\lim\limits_{h\to 0}\frac{\nabla V_i(0,q+ h e_j)- \nabla V_i(0,q)}{h}=0.
\end{align*}
Therefore $\nabla^2 V_i(0,q)\equiv 0$. The same argument holds for $\nabla^2 K_i$, and we have $\frac{\partial u_i(0,x)}{\partial x}=I_{2d}$. By repeating the argument from \cref{eqn:crSN}, again we have
\begin{align*}
	\partial_t \psi(0,x)=\sum_{i=1}^{m}\frac{\partial u_i}{\partial t}(0,x)= \begin{pmatrix}
		-\nabla V(q)\\ \nabla K(p)
	\end{pmatrix}
\end{align*}
with $V:=- \sum_i \partial_h V_i(0,\cdot)$ and $K:=\sum_i \partial_h K_i(0,\cdot)$. This leads to a separable Hamiltonian $\tilde{H}=K(p)+V(q)$ as well. The same arguments hold for TSympNets for non-autonomous Hamiltonian systems. So unfortunately there is no way to approximate non-separable Hamiltonian systems with TSympNets, if we want to keep the triangular structure of the symplectic modules, have a time step in every layer and enforce $u_i(0,x)=x$ strongly.\\
Note the arguments above do not hold for our new TLA-SympNets, while they are true for the original TLA-SympNets. We observe that this is not only a theoretical, but also a practical benefit in \Cref{subsec:lin}. The question if our new TLA-SympNets are universal approximators of general symplectic maps remains open for future work.\\
Obviously, the above arguments also hold for the non-autonomous case. Hence, the same limitations apply to NATG- and NATLA SympNets.
\subsection{Limitations of the original TLA-SympNets}
\label{subsec:lim_og_LA}
In this subsection we give our reasoning for choosing a different architecture for TLA-Sympnets as originally proposed in \cite{JinZZetal20}. Note that we did not derive a universal approximation theorem them. In the following we prove why this is impossible.
The linear layers of original time-dependent LA-SympNets are given by
\begin{align*}
	\mathcal{L}_n^{\text{up}}\left(h,
	\pq\right)&=
	\begin{pmatrix}
		I & 0/h S_n\\
		h S_n/0 & I
	\end{pmatrix}\hdots
	\begin{pmatrix}
		I & 0\\
		h S_2 & I
	\end{pmatrix}
	\begin{pmatrix}
		I & h S_1\\
		0 & I
	\end{pmatrix}
	\pq+b,\\
	\mathcal{L}_n^{\text{low}}\left(h,
	\pq\right)&=
	\begin{pmatrix}
		I & 0/h S_n\\
		h S_n/0 & I
	\end{pmatrix}\hdots
	\begin{pmatrix}
		I & h S_2\\
		0 & I
	\end{pmatrix}
	\begin{pmatrix}
		I & 0\\
		h S_1 & I
	\end{pmatrix}
	\pq+b.
\end{align*}
For simplicity we will focus on the $\mathcal{L}_n^{\text{up}}$ modules and assume $n$ to be even to simplify the notation
\begin{align*}
	\mathcal{L}_n\left(h,
	\pq\right)&=
	\begin{pmatrix}
		I & 0\\
		h S_n & I
	\end{pmatrix}\hdots
	\begin{pmatrix}
		I & 0\\
		h S_2 & I
	\end{pmatrix}
	\begin{pmatrix}
		I & h S_1\\
		0 & I
	\end{pmatrix}
	\pq+b.
\end{align*}
Now let $v_i$ be the composition of a linear layer with an activation layer
\begin{align*}
	v_i^{\text{up}}\left(h,\pq\right)&=\begin{bmatrix}
		I & h \sigma_{a_{q,i},b_{q,i}}\\
		0 & I
	\end{bmatrix}\mathcal{L}_n^{(i)}\left(h, \pq\right),\\
	v_i^{\text{low}}\left(h,\pq\right)&=\begin{bmatrix}
		I & 0 \\
		h \sigma_{a_{p,i},b_{p,i}} & I
	\end{bmatrix}\mathcal{L}_n^{(i)}\left(h, \pq\right).
\end{align*}
Then we observe
\begin{align*}
	\partial_h v_i^{\text{up}}\left(h, \pq\right)\bigg\vert_{h=0}&=
	\begin{bmatrix}
		0 & \sigma'_{a_{q,i},b_{q,i}}\\
		0 & 0
	\end{bmatrix}\pq+\sum_{j=1}^{n}\begin{pmatrix}
		0 & S_j^{(i)}/0\\
		0/S_j^{(i)} & 0
	\end{pmatrix}\pq=\begin{pmatrix}
		\tilde{S}_i q +\sigma'_{a_{q,i},b_{q,i}}(q)\\
		\hat{S}_ip
	\end{pmatrix},\\
	\partial_h v_i^{\text{low}}\left(h, \pq\right)\bigg\vert_{h=0}&=
	\begin{bmatrix}
		0 & 0\\
		\sigma'_{a_{p,i},b_{p,i}} & 0
	\end{bmatrix}\pq+\sum_{j=1}^{n}\begin{pmatrix}
		0 & S_j^{(i)}/0\\
		0/S_j^{(i)} & 0
	\end{pmatrix}\pq=\begin{pmatrix}
		\tilde{S}_iq\\
		\hat{S}_ip +\sigma'_{a_{p,i},b_{p,i}}(p)
	\end{pmatrix},
\end{align*}
with $\tilde{S}_i=\sum_{j=1}^{n/2}S_{2j-1}^{(i)},~\hat{S}_i=\sum_{j=1}^{n/2}S_{2j}^{(i)}$.  With the same derivations as for the other modules, we obtain
\begin{align*}
	\partial_x v_i^{\text{up}}\left(0, \pq\right)=\partial_x v_i^{\text{low}}\left(0, \pq\right) = I.
\end{align*}
If we compute the time derivative of the entire LA-SympNet, we arrive at
\begin{align*}
	\frac{d}{dh}\left(v_m^{low}\circ v_m^{\text{up}}\circ \hdots \circ v_1^{low}\circ v_1^{\text{up}}\right)\bigg\vert_{h=0}&=
	\sum_{i=1}^{m} \partial_h v_i^{\text{up}}\left(0,\pq\right)+\partial_h v_i^{\text{low}}\left(0,\pq\right)\\
	&=\sum_{i=1}^m\begin{pmatrix}
		\tilde{S}_i^{\text{up}}q + \tilde{S}_i^{low}q + \sigma'_{a_{q,i},b_{q,i}}(q)\\
		\hat{S}_i^{\text{up}}p + \hat{S}_i^{low}p + \sigma'_{a_{p,i},b_{p,i}}(p)
	\end{pmatrix}\\
	&=\begin{pmatrix}
		S_q q + \sum_{i=1}^m\sigma'_{a_{q,i},b_{q,i}}(q)\\
		S_p p + \sum_{i=1}^m\sigma'_{a_{p,i},b_{p,i}}(p)
	\end{pmatrix},
\end{align*}
with $S_q = \sum_{i=1}^m\tilde{S}_i^{low}+\tilde{S}_i^{\text{up}}$ and $S_p=\sum_{i=1}^m \hat{S}_i^{\text{up}}+\hat{S}_i^{low}$.
So the learned Hamiltonian always has the form
\begin{align*}
	\tilde{H}(p,q)=-\frac{1}{2} S_q q^2 + \frac{1}{2} S_p p^2 + \sum_{i=1}^m a_{p,i}^T\sigma(p+b_{p,i})- \sum_{i=1}^m a_{q,i}^T \sigma(q+b_{q,i}).
\end{align*}
Note that the functions in the sums do are not universal approximators of the potentials that we would need to have a general separable Hamiltonian, since the matrices inside the activation functions are missing \cite{HorSW90}. 
Since this severely limits the set of Hamiltonian systems whose flow that can be expressed by the orignal TLA-SympNets, we proposed a different architecture in \Cref{subsec:td_SympNet}.

\section{Numerical Experiments}
\label{sec:exp}
In this section we validate our theoretical results with numerical experiments. In \Cref{subsec:pend} we show that our new LA-SympNet architecture can solve the pendulum problem, since this is the example of choice in \cite{JinZZetal20}. In \Cref{subsec:lin} we show that the theoretical limitation of the presented methods to separable Hamiltonian systems also holds in practice. We consider a very easy linear, but non-separable Hamiltonian system and observe that both TG-SympNets and the orignal TLA-SympNets fail at learning the system. In \Cref{subsec:ho} we show the necessity of our extensions for non-autonomous Hamiltonian systems. We observe that the autonomous methods fail, while the non-autonomous methods are able to make accurate predictions.\\
The training in all following subsections is performed by minimizing the mean squared error (MSE)
\begin{align*}
	L = \frac{1}{N}\sum_{i=1}^N\norm{\psi(x_i)-y_i}_2^2
\end{align*}
for $x_i,y_i$ in the training set $\mathcal{T}=\{(x_i,y_i)\}_{i=1}^N$ with the Adam optimizer \cite{KinB17}. We use $\tanh$ as our activation function and the architectures found in \Cref{tab:arch}. To perform the training we adapted the "Learner" framework (https://github.com/jpzxshi/learner).
\begin{table}
	\centering
	\begin{tabular}{|c|c|c|c|c|c|}
		\hline {Experiment} & NN Type & Layers & Sublayers & Width & Parameters \\  \hline \multirow{3}{*}{Pendulum} & TG-SympNet & 5 & N/A & 30 & 450 \\ \cline{2-6} & OG TLA-SympNet & 5 & 4 & N/A & 34 \\ \cline{2-6} & TLA-SympNet & 5 & 4 & N/A & 30 \\ \hline \multirow{3}{*}{Linear example} & TG-SympNet & 5 & N/A & 30 & 450 \\  \cline{2-6} & OG TLA-SympNet & 8 & 4 & N/A & 55 \\ \cline{2-6} & TLA-SympNet & 8 & 4 & N/A & 48 \\ \hline \multirow{4}{*}{Forced harmonic oscillator} & TG-SympNet & 6 & N/A & 20 & 360 \\ \cline{2-6} & TLA-SympNet & 240 & 2 & N/A & 960 \\ \cline{2-6}&  NATG-SympNet & 6 & N/A & 20 & 480 \\ \cline{2-6}&  NATLA-SympNet & 240 & 2 & N/A & 1200 \\\hline \end{tabular}
	\caption{Architecture of the SympNets used in the different experiments}
	\label{tab:arch}
\end{table}

\subsection{Pendulum}
\label{subsec:pend}
Like in \cite{JinZZetal20} we first consider the pendulum system with
\begin{align*}
	H(p,q)= \frac{1}{2}p^2 - \cos(q)
\end{align*}
to make sure that our new architecture can deal with the problem, which the original SympNets were able to handle. Also, we use the same kind of training and test datasets as in \cite{JinZZetal20}, i.e., the training dataset consists of $N=40$ tuples $\mathcal{T}=\{([x_i,h_i],y_i)\}_{i=1}^N$. The phase space coordinates $\{x_i\}_{i=1}^N$ are sampled from a uniform distribution over $[-\sqrt{2}, \sqrt{2}]\times [-\frac{1}{2}\pi, \frac{1}{2}\pi]$ (blue circles in \Cref{fig:pend}). The time steps $\{h_i\}_{i=1}^N$ are sampled from a uniform distribution over $[0.2,0.5]$. The labels $y_i$ are prediction of the flow $y_i=\Phi(h_i,x_i)$ (yellow points in \Cref{fig:pend}). Since we do not have an analytical representation of the flow $\Phi$, we use the 6th-order Störmer-Verlet Scheme \cite{HaiLW03} and make 10 steps of size $h_i/10$ to obtain $y_i$. The test dataset is a single trajectory with $k=100$ fixed time steps $h=0.1$ starting at $x_0=(1,0)^T$. All neural networks are trained for 50000 epochs with a learning rate of $10^{-3}$. We test TG-SympNets and both the original and new TLA-SympNets with the architectures found in \Cref{tab:arch}. Their prediction is generated by making $k$ forward evaluations of the respective SympNets.\\
In \Cref{fig:pend} we can see that all the SympNets are able to capture the dynamics of the pendulum system. This validates the results from \cite{JinZZetal20} and shows that our modified LA-SympNets can deal with the problem the original LA-SympNets were tested on even with less parameters.
\begin{figure}
	\centering
	\begin{subfigure}{.33\textwidth}
		\centering
		\includegraphics[width=.9\linewidth]{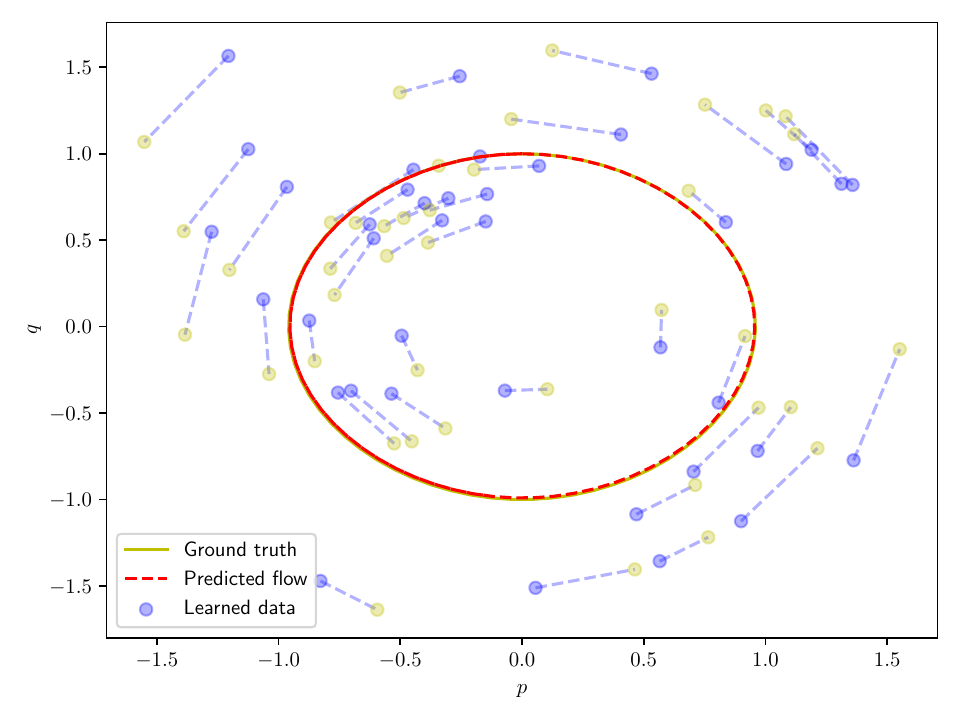}
		\caption{TG-SympNet}
		\label{fig:pend_G}
	\end{subfigure}%
	\begin{subfigure}{.33\textwidth}
		\centering
		\includegraphics[width=.9\linewidth]{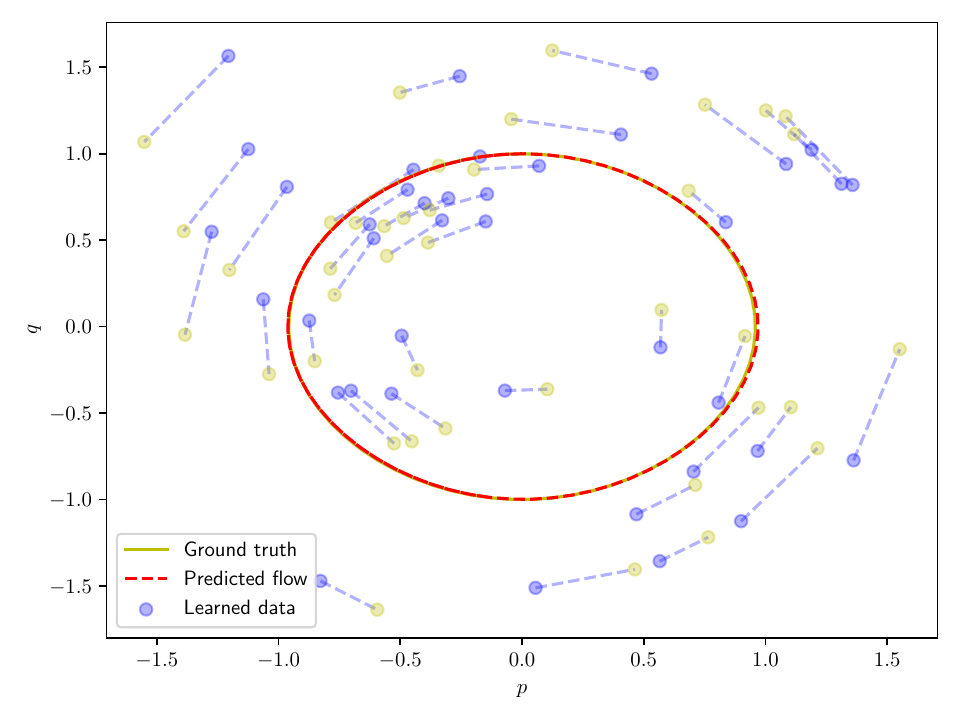}
		\caption{Original LA-SympNet}
		\label{fig:pend_LA}
	\end{subfigure}
	\begin{subfigure}{.33\textwidth}
		\centering\includegraphics[width=0.9\linewidth]{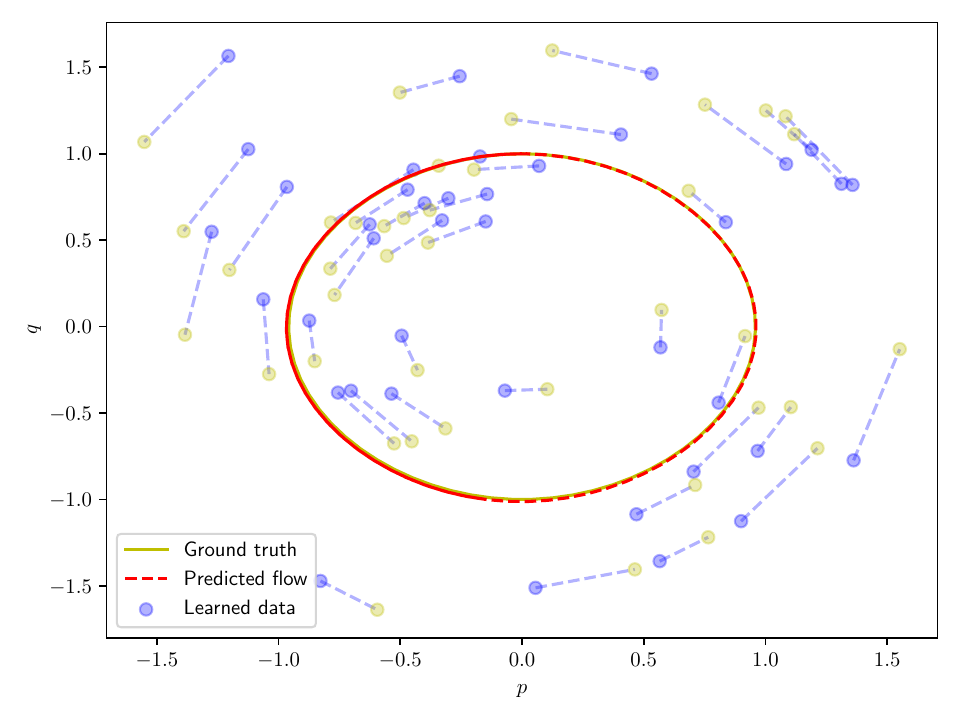}
		\caption{TLA-SympNet}
	\end{subfigure}
	\caption{Pendulum example $H(p,q)=\frac{1}{2}p^2- \cos(q)$. Blue and yellow points with dashed lines represent the training data. The solid yellow line is the test trajectory.}
	\label{fig:pend}
\end{figure}

\subsection{Linear example}
\label{subsec:lin}
Now we consider a simple non-separable linear example with the Hamiltonian given by
\begin{align*}
	H(p, q)=\frac{1}{2}p^2 + 0.4 pq + \frac{1}{2}q^2.
\end{align*}
The point of this experiment is to show that the limitations found in \Cref{subsec:lim} are not only of theoretical nature, but also have practical implications. Recall that both the original TLA- and TG-SympNets should not be able to deal well with non-separable Hamiltonian systems, because they always infer separable Hamiltonians according to \cref{eq:inf_sep}.\\
Training and test datasets are generated the same way as in the pendulum example in \Cref{subsec:pend}. The used architectures can be found in \Cref{tab:arch} again.
\begin{figure}
	\centering
	\begin{subfigure}{.33\textwidth}
		\centering
		\includegraphics[width=0.9\linewidth]{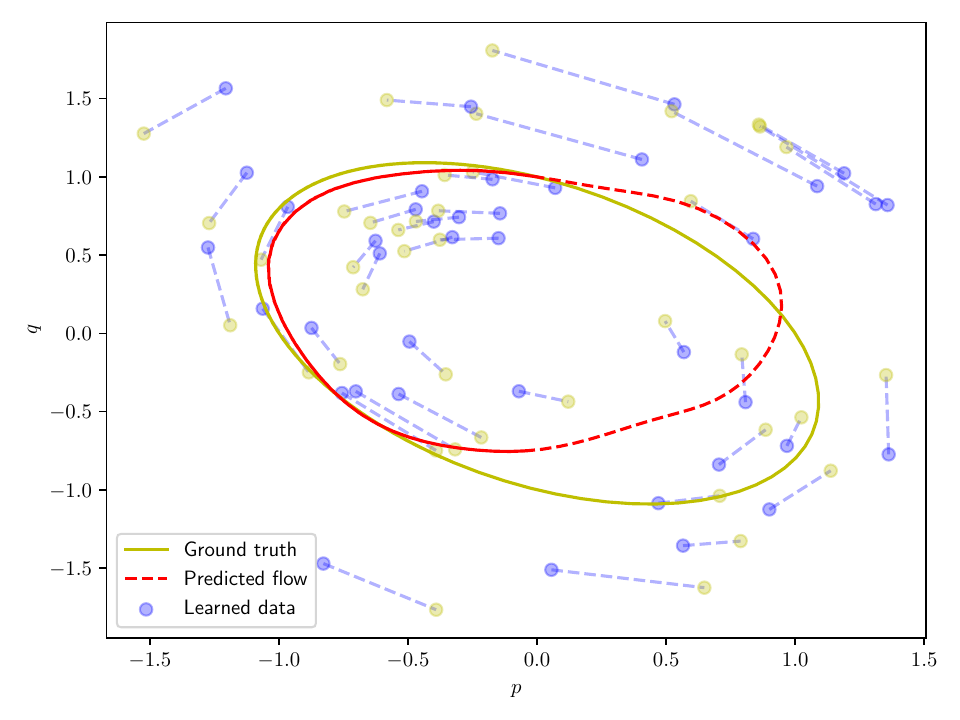}
		\caption{TG-SympNet}
		\label{fig:lin_G}
	\end{subfigure}%
	\begin{subfigure}{.33\textwidth}
		\centering
		\includegraphics[width=0.9\linewidth]{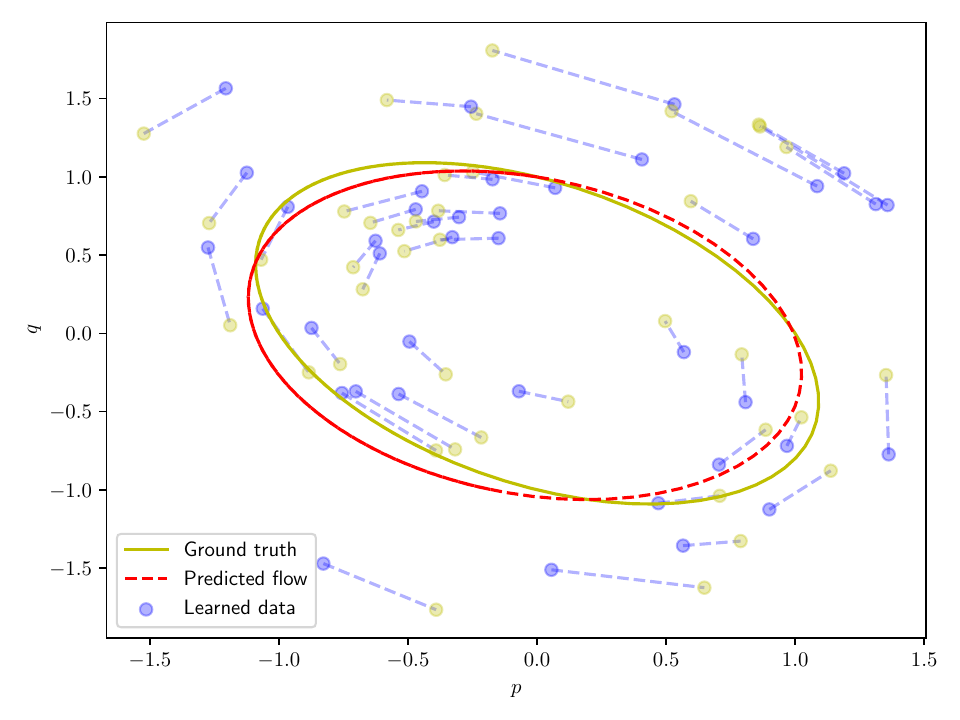}
		\caption{Original TLA-SympNet}
		\label{fig:lin_og_LA}
	\end{subfigure}%
	\begin{subfigure}{.33\textwidth}
		\centering\includegraphics[width=0.9\linewidth]{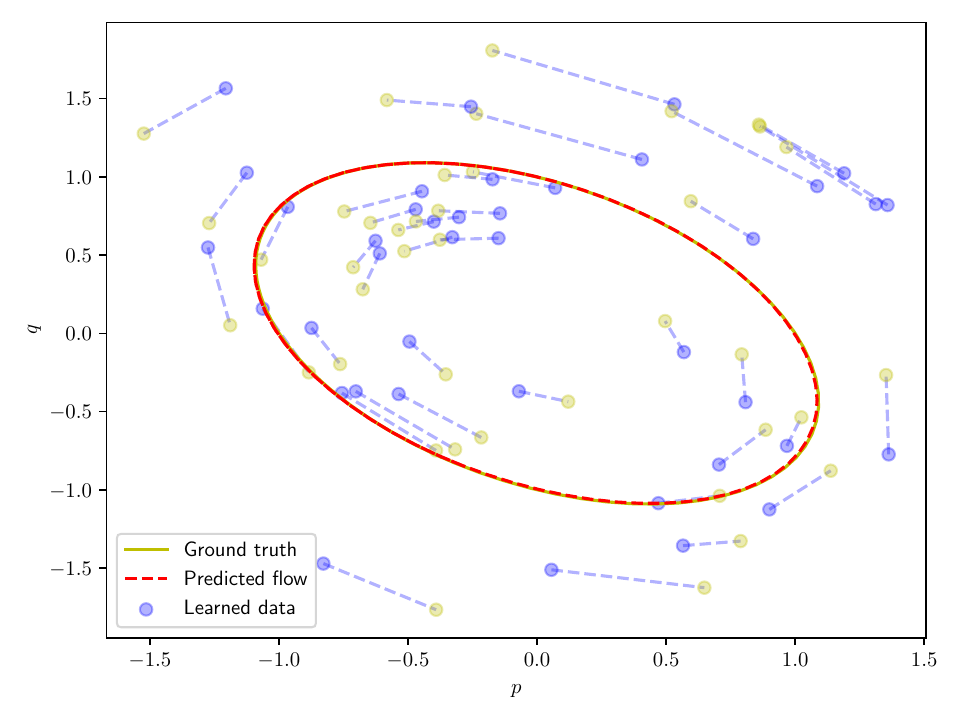}
		\caption{TLA-SympNet}
		\label{fig:lin_LA}
	\end{subfigure}
	\caption{Linear example $H(p,q)=\frac{1}{2}p^2 + 0.4 pq + \frac{1}{2} q^2$. Blue and yellow points with dashed lines represent the training data. The solid yellow line is the test trajectory.}
	\label{fig:lin}
\end{figure}
In \Cref{fig:lin} we observe that our theoretical results from \Cref{subsec:lim} hold true in practice. While both the original TG-SympNet and the original TLA-SympNet fail to capture the dynamics correctly, our new TLA-SympNets are able to do so even though the considered system is not separable. This results suggests that our new TLA-SympNet architecture might be able to approximate the flow of non-separable Hamiltonian systems as well. Investigating them further for non-separable Hamiltonian systems or developing a new architecture by adapting them could be a promising.

\subsection{Forced harmonic oscillator}
\label{subsec:ho}
The last example we consider is the harmonic oscillator with an external sinusiodal force. The non-autonomous Hamiltonian is given by
\begin{align*}
	H(p, q, t)=\frac{1}{2}p^2+\frac{1}{2}\omega_0^2 q^2 - F_0 \sin(\omega t)q.
\end{align*}
Fortunately, the corresponding Hamiltonian system can be solved analytically. The analytical solution is given by
\begin{align*}
	p(t)&=\left(p_0 -\frac{\omega F_0}{\omega_0^2 - \omega^2}\right)\cos(\omega_0 t) -q_0 \omega_0 \sin(\omega_0 t)+ \frac{\omega F_0}{\omega_0^2 - \omega^2}\cos(\omega t),\\
	q(t)&=q_0 \cos(\omega_0 t) + \left(\frac{p_0}{\omega_0} -\frac{\omega F_0}{\omega_0(\omega_0^2 - \omega^2)}\right)\sin(\omega_0 t) + \frac{F_0}{\omega_0^2 - \omega^2}\sin(\omega t),
\end{align*}
with initial conditions $p(0)=p_0,~q(0)=q_0$. Hence, we do not need to use a numerical solver to generate our training and test data. The training data set is given by $N=1600$ tuples $\mathcal{T}=\{[x_i,t_i,h_i],[y_i,\tilde{t}_i]\}_{i=1}^N$, where $\{x_i\}_{i=1}^N$ is sampled from a uniform distribution over $[-3.5,2]\times[-4,4]$. The times $\{t_i\}_{i=1}^N$ and time-steps $\{h_i\}_{i=1}^N$ are also sampled from uniform distributions over $[0,16]$ and $[0,0.3]$ respectively. The labels $\{y_i\}_{i=1}^N$ are given by 
$y_i=(p(t_i+h_i)^T, q(t_i+h_i)^T)^T$. The test data is given by the trajectory of the analytical solution with initial value $(p_0,q_0)^T=(-0.2, -0.5)^T$ evaluated equidistant times $t_i=ih$ with $h=0.2$ for $i=1,...,k,~k=80$. All neural networks are trained for 150000 epochs with a learning rate of $10^{-3}$.\\
\Cref{fig:ho_G,fig:ho_LA} show that both TG-SympNets as well as TLA-SympNets are not able to capture the dynamics of the harmonic oscillator with an external force. This was expected, because the test trajectory intersects with itself. So the SympNets would need to give different predictions at the same phase space coordinate $(p,q)$ depending on the time $t$. Since time is not an input of these SympNets, this is impossible.\\
\Cref{fig:ho_TG} shows that NATG-SympNets are able to capture the dynamics well. The mentioned intersection is not a problem for NATG-SympNets, since it lives on the $(p,q,t)$ phase space and the intersection only happens in the $(p,q)$ space.\\
Note that the proof of \Cref{the:naLA-sympnet} suggests that NATLA-SympNets need more (width of the gradient modules as many) layers than G-SympNets as does the one of \cite[Theorem 4]{JinZZetal20}. In the previous autonomous experiments as well as in the ones of \cite{JinZZetal20} this was not observed. But here we see that the number of layers needed for the NATLA-SympNet is much larger then for NATG-SympNets, see \Cref{tab:arch}. This results in more expensive training and evaluation, making NATG-SympNets the far superior method for non-autonomous Hamiltonian systems. Nevertheless, \Cref{fig:ho_TLA} confirms the theoretical approximation capabilities of NATLA-SympNets established in \Cref{the:naLA-sympnet}.
\begin{figure}[ht]
	\centering
	\begin{subfigure}{.5\textwidth}
		\centering
		\includegraphics[width=.9\linewidth]{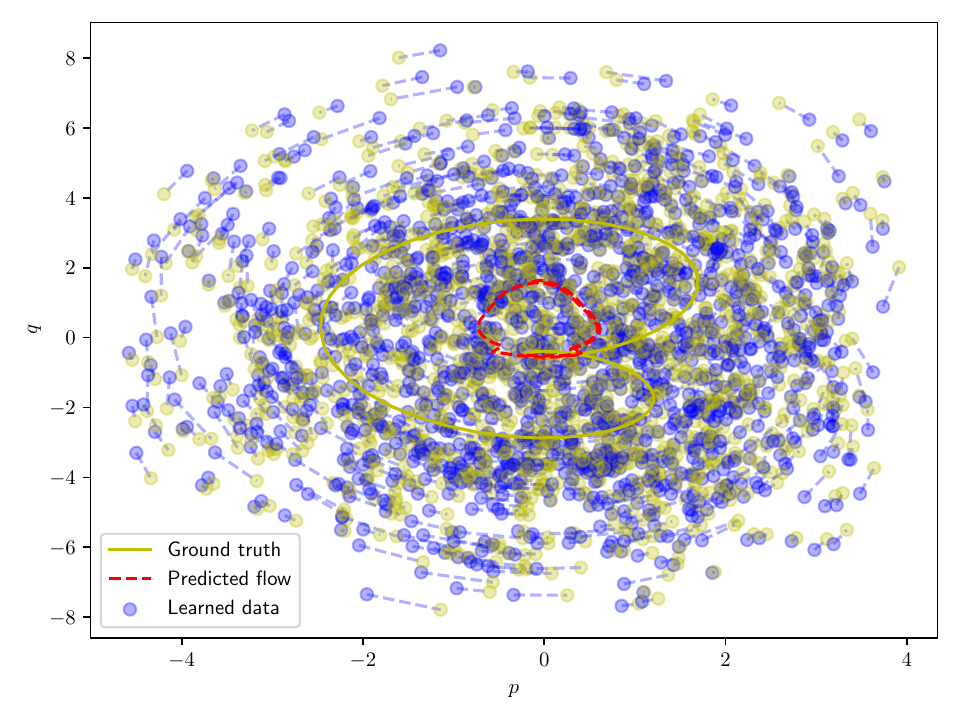}
		\caption{TG-SympNet}
		\label{fig:ho_G}
	\end{subfigure}%
	\begin{subfigure}{.5\textwidth}
		\centering
		\includegraphics[width=.9\linewidth]{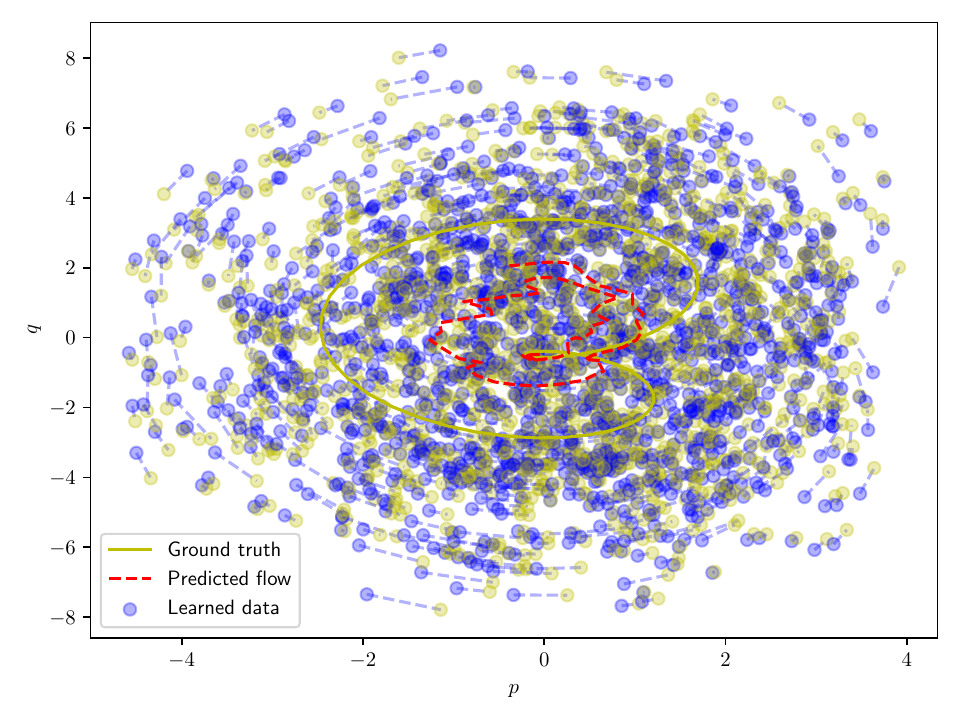}
		\caption{TLA-SympNet}
		\label{fig:ho_LA}
	\end{subfigure}
	\begin{subfigure}{.5\textwidth}
		\centering
		\includegraphics[width=0.9\linewidth]{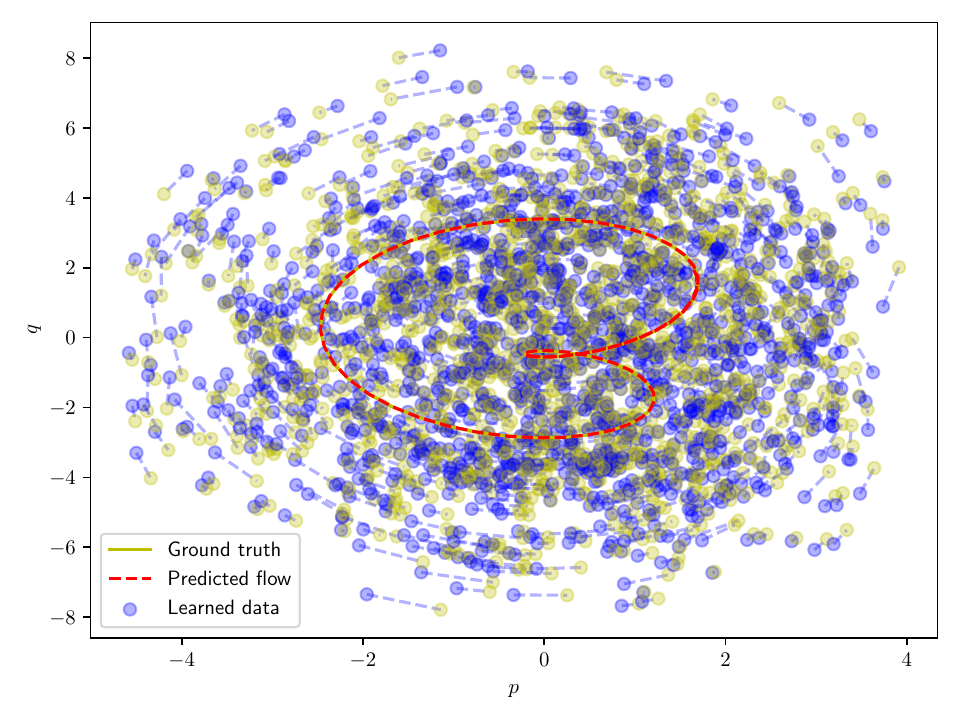}
		\caption{NATG-SympNet}
		\label{fig:ho_TG}
	\end{subfigure}%
	\begin{subfigure}{.5\textwidth}
		\centering
		\includegraphics[width=0.9\linewidth]{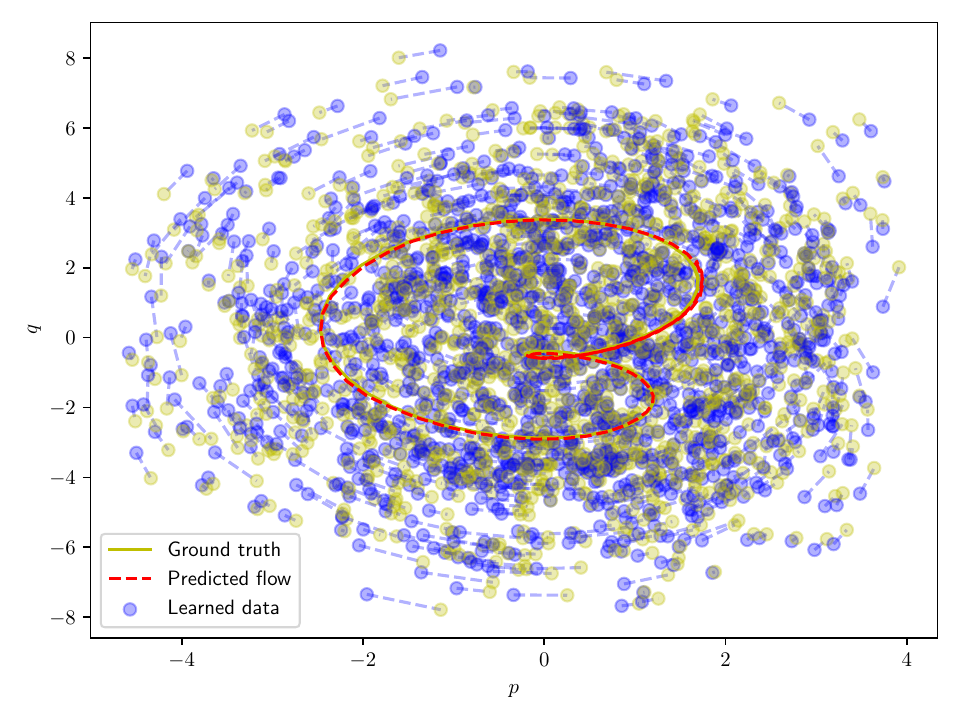}
		\caption{NATLA-SympNet}
		\label{fig:ho_TLA}
	\end{subfigure}
	\caption{Forced harmonic oscillator $H(p,q,t)=\frac{1}{2}p^2+\frac{1}{2}\omega_0^2 q^2 - F_0 \sin(\omega t)q$. Blue and yellow points with dashed lines represent the training data. The solid yellow line is the test trajectory.}
	\label{fig:ho}
\end{figure} 
\section*{Code and Data Availability Statement}
\label{sec:code}
The relevant code and data for this work has been archived within the Zenodo repository \cite{JanB25software}.

\section{Conclusion and Outlook}
\label{sec:conc}
In this work we first corrected the proof of the symplectic polynomial approximation theorem \cite[Theorem 2]{Tur03}, which is an important theorem for the approximation qualities of SympNets and HénonNets and generalized HNNs. We developed a new time-adaptive LA-SympNet architecture, we coined TLA-SympNets and established a universal approximation theorem for them as well as TG-Sympnets limited to separable Hamiltonian systems. This limitation is not an artifact of our proofs, but a direct result of the used architecture shown in \Cref{subsec:lim} theoretically and \Cref{subsec:lin} experimentally. Additionally, we proposed a novel intrinsically symplectic, time-adaptive neural network architecture to handle non-autonomous Hamiltonian systems in NATG-SympNets and NATLA-SympNets. We were able to show that they are universal approximators for the flows of separable non-autonomous Hamitlonian systems. Lastly, we verified our theoretical findings on three numerical experiments.\\
The obvious open research question is how to adapt T-SympNets such that they will be able to handle non-separable Hamiltonian systems as well. Furthermore, we are interested in creating time-adaptive extensions for HénonNets or even generalizing our findings to generalized HNNs.


\addcontentsline{toc}{section}{References}
\bibliographystyle{plainurl}
\bibliography{exampleref}

\begin{thebibliography}{10}

\bibitem{Arn13}
V.~I. Arnold.
\newblock {\em Mathematical Methods of Classical Mechanics}, volume~2 of {\em
  Graduate Texts in Mathematics}.
\newblock Springer, New York, NY, 2013.
\newblock \href {https://doi.org/10.1007/978-1-4757-2063-1}
  {\path{doi:10.1007/978-1-4757-2063-1}}.

\bibitem{Baj23}
J.~{Bajārs}.
\newblock Locally-symplectic neural networks for learning volume-preserving
  dynamics.
\newblock {\em Journal of Computational Physics}, 476:111911, 2023.
\newblock \href {https://doi.org/10.1016/j.jcp.2023.111911}
  {\path{doi:10.1016/j.jcp.2023.111911}}.

\bibitem{BerDMetal19}
T.~Bertalan, F.~Dietrich, I.~Mezi{\'c}, and {I. G.} Kevrekidis.
\newblock On learning {Hamiltonian} systems from data.
\newblock {\em Chaos}, 29(12), December 2019.
\newblock \href {https://doi.org/10.1063/1.5128231}
  {\path{doi:10.1063/1.5128231}}.

\bibitem{BurTM21}
J.~W. Burby, Q.~Tang, and R.~Maulik.
\newblock Fast neural {Poincaré} maps for toroidal magnetic fields.
\newblock {\em Plasma Physics and Controlled Fusion}, 63(2):024001, December
  2020.
\newblock \href {https://doi.org/10.1088/1361-6587/abcbaa}
  {\path{doi:10.1088/1361-6587/abcbaa}}.

\bibitem{CheT21}
R.~Chen and M.~Tao.
\newblock Data-driven prediction of general hamiltonian dynamics via learning
  exactly-symplectic maps.
\newblock In {\em Proceedings of the 38th International Conference on Machine
  Learning}, volume 139 of {\em Proceedings of Machine Learning Research},
  pages 1717--1727, July 2021.
\newblock URL: \url{https://proceedings.mlr.press/v139/chen21r.html}.

\bibitem{CheRBetal18}
R.~T.~Q. Chen, Y.~Rubanova, J.~Bettencourt, and D.~Duvenaud.
\newblock Neural ordinary differential equations.
\newblock In {\em NIPS'18: Proceedings of the 32nd International Conference on
  Neural Information Processing Systems}, page 6572–6583, 2018.
\newblock URL: \url{https://dl.acm.org/doi/10.5555/3327757.3327764}.

\bibitem{CheZAetal20}
Z.~Chen, J.~Zhang, M.~Arjovsky, and L.~Bottou.
\newblock Symplectic recurrent neural networks.
\newblock In {\em International Conference on Learning Representations}, 2020.
\newblock URL: \url{https://openreview.net/forum?id=BkgYPREtPr}.

\bibitem{ConS96}
G.~M. Constantine and T.~H. Savits.
\newblock A multivariate {FAA Di Bruno} formula with applications.
\newblock {\em Transactions of the American Mathematical Society},
  348(2):503--520, February 1996.
\newblock \href {https://doi.org/10.1090/S0002-9947-96-01501-2}
  {\path{doi:10.1090/S0002-9947-96-01501-2}}.

\bibitem{CranGHetal20}
M.~Cranmer, S.~Greydanus, S.~Hoyer, P.~Battaglia, D.~Spergel, and S.~Ho.
\newblock Lagrangian neural networks.
\newblock e-print 2003.04630, arXiv, 2020.
\newblock \href {https://doi.org/10.48550/arXiv.2003.04630}
  {\path{doi:10.48550/arXiv.2003.04630}}.

\bibitem{DavM23}
M.~David and F.~{Méhats}.
\newblock Symplectic learning for hamiltonian neural networks.
\newblock {\em Journal of Computational Physics}, 494:112495, 2023.
\newblock \href {https://doi.org/10.1016/j.jcp.2023.112495}
  {\path{doi:10.1016/j.jcp.2023.112495}}.

\bibitem{DuiK12}
J.~J. Duistermaat and J.~A.~C. Kolk.
\newblock {\em Lie Groups}.
\newblock Universitext. Springer Berlin, Heidelberg, 1 edition, dec 2012.
\newblock \href {https://doi.org/https://doi.org/10.1007/978-3-642-56936-4}
  {\path{doi:https://doi.org/10.1007/978-3-642-56936-4}}.

\bibitem{GreDY19}
S.~Greydanus, M.~Dzamba, and J.~Yosinski.
\newblock Hamiltonian neural networks.
\newblock In {\em Advances in Neural Information Processing Systems},
  volume~32, 2019.
\newblock URL:
  \url{https://proceedings.neurips.cc/paper_files/paper/2019/file/26cd8ecadce0d4efd6cc8a8725cbd1f8-Paper.pdf}.

\bibitem{HaiLW03}
E.~Hairer, G.~Wanner, and C.~Lubich.
\newblock Geometric numerical integration illustrated by the störmer–verlet
  method.
\newblock {\em Acta Numerica}, 12:399--450, 2003.
\newblock \href {https://doi.org/10.1017/S0962492902000144}
  {\path{doi:10.1017/S0962492902000144}}.

\bibitem{HaiLW06}
E.~Hairer, G.~Wanner, and C.~Lubich.
\newblock {\em Geometric Numerical Integration}.
\newblock Springer, Berlin Heidelberg, 2 edition, 2006.
\newblock \href {https://doi.org/10.1007/3-540-30666-8}
  {\path{doi:10.1007/3-540-30666-8}}.

\bibitem{HanGH21}
C.~D. Han, B.~Glaz, M.~Haile, and Y.~C. Lai.
\newblock Adaptable hamiltonian neural networks.
\newblock {\em Phys. Rev. Res.}, 3:023156, May 2021.
\newblock \href {https://doi.org/10.1103/PhysRevResearch.3.023156}
  {\path{doi:10.1103/PhysRevResearch.3.023156}}.

\bibitem{HorSKetal25}
P.~Horn, V.~{Saz Ulibarrena}, B.~Koren, and S.~{Portegies Zwart}.
\newblock A generalized framework of neural networks for hamiltonian systems.
\newblock {\em Journal of Computational Physics}, 521:113536, 2025.
\newblock \href {https://doi.org/https://doi.org/10.1016/j.jcp.2024.113536}
  {\path{doi:https://doi.org/10.1016/j.jcp.2024.113536}}.

\bibitem{HorSW90}
K.~Hornik, M.~Stinchcombe, and H.~White.
\newblock Universal approximation of an unknown mapping and its derivatives
  using multilayer feedforward networks.
\newblock {\em Neural Networks}, 3(5):551--560, January 1990.
\newblock \href {https://doi.org/https://doi.org/10.1016/0893-6080(90)90005-6}
  {\path{doi:https://doi.org/10.1016/0893-6080(90)90005-6}}.

\bibitem{JanB25software}
K.~Janik and P.~Benner.
\newblock {TSympNets}.
\newblock Zenodo, 2025.
\newblock \href {https://doi.org/10.5281/zenodo.17157859}
  {\path{doi:10.5281/zenodo.17157859}}.

\bibitem{JinZKetal22}
P.~Jin, Z.~Zhang, Y.~Kevrekidis, and G.~E. Karniadakis.
\newblock Learning poisson systems and trajectories of autonomous systems via
  poisson neural networks.
\newblock {\em IEEE Transactions on Neural Networks and Learning Systems},
  PP:1--13, February 2022.
\newblock \href {https://doi.org/10.1109/TNNLS.2022.3148734}
  {\path{doi:10.1109/TNNLS.2022.3148734}}.

\bibitem{JinZZetal20}
P.~Jin, Z.~Zhang, A.~Zhu, Y.~Tang, and G.~E. Karniadakis.
\newblock {SympNets:} intrinsic structure-preserving symplectic networks for
  identifying {H}amiltonian systems.
\newblock {\em Neural Networks}, 132:166--179, December 2020.
\newblock \href {https://doi.org/https://doi.org/10.1016/j.neunet.2020.08.017}
  {\path{doi:https://doi.org/10.1016/j.neunet.2020.08.017}}.

\bibitem{KinB17}
D.~P. Kingma and J.~Ba.
\newblock Adam: A method for stochastic optimization, 2017.
\newblock \href {https://doi.org/10.48550/arXiv.1412.6980}
  {\path{doi:10.48550/arXiv.1412.6980}}.

\bibitem{Lan49}
C.~Lanczos.
\newblock {\em The Variational Principles of Mechanics}.
\newblock University of Toronto Press, Toronto, 1949.
\newblock \href {https://doi.org/10.3138/9781487583057}
  {\path{doi:10.3138/9781487583057}}.

\bibitem{LeiR05}
B.~Leimkuhler and S.~Reich.
\newblock {\em Simulating Hamiltonian Dynamics}.
\newblock Cambridge Monographs on Applied and Computational Mathematics.
  Cambridge University Press, 2005.
\newblock \href {https://doi.org/10.1017/CBO9780511614118}
  {\path{doi:10.1017/CBO9780511614118}}.

\bibitem{Tao16}
Molei Tao.
\newblock Explicit symplectic approximation of nonseparable hamiltonians:
  Algorithm and long time performance.
\newblock {\em Phys. Rev. E}, 94(4):043303, October 2016.
\newblock \href {https://doi.org/10.1103/PhysRevE.94.043303}
  {\path{doi:10.1103/PhysRevE.94.043303}}.

\bibitem{Tur03}
D.~Turaev.
\newblock Polynomial approximations of symplectic dynamics and richness of
  chaos in non-hyperbolic area-preserving maps.
\newblock {\em Nonlinearity}, 16(1):123, November 2002.
\newblock \href {https://doi.org/10.1088/0951-7715/16/1/308}
  {\path{doi:10.1088/0951-7715/16/1/308}}.

\bibitem{XioTHetal21}
Shiying Xiong, Yunjin Tong, Xingzhe He, Shuqi Yang, Cheng Yang, and Bo~Zhu.
\newblock Nonseparable symplectic neural networks.
\newblock In {\em International Conference on Learning Representations}, 2021.
\newblock URL: \url{https://openreview.net/forum?id=B5VvQrI49Pa}.

\bibitem{YilGBetal23}
S.~Y{\i}ld{\i}z, P.~Goyal, T.~Bendokat, and P.~Benner.
\newblock Data-driven identification of quadratic representations for nonlinear
  {Hamiltonian} systems using weakly symplectic liftings.
\newblock {\em Journal of Machine Learning for Modeling and Computing},
  5(2):45--71, 2024.
\newblock \href {https://doi.org/10.1615/JMachLearnModelComput.2024052810}
  {\path{doi:10.1615/JMachLearnModelComput.2024052810}}.

\end{thebibliography}
  
\end{document}